		\renewcommand{\caption}[2][\relax]{
			{\raggedright\textbf{\ALG@name~\thealgorithm} ##2\par}%
			\ifx\relax##1\relax 
			\addcontentsline{loa}{algorithm}{\protect\numberline{\thealgorithm}##2}%
			\else 
			\addcontentsline{loa}{algorithm}{\protect\numberline{\thealgorithm}##1}%
			\fi
			\kern2pt\hrule\kern2pt
		}
\newcommand*\dotp{\mathpalette\dotp@{.5}}
\newcommand*\dotp@[2]{\mathbin{\vcenter{\hbox{\scalebox{#2}{$\m@th#1\bullet$}}}}}
\DeclareMathOperator*{\argmin}{arg\,min}
\newtheorem{lemma}{Lemma}
\newtheorem{thm}{Theorem}
\newtheorem{assum}{Assumption}
\newcommand{\prox}[1]{\mathrm{prox}_{#1}}
\newcommand{\zero}{\mathbf{0}}
\title{\textbf{Momentum with Variance Reduction for Nonconvex Composition Optimization}}
\author[1]{\textit{Ziyi Chen}}
\author[1]{\textit{Yi Zhou}}
\affil[1]{Department of Electrical and Computer Engineering, University of Utah, USA}
\affil[ ]{\small {\{u1276972,yi.zhou\}@utah.edu}}
\date{}
\begin{document}
\twocolumn[ 
\maketitle
\thispagestyle{empty}



\icmlsetsymbol{equal}{*}

\begin{icmlauthorlist}
\end{icmlauthorlist}



\vskip 0.3in
]




\begin{abstract}
Composition optimization is widely-applied in nonconvex machine learning. Various advanced stochastic algorithms that adopt momentum and variance reduction techniques have been developed for composition optimization. However, these algorithms do not fully exploit both techniques to accelerate the convergence and are lack of convergence guarantee in nonconvex optimization. This paper complements the existing literature by developing various momentum schemes with SPIDER-based variance reduction for nonconvex composition optimization. In particular, our momentum design requires less number of proximal mapping evaluations per-iteration than that required by the existing Katyusha momentum. Furthermore, our algorithm achieves near-optimal sample complexity results in both nonconvex finite-sum and online composition optimization and achieves a linear convergence rate under the gradient dominant condition. Numerical experiments demonstrate that our algorithm converges significantly faster than existing algorithms in nonconvex composition optimization.
\end{abstract}

\section{Introduction}
A variety of machine learning problems naturally have composition structure and can be formulated as
\begin{align}
\min_{ x \in \mathbb{R}^{d}} \Phi(x)=F(x)+r(x), ~\text{where } F(x)=f\big(g(x)\big). \tag{P} \nonumber
\end{align}
In the above problem, $f: \mathbb{R}^p\to \mathbb{R}$ is a differentiable loss function, $g: \mathbb{R}^d\to \mathbb{R}^p$ is a differentiable mapping and $r: \mathbb{R}^d\to \mathbb{R}$ corresponds to a possibly non-smooth regularizer. Such a composition optimization problem covers many important applications including value function approximation in reinforcement learning, risk-averse portfolio optimization \cite{zhang2019stochastic}, stochastic neighborhood embedding \cite{liu2017variance} and sparse additive model \cite{wang2017stochastic}, etc. We further elaborate the composition structure of these problems in \Cref{append: ml example}. 

A simple algorithm to solve the composition problem (P) is the gradient descent algorithm, which, however, induces much computation overhead in the gradient evaluation when the datasize is large. Specifically, consider the following finite-sum formulation of the composition problem (P):
\begin{align}
	&\text{\big($\Sigma^2$ \!\big)}: ~f(y)=\frac{1}{N}\sum_{k=1}^{N} f_k(y), ~g(x)=\frac{1}{n}\sum_{i=1}^{n} g_{i}(x), \nonumber
\end{align}
where we call the above formulation $(\Sigma^2)$ as it involves the finite-sum structure in both $f$ and $g$. 
Due to the composition structure, the gradient of any sample loss $f_i(g(x))$ involves the entire Jacobian matrix $g'(x)$, which is computational burdensome when $n$ is large. Such a challenge for computing composition gradients has motivated the design of various stochastic algorithms to reduce the sample complexity in composition optimization. In particular, \cite{wang2017stochastic} proposed a stochastic composition gradient descent (SCGD) algorithm for solving unregulrarized nonconvex composition problems $(\Sigma^2)$. As the SCGD algorithm suffers from a high stochastic gradient variance and slow convergence rate, more recent works have developed various variance reduction techniques for accelerating stochastic composition optimization. Specifically, the well-known SVRG scheme in \cite{johnson2013accelerating} has been exploited by \cite{lian2016finite} and \cite{huo2018accelerated} to develop variance-reduced algorithms for solving the problem $(\Sigma^2)$ under strong convexity and non-convexity. Other variance reduction schemes such as SAGA \cite{defazio2014saga} and SPIDER \cite{fang2018spider} (a.k.a. SARAH \cite{nguyen2017sarah}) have also been exploited by \cite{zhang2019composite,zhang2019stochastic} to reduce the variance in stochastic composition optimization. In particular, the SPIDER-based stochastic composition optimization algorithm proposed in \cite{zhang2019stochastic} achieves a near-optimal sample complexity in nonconvex optimization. 

Another important and widely-applied technique for accelerating stochastic optimization is momentum, which has also been studied in stochastic composition optimization. For example, \cite{wang2017stochastic} proposed a momentum-accelerated SCGD algorithm that achieves an improved sample complexity in nonconvex composition optimization, and \cite{wang2016accelerating} further generalized it to solve regularized $(\Sigma^2)$ problems that are nonconvex.
Recently, \cite{xu2019katyusha} applied the Katyusha momentum developed in \cite{allen2017katyusha} to accelerate the SVRG-based composition optimization algorithm and achieved improved sample complexities in convex optimization. 
While these works exploit momentum to accelerate the practical convergence of composition optimization,  
their momentum schemes do not provide provable convergence guarantees for {\em nonconvex} composition optimization under {\em variance reduction}, which is widely-applied in large-scale stochastic optimization. Therefore, {the goal of this paper} is to develop a momentum with variance reduction scheme for stochastic nonconvex composition optimization. In particular, the algorithm design is desired to resolve the following issues.
\begin{itemize}[topsep=0pt, noitemsep, leftmargin=*]
	\item The existing Katyusha momentum with SVRG-based variance reduction proposed  in \cite{xu2019katyusha} only applies to convex composition optimization problems and requires two proximal mapping evaluations per-iteration, which induces much computation overhead when the proximal mapping is complex. {\em Can we design a momentum scheme with variance reduction for nonconvex composition optimization that requires less proximal mapping evaluations?}
	\item The existing variance-reduced composition optimization algorithms (without momentum) can achieve a near-optimal sample complexity in nonconvex scenarios. {\em Can we develop a momentum scheme with variance reduction for composition optimization that achieves a near-optimal sample complexity in nonconvex optimization and provides significant acceleration in practice?}  
	\item Momentum has not been developed with variance reduction for {\em online} nonconvex composition optimization problems. {\em Can we develop a momentum scheme with variance reduction that is applicable to online nonconvex composition optimization problems with provable convergence guarantee?}
\end{itemize}

In this paper, we provide positive answers to the questions mentioned above. Our developed momentum \& variance reduction scheme for composition optimization is applicable to both finite-sum and online cases and achieves the state-of-the-art sample complexity results in nonconvex scenarios. We summarize our contributions as follows and compare the sample complexities of all related algorithms in \Cref{table_nonconvex_algs}.

\subsection{Our Contributions}
We first study a special case of the composition optimization problem $(\Sigma^2)$ where $N=1$ and only the mapping $g$ has finite-sum structure. To solve such a nonconvex composition problem (referred to as $(\Sigma^1)$), we propose a stochastic algorithm MVRC-1 that implements both momentum and SpiderBoost-based variance reduction. Our momentum scheme is simpler and computationally lighter than the existing Katyusha momentum studied in \cite{xu2019katyusha}. In specific, our momentum scheme requires only one proximal mapping evaluation per-iteration, whereas the Katyusha momentum requires two proximal mapping evaluations per-iteration. Moreover, the momentum scheme of MVRC can adopt very flexible momentum parameter scheduling as we elaborate below.

Under a diminishing momentum, we show that MVRC-1 achieves a near-optimal sample complexity $\mathcal{O}(n+\sqrt{n}\epsilon^{-2})$ in solving the nonconvex composition problem $(\Sigma^1)$. We further propose a periodic restart scheme to facilitate the practical convergence of MVRC-1 and establish the same near-optimal sample complexity result. Then, under a constant momentum, we show that MVRC-1 also achieves the same near-optimal sample complexity result. Moreover, we establish  a linear convergence rate for MVRC-1 under the gradient dominance condition. With a slight modification of algorithm hyper-parameters, we show  that MVRC-1 also applies to the online version of the problem $(\Sigma^1)$ (referred to as $(\mathbb{E}^1)$) and achieves the state-of-the-art sample complexity in both nonconvex and gradient dominant scenarios under either diminishing or constant momentum. 

We further propose  the algorithm MVRC-2 that generalizes our momentum with variance reduction scheme to solve the more challenging composition problem $(\Sigma^2)$, which has finite-sum structure in both $f$ and $g$. With either diminishing or constant momentum and a normalized learning rate, we show that MVRC-2 achieves a near-optimal sample complexity $\mathcal{O}(N+n+\sqrt{\max\{N,n\}}\epsilon^{-2})$ in nonconvex composition optimization. Furthermore, in the corresponding online case (referred to as $(\mathbb{E}^2)$), we show that MVRC-2 also achieves the state-of-art sample complexity in nonconvex scenario. Please refer to \Cref{table_nonconvex_algs} for a comprehensive comparison between the sample complexities of our algorithm and those of existing composition optimization algorithms.


\begin{table*}
	\captionsetup{width=.75\textwidth}
	\caption{Comparison of sample complexities of nonconvex composition optimization algorithms. Note that $(\Sigma^1)$ corresponds to the problem $(\Sigma^2)$ with $N=1$. $(\mathbb{E}^1)$ and $(\mathbb{E}^2)$ correspond to the online versions of $(\Sigma^1)$ and $(\Sigma^2)$, respectively.}
	\label{table_nonconvex_algs}
	\vskip 0.1in
	\begin{center}
		\begin{scriptsize}
				\begin{tabular}{lcccr}
					\toprule
					Problem & Algorithm & Assumption & Momentum & Sample complexity \\ 
					\midrule
					 & CIVR \cite{zhang2019stochastic} & $r$ convex & $\times$ & $\mathcal{O}(\epsilon^{-3})$ \\
					\cline{2-5}
					$(\mathbb{E}^1)$& {\color{blue} \textbf{Our work}} & $r$ convex & { \checkmark} & ${\mathcal{O}(\epsilon^{-3})}$ \\
					\cline{2-5}
					& CIVR \cite{zhang2019stochastic} & $F$ $v$-gradient dominant &   $\times$ &$\mathcal{O}(v\epsilon^{-1}\log\epsilon^{-1})$ \\
					\cline{2-5}
					& {\color{blue} \textbf{Our work}} & $F$ $v$-gradient dominant &   \checkmark &$\mathcal{O}({v\epsilon^{-1}\log\epsilon^{-1}})$ \\
					\midrule
					&	SAGA \cite{zhang2019composite} &   $r$ convex  &  $\times$  &$\mathcal{O}(n+n^{2/3}\epsilon^{-2})$    \\
					\cline{2-5}
					& CIVR \cite{zhang2019stochastic} & $r$ convex & $\times$  & $\mathcal{O}(n+\sqrt{n}\epsilon^{-2})$  \\
					\cline{2-5}
					$(\Sigma^1)$ & {\color{blue} \textbf{Our work}} & $r$ convex & \checkmark & ${\mathcal{O}(n+\sqrt{n}\epsilon^{-2})}$ \\
					\cline{2-5}
					& SAGA \cite{zhang2019composite} & $F$ $v$-gradient dominant & $\times$ &$\mathcal{O}((n+\kappa n^{2/3})\log\epsilon^{-1})$ \\
					\cline{2-5}
					& CIVR \cite{zhang2019stochastic} & $F$ $v$-gradient dominant & $\times$  & $\mathcal{O}((n+v\sqrt{n})\log\epsilon^{-1})$ \\
					\cline{2-5}
					& {\color{blue} \textbf{Our work}} & $F$ $v$-gradient dominant &   \checkmark &$\mathcal{O}((n+v\sqrt{n})\log\epsilon^{-1})$ \\
					\midrule
					& Basic SCGD \cite{wang2017stochastic} &	$r\equiv0$ & $\times$ & $\mathcal{O}(\epsilon^{-8})$   \\
					\cline{2-5}
					& Accelerated SCGD \cite{wang2017stochastic} &	$r\equiv0$ & \checkmark & $\mathcal{O}(\epsilon^{-7})$  \\
					\cline{2-5}
					& ASC-PG \cite{wang2016accelerating} & $r\equiv0$ &\checkmark & $\mathcal{O}(\epsilon^{-4.5})$  \\
					\cline{2-5}
					$(\mathbb{E}^2)$ & SARAH-Compositional\cite{yuan2019efficient} & $r\equiv0$ & $\times$ & $\mathcal{O}(\epsilon^{-3})$  \\
					\cline{2-5}
					& Nested-Spider \cite{zhang2019multi} & $r$ convex & $\times$ & $\mathcal{O}(\epsilon^{-3})$  \\
					\cline{2-5}
					& Spider+ADMM\cite{wang2019nonconvex} & $r$ convex & $\times$  & $\mathcal{O}(\epsilon^{-3})$  \\
					\cline{2-5}
					& {\color{blue} \textbf{Our work}} & $r$ convex & \checkmark  & ${\mathcal{O}(\epsilon^{-3})}$ \\
					\midrule
					& VRSC-PG \cite{huo2018accelerated} & $r$ convex & $\times$ &$\mathcal{O}(N+n+(N+n)^{2/3}\epsilon^{-2})$  \\
					\cline{2-5}
					& SAGA \cite{zhang2019composite} & $r$ convex & $\times$  & $\mathcal{O}(N+n+(N+n)^{2/3}\epsilon^{-2})$   \\
					\cline{2-5}
					$(\Sigma^2)$ & SARAH-Compositional\cite{yuan2019efficient} & $r\equiv0$ & $\times$ & {${\mathcal{O}(N+n+\sqrt{N+n}\epsilon^{-2})}$} \\
					\cline{2-5}
					& Nested-Spider \cite{zhang2019multi} & $r$ convex & $\times$ & $\mathcal{O}(N+n+\sqrt{\max(N,n)}\epsilon^{-2})$   \\
					\cline{2-5}
					& Spider+ADMM\cite{wang2019nonconvex} & $r$ convex & $\times$  & {${\mathcal{O}(N+n+\sqrt{N+n}\epsilon^{-2})}$} \\
					\cline{2-5}
					& {\color{blue} \textbf{Our work}} & $r$ convex & \checkmark  & ${\mathcal{O}(N+n+\sqrt{\max(N,n)}\epsilon^{-2})}$ \\
					\bottomrule
				\end{tabular}
		\end{scriptsize}
	\end{center}
	\vskip -0.1in
\end{table*}

\subsection{Related works} 
\textbf{Momentum \& variance reduction:} 
Various variance reduction techniques have been originally developed for accelerating stochastic optimization without the composition structure, e.g., SAG \cite{roux2012stochastic}, SAGA \cite{defazio2014saga,reddi2016proximal}, SVRG \cite{johnson2013accelerating,allen2016variance,reddi2016stochastic,reddi2016proximal,li2018simple}, SCSG \cite{lei2017non}, SNVRG \cite{zhou2018stochastic}, SARAH \cite{nguyen2017sarah,nguyen2017stochastic,nguyen2019finite,pham2019proxsarah} and SPIDER \cite{fang2018spider,wang2019spiderboost}. In particular, the SPIDER scheme achieves a near-optimal sample complexity in nonconvex optimization. Momentum-accelerated versions of these algorithms have also been developed, e.g., momentum-SVRG \cite{li2017convergence}, Katyusha \cite{allen2017katyusha}, Natasha \cite{allen2017natasha,allen2018natasha} and momentum-SpiderBoost \cite{wang2019spiderboost}.  

\noindent\textbf{Stochastic composition optimization:} \cite{wang2017stochastic} developed the SCGD algorithm for stochastic composition optimization, and \cite{wang2016accelerating} further developed its momentum-accelerated version. Variance reduction techniques have been exploited to reduce the sample complexity of composition optimization, including the SVRG-based algorithms \cite{lian2016finite,huo2018accelerated}, SAGA-based algorithm \cite{zhang2019composite}, SPIDER-based algorithms \cite{zhang2019stochastic,zhang2019multi,wang2019nonconvex} and SARAH-based algorithm \cite{yuan2019efficient}. \cite{xu2019katyusha} further applied the Katyusha momentum to accelerate the SVRG-based composition optimization algorithm in convex optimization.


\section{Momentum with SpiderBoost for Solving Nonconvex Problems $(\Sigma^1)$ and $(\mathbb{E}^1)$}
In this section, we develop momentum schemes with the Spider-Boost \cite{wang2019spiderboost} variance reduction technique for solving the nonconvex composition problems $(\Sigma^1)$ and $(\mathbb{E}^1)$, which are rewritten below for reference.
\begin{align}
	(\Sigma^1):~ & \min_{ x \in \mathbb{R}^{d}} \Phi(x)=f\Big(\frac{1}{n}\sum_{i=1}^{n} g_{i}(x)\Big)+r(x), \nonumber\\
	(\mathbb{E}^1):~ & \min_{ x \in \mathbb{R}^{d}} \Phi(x)=f\big(\mathbb{E}_{\xi} g_{\xi}(x)\big)+r(x). \nonumber
\end{align}

\subsection{Algorithm Design}
We present our algorithm design in \Cref{alg: 1} and refer to it as MVRC-1, which is short for \textbf{M}omentum with \textbf{V}ariance \textbf{R}eduction for \textbf{C}omposition optimization, and ``1'' stands for the target problems $(\Sigma^1), (\mathbb{E}^1)$.   
In \Cref{alg: 1}, we denote the proximal mapping of function $r$ as: for any $\lambda>0, x\in \mathbb{R}^d$,
\begin{align}
	\prox{\lambda r}(x) :=\argmin_{y\in \mathbb{R}^d} \Big\{r( y)+\frac{1}{2 \lambda} \| y- x\|^{2}\Big\}.
\end{align} 
To elaborate, \Cref{alg: 1} uses the SpiderBoost scheme proposed in \cite{wang2019spiderboost} to construct variance-reduced estimates $\widetilde{g}_t$ and $\widetilde{g}_t'$ of the mapping $g$ and its Jacobian matrix $g'$, respectively, and it further adopts a momentum scheme to facilitate the convergence. 

Our momentum scheme is simpler than the Katyusha momentum developed for composition optimization in \cite{xu2019katyusha}. In particular, our momentum scheme requires only one proximal mapping evaluation per-iteration to update $x_{t+1}$, whereas the Katyusha momentum requires two proximal mapping evaluations per-iteration to update both $x_{t+1}$ and $y_{t+1}$, respectively. Therefore, our momentum scheme saves much computation time when the proximal mapping of the regularizer induces much computation, e.g., nuclear norm regularization, group norm regularization, etc. 


\begin{algorithm}[h]
	\caption{(MVRC-1): Momentum with variance reduction for solving composition problems $(\Sigma^1), (\mathbb{E}^1)$}
	\label{alg: 1}
	{\bf Input:} $x_{0}\in \mathbb{R}^d$; $T,\tau \in \mathbb{N}$; $\lambda_{t}$, $\beta_{t}>0$ and $\alpha_{t}\in [0,1]$;
	
	{\bf Initialize:} $y_{0}= x_{0}$.
	
	\For{$t=0, 1, \ldots, T-1$}
	{	
		$z_{t}=\left(1-\alpha_{t+1}\right)  y_{t}+\alpha_{t+1}  x_{t}$,
		
		\eIf{$t\mod\tau=0$}
		{
			For $(\Sigma^1):$ Sample  set $\mathcal{A}_{t} = \{1,...,n\}$ \\
			For $(\mathbb{E}^1):$ Sample  set $\mathcal{A}_{t}$ from the distribution of $\xi$\\
			$\widetilde{g}_{t}=\frac{1}{|\mathcal{A}_t|}\sum_{\xi\in\mathcal{A}_{t}} g_{\xi}(z_t), \widetilde{g}_{t}'=\frac{1}{|\mathcal{A}_t|}\sum_{\xi\in\mathcal{A}_{t}} g_{\xi}'(z_t)$
		
		}
		{
			For $(\Sigma^1):$ Sample subset $\mathcal{A}_{t}$ from $\{1,...,n\}$ \\
			For $(\mathbb{E}^1):$ Sample set $\mathcal{A}_{t}$ from the distribution of $\xi$\\
			$\widetilde{g}_{t}= \widetilde{g}_{t-1}+\frac{1}{|\mathcal{A}_t|}\sum_{\xi\in\mathcal{A}_t}\big( g_{\xi}( z_t)- g_{\xi}( z_{t-1})\big),$\\
			$\widetilde{g}_{t}'= \widetilde{g}_{t-1}'+\frac{1}{|\mathcal{A}_t|}\sum_{\xi\in\mathcal{A}_t}\big( g_{\xi}'( z_t)- g_{\xi}'( z_{t-1})\big),$
		}
		Compute ${\widetilde{\nabla}F}( z_{t})= \widetilde{g}_{t}'^{\top} \nabla f(\widetilde{g}_t)$,\\
		$x_{t+1}=\operatorname{prox}_{\lambda_{t}r}(x_{t}-\lambda_{t}{\widetilde{\nabla}F}( z_{t}))$,\\
		$y_{t+1}= z_{t}+\frac{\beta_{t}}{\lambda_{t}}( x_{t+1}- x_{t})$
	}	
	{\textbf{Output:} $z_{\zeta}$, where $\zeta\stackrel{\text{Uniform}}{\sim}\{0,1,\ldots,T-1\}$.}
\end{algorithm}

\subsection{Convergence Analysis in Nonconvex Optimization}\label{subsec: single_nonconvex}
In this subsection, we study the convergence guarantees for MVRC-1 in nonconvex optimization under various choices of the momentum parameter $\alpha_t$

We make the following standard assumptions on the objective function.
\begin{assum}\label{assumption1}
	The objective functions in both the finite-sum problem $(\Sigma^1)$ and the online problem $(\mathbb{E}^1)$ satisfy
	\begin{enumerate}[topsep=0pt, noitemsep, leftmargin=*]
		\item Function $f$ is $l_f$-Lipschitz continuous, and its  gradient $\nabla f$ is $L_f$-Lipschitz continuous;
		\item Every mapping $g_\xi$ is $l_g$-Lipschitz continuous, and its Jacobian matrix $g_\xi'$ is $L_g$-Lipschitz continuous;
		\item Function $r$ is convex and $\Phi^*:= \inf_{x} \Phi(x) > -\infty$.
	\end{enumerate}
\end{assum}
In particular, the above assumption implies that the gradient of $F= f\circ g$ is Lipschitz continuous with parameter $	L_{F}=\ell_{g}^{2} L_{f}+\ell_{f} L_{g}.$
We also make the following assumption on the stochastic variance for the online case. 
\begin{assum}(For the online problem $(\mathbb{E}^1)$)\label{assumption2}
	The online composition optimization problem $(\mathbb{E}^1)$ satisfies that: there exists $\sigma_g, \sigma_{g'}>0$ such that for all $ x\in \mathbb{R}^d$, 
	\begin{equation}
		\mathbb{E}_{\xi} \big[\| g_{\xi}( x)- g( x)\|^2\big]\le\sigma_g^2,~\mathbb{E}_{\xi}\big[\| g_{\xi}'( x)- g'( x)\|^2\big]\le\sigma_{g'}^2. \nonumber
	\end{equation}
\end{assum}

\textbf{Evaluation metric:} To evaluate the convergence in nonconvex optimization, we define the generalized gradient mapping at $x$ with parameter $\lambda>0$  as 
\begin{align}\label{eq_prox_grad}
{\mathcal{G}}_{\lambda}(x):=\frac{1}{\lambda}\Big( x-\operatorname{prox}_{\lambda r}\big( x-\lambda \nabla{F}( x)\big)\Big).
\end{align}
In particular, when $r$ is convex, ${\mathcal{G}}_{\lambda}(x)=\zero$ if and only if  $x$ is a stationary point of the objective function $\Phi=F+r$. Therefore, we say a random point $x$ achieves $\epsilon$-accuracy if it satisfies $\mathbb{E}\|{\mathcal{G}}_{\lambda}(x)\| \le \epsilon$. 

We first study MVRC-1 with the choice of a diminishing momentum coefficient $\alpha_{t} = \frac{2}{t+1}$. We obtain the following complexity results. Throughout the paper, we define $G_0 = 2(l_{g}^{4} L_{f}^{2}+l_{f}^{2} L_{g}^{2})$ and use $\mathcal{O}(\cdot)$ to hide universal constants.

\begin{thm}[Diminishing momentum]\label{thm_alpha_decay_single}
	Let Assumptions \ref{assumption1} and \ref{assumption2} hold.  
	Apply MVRC-1 to solve the problems $(\Sigma^1)$ and $(\mathbb{E}^1)$ with momentum parameters $\alpha_{t}=\frac{2}{t+1}, \beta_t\equiv\beta\le (2\sqrt{16L_F^2+6G_0}+8L_F)^{-1}, \lambda_{t}\in [\beta, (1+\alpha_t)\beta]$. 
	\begin{itemize}[topsep=0pt, noitemsep, leftmargin=*]
		\item For the problem $(\Sigma^1)$, choose parameters $\tau=\lfloor{\sqrt n}\rfloor$, $|\mathcal{A}_t|= n$ whenever $t~\text{\rm mod}~\tau=0$, and $|\mathcal{A}_t|=\lfloor{\sqrt n}\rfloor$ otherwise. Then, the output satisfies
		\begin{equation}\label{eq_conclude_alpha_decay_SCFS_nonconvex}
		\mathbb{E}\|{\mathcal{G}}_{\lambda_\zeta}( z_\zeta)\|^2 \le \mathcal{O}\Big(\frac{\Phi\left(x_{0}\right)-\Phi^*}{T\beta} \Big).
		\end{equation}
		Moreover, to achieve an $\epsilon$-accurate solution, the required sample complexity (number of evaluations of $g, g'$) is $\mathcal{O}(n+\sqrt{n}\epsilon^{-2})$.
		
		\item For the problem $(\mathbb{E}^1)$, choose $\tau=\lfloor \sqrt{2\sigma_{0}^2\epsilon^{-2}} \rfloor$, $|\mathcal{A}_t|= \lceil 2\sigma_{0}^2\epsilon^{-2}\rceil$ whenever $t~\text{\rm mod}~\tau=0$ and $|\mathcal{A}_t|=\lfloor\sqrt{2\sigma_{0}^2\epsilon^{-2}} \rfloor$ otherwise. Then, the output satisfies
		\begin{equation}\label{eq_conclude_alpha_decay_SCE}
		\mathbb{E} \|{\mathcal{G}}_{\lambda_\zeta}( z_\zeta)\|^2\le \mathcal{O}\Big(\epsilon^2+\frac{\Phi\left(x_{0}\right)-\Phi^*}{T\beta}\Big).
		\end{equation}
		Moreover, to achieve an $\epsilon$-accurate solution, the required sample complexity is $\mathcal{O}(\epsilon^{-3})$.
	\end{itemize}
\end{thm}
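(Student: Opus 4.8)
The plan is to derive a one-step inequality that lower-bounds the per-iteration progress of a suitable potential by the squared generalized gradient mapping $\|\mathcal{G}_{\lambda_t}(z_t)\|^2$, then telescope over $t=0,\dots,T-1$ and divide by $T$. Because the output index $\zeta$ is uniform and the target $\frac{1}{T}\sum_t \mathbb{E}\|\mathcal{G}_{\lambda_t}(z_t)\|^2 \le \mathcal{O}\big(\frac{\Phi(x_0)-\Phi^*}{T\beta}\big)$ is an unweighted average, I expect a potential $H_t$ satisfying $H_t - H_{t+1} \gtrsim \beta\,\mathbb{E}\|\mathcal{G}_{\lambda_t}(z_t)\|^2$ (up to variance residuals), with $H_0\lesssim\Phi(x_0)$ and $H_T\ge\Phi^*$. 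Three ingredients feed this inequality: a bound on the gradient-estimation error, the SpiderBoost variance recursion, and a momentum-aware descent lemma.

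First I would control the estimation error $\widetilde{\nabla}F(z_t)-\nabla F(z_t)=\widetilde{g}_t^{\prime\top}\nabla f(\widetilde{g}_t)-g'(z_t)^{\top}\nabla f(g(z_t))$. Splitting it into the contribution of $\widetilde{g}_t-g(z_t)$ and that of $\widetilde{g}_t'-g'(z_t)$ and using $\|g_\xi'\|\le l_g$, $\|\nabla f\|\le l_f$, and the Lipschitz constants $L_f,L_g$ gives $\|\widetilde{\nabla}F(z_t)-\nabla F(z_t)\|^2\le 2l_g^2L_f^2\|\widetilde{g}_t-g(z_t)\|^2+2l_f^2\|\widetilde{g}_t'-g'(z_t)\|^2$. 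Next, the standard SARAH/SpiderBoost recursion yields, inside an epoch beginning at $t_0$, $\mathbb{E}\|\widetilde{g}_t-g(z_t)\|^2\le\mathbb{E}\|\widetilde{g}_{t_0}-g(z_{t_0})\|^2+\frac{l_g^2}{|\mathcal{A}|}\sum_{s=t_0+1}^{t}\mathbb{E}\|z_s-z_{s-1}\|^2$ and the analogous bound with $L_g$ for $\widetilde{g}_t'$. For $(\Sigma^1)$ the full-batch restart kills the leading term; for $(\mathbb{E}^1)$ it is bounded by $\sigma_g^2/|\mathcal{A}_{t_0}|$ (resp. $\sigma_{g'}^2/|\mathcal{A}_{t_0}|$). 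Combining these and summing over an epoch of length $\tau=\lfloor\sqrt n\rfloor$ with minibatch $\lfloor\sqrt n\rfloor$ produces an accumulated error of the form $(\text{const})\,G_0\sum_s\mathbb{E}\|z_s-z_{s-1}\|^2$ (plus the online $\sigma$-terms), where $G_0=2(l_g^4L_f^2+l_f^2L_g^2)$ is exactly the constant multiplying the two factors $l_g^2L_f^2\cdot l_g^2$ and $l_f^2\cdot L_g^2$; this is the source of the $G_0$ in the stated bound on $\beta$.

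The main obstacle is the descent lemma, because the proximal step is taken from the base point $x_t$ whereas the gradient mapping in the conclusion is centered at the momentum point $z_t$. I would begin from smoothness of $F$ along $y_{t+1}-z_t=\frac{\beta_t}{\lambda_t}(x_{t+1}-x_t)$, invoke the prox optimality $\frac{1}{\lambda_t}(x_t-x_{t+1})-\widetilde{\nabla}F(z_t)\in\partial r(x_{t+1})$ together with convexity of $r$, and then use the interpolation identities $z_t=(1-\alpha_{t+1})y_t+\alpha_{t+1}x_t$ and $y_{t+1}=(1-\alpha_{t+1})y_t+(\alpha_{t+1}-\tfrac{\beta_t}{\lambda_t})x_t+\tfrac{\beta_t}{\lambda_t}x_{t+1}$ to rewrite $r(y_{t+1})$ through the $r$-values at $x_t$ and $x_{t+1}$. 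I expect the freedom $\lambda_t\in[\beta,(1+\alpha_t)\beta]$ and the schedule $\alpha_t=\frac{2}{t+1}$ to be tuned precisely so that the momentum cross terms telescope while the coefficient of $\|\mathcal{G}_{\lambda_t}(z_t)\|^2$ stays positive and of order $\beta$; I would also pass from the stochastic step to the exact prox-gradient step at $z_t$ via non-expansiveness of the prox, paying $\|\widetilde{\nabla}F(z_t)-\nabla F(z_t)\|^2$, which is exactly the quantity bounded in the previous step. Closing this bookkeeping, so that the accumulated variance $\propto G_0\sum\|z_s-z_{s-1}\|^2$ is dominated by the negative descent once $\beta\le(2\sqrt{16L_F^2+6G_0}+8L_F)^{-1}$, is the crux; it requires bounding $\|z_s-z_{s-1}\|$ by the proximal-gradient steps through the momentum recursions so that the two movement quantities can be matched.

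Finally I would telescope the one-step inequality over $t=0,\dots,T-1$. The movement terms $\|z_s-z_{s-1}\|^2$ coming from the variance are absorbed into the negative descent by the smallness of $\beta$, leaving $\frac{1}{T}\sum_t\mathbb{E}\|\mathcal{G}_{\lambda_t}(z_t)\|^2\le\mathcal{O}\big(\frac{\Phi(x_0)-\Phi^*}{T\beta}\big)$ for $(\Sigma^1)$, with an extra additive $\mathcal{O}(\epsilon^2)$ for $(\mathbb{E}^1)$ produced by the per-step epoch-start variance $\sigma_0^2/|\mathcal{A}_{t_0}|$ with $|\mathcal{A}_{t_0}|=\lceil 2\sigma_0^2\epsilon^{-2}\rceil$, which survives averaging as a constant. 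The sample-complexity bounds then follow by counting: for $(\Sigma^1)$ each epoch of $\tau=\lfloor\sqrt n\rfloor$ iterations uses $n$ samples at the restart and $\lfloor\sqrt n\rfloor$ per inner step, i.e. $\mathcal{O}(n)$ per epoch and $\mathcal{O}(\sqrt n)$ amortized per iteration, so with $T=\mathcal{O}\big(\frac{\Phi(x_0)-\Phi^*}{\beta}\epsilon^{-2}\big)$ iterations the total is $\mathcal{O}(n+\sqrt n\,\epsilon^{-2})$; replacing $n$ by $\sigma_0^2\epsilon^{-2}$ throughout gives the amortized cost $\mathcal{O}(\sigma_0\epsilon^{-1})$ per iteration and hence $\mathcal{O}(\epsilon^{-3})$ for $(\mathbb{E}^1)$.
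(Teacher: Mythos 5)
Your variance analysis and your final bookkeeping coincide with the paper's: the decomposition giving $\|\widetilde{\nabla}F(z_t)-\nabla F(z_t)\|^2\le 2l_g^2L_f^2\|\widetilde{g}_t-g(z_t)\|^2+2l_f^2\|\widetilde{g}_t'-g'(z_t)\|^2$, the epoch-wise SARAH recursion with full-batch (resp.\ $\sigma_0^2/|\mathcal{A}_0|$) restart term, the identification of $G_0$, the origin of the admissible $\beta$ from absorbing $3\beta G_0+2L_F-\frac{1}{4\beta}\le-\frac{1}{8\beta}$, the surviving $\mathcal{O}(\epsilon^2)$ floor in the online case, and the sample counting are exactly the paper's Lemma 1 and the tail of its proof. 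One caution there: you must pair the difference $\nabla f(\widetilde{g}_t)-\nabla f(g(z_t))$ with the \emph{exact} Jacobian $g'(z_t)$ (norm at most $l_g$) and pair $\widetilde{g}_t'-g'(z_t)$ with $\nabla f(\widetilde{g}_t)$ (norm at most $l_f$); the inner-loop estimator $\widetilde{g}_t'$ itself has no a priori bound by $l_g$ in MVRC-1.

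The genuine gap is in the descent lemma, which is the heart of the theorem. You propose smoothness of $F$ along $y_{t+1}-z_t=\frac{\beta_t}{\lambda_t}(x_{t+1}-x_t)$ together with convexity of $r$ applied to $y_{t+1}=(1-\alpha_{t+1})y_t+(\alpha_{t+1}-\frac{\beta_t}{\lambda_t})x_t+\frac{\beta_t}{\lambda_t}x_{t+1}$. The identity is correct, but it is a convex combination only if $\alpha_{t+1}\ge\beta_t/\lambda_t$; under the stated parameters $\lambda_t\le(1+\alpha_t)\beta\le 2\beta$ forces $\beta_t/\lambda_t\ge\frac{1}{2}$, while $\alpha_{t+1}=\frac{2}{t+2}<\frac{1}{2}$ for all $t\ge 3$, so the weight on $x_t$ is negative, Jensen's inequality is unavailable, and since $r$ is merely convex (possibly nonsmooth, not assumed Lipschitz) the negative-weight term cannot be repaired. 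This is precisely the obstruction created by a one-prox-per-iteration momentum scheme (Katyusha-type analyses escape it by spending a second prox at the $y$-point), and it is why the paper's proof never evaluates $\Phi$, $F$, or $r$ at $y_t$ or $z_t$. Instead the paper proves descent along the $x$-iterates: prox strong convexity at $x_{t+1}$ plus smoothness of $F$ between $x_t$ and $x_{t+1}$ bound $\mathbb{E}\Phi(x_{t+1})-\mathbb{E}\Phi(x_t)$, with the momentum entering only through the cross term $\langle\nabla F(x_t)-\nabla F(z_t),x_{t+1}-x_t\rangle\le L_F\|y_t-x_t\|\,\|x_{t+1}-x_t\|$; a separate lemma then solves the momentum recursion in closed form, $y_t-x_t=-\Gamma_t\sum_{s=1}^{t}\frac{\lambda_{s-1}-\beta}{\Gamma_s\lambda_{s-1}}(x_s-x_{s-1})$ with $\Gamma_t=\frac{2}{t(t+1)}$, so that $\|y_t-x_t\|^2$ is a history-weighted average of past squared steps. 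This also shows why your one-step potential framing $H_t-H_{t+1}\gtrsim\beta\,\mathbb{E}\|\mathcal{G}_{\lambda_t}(z_t)\|^2$ cannot be set up directly: each iteration's momentum error feeds on the entire history, so the paper argues in two stages, first telescoping the $x$-descent (after swapping double sums and using $\sum_{s\ge t}\frac{1}{s(s+1)(s+2)^2}\le\frac{1}{4t(t+1)}$) to obtain $\sum_{t=0}^{T-1}\mathbb{E}\|x_{t+1}-x_t\|^2\le 8\beta(\Phi(x_0)-\Phi^*)+8T\beta^2\sigma_0^2/|\mathcal{A}_0|$, and only then bounding the averaged $\mathbb{E}\|\mathcal{G}_{\lambda_t}(z_t)\|^2$ by this sum through the prox non-expansiveness decomposition you describe. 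You correctly flag this matching of movement quantities as the crux, but the route you set up for it fails at the convexity step; redirecting the descent to the $x$-sequence is the missing idea.
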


Therefore, the MVRC-1 algorithm achieves a sublinear $\mathcal{O}(T^{-1})$ convergence rate in both finite-sum and online cases. In particular, the sample complexities of MVRC-1 match the state-of-art near-optimal sample complexities for nonconvex stochastic optimization. 

\textbf{Momentum with periodic restart:} 
We can further use a restart strategy to boost the practical convergence of MVRC-1 under the diminishing momentum scheme. To be specific, consider implementing MVRC-1 $M$ times and denote $\{x_{t,m}, y_{t,m}, z_{t,m}\}_{t=0}^{T-1}$ as the generated variable sequences in the $m$-th run. If we {adopt the initialization scheme $x_{0,m+1}=x_{T-1,m}$ for $m=1, \ldots, M-1$}, then it can be shown that {(see \Cref{subsec: restart1} for the detailed proof)}
\begin{align}\label{eq_conclude_alpha_decay_single_restart}
\text{For $(\Sigma^1)$:}~ &\mathbb{E} \|{\mathcal{G}}_{\lambda_{\zeta}}( z_{\zeta,\delta})\|^2\le \mathcal{O}\Big( \frac{\Phi\left(x_{0,1}\right)-\Phi^*}{MT\beta}\Big), \nonumber\\
\text{For $(\mathbb{E}^1)$:}~ &\mathbb{E} \|{\mathcal{G}}_{\lambda_{\zeta}}( z_{\zeta,\delta})\|^2\le \mathcal{O} \Big(\epsilon^2+\frac{\Phi\left(x_{0,1}\right)-\Phi^*}{MT\beta} \Big), \nonumber
\end{align}
\noindent where $\zeta$ is uniformly sampled from $\{0,\ldots,T-1\}$ and $\delta$ is uniformly sampled from $\{1,\ldots,M\}$. In particular, by choosing $M=T=\mathcal{O}(\epsilon^{-1})$, we can {achieve an $\epsilon$-accurate solution with} the same sample complexities as those specified in \Cref{thm_alpha_decay_single}.


The momentum scheme of MVRC-1 also allows to adopt a more aggressive constant-level momentum coefficient, i.e., $\alpha_t\equiv \alpha$ for any $\alpha\in (0,1]$. We obtain the following convergence results.
\begin{thm}[Constant momentum]\label{thm_alpha_const_single}
Let Assumptions \ref{assumption1} and \ref{assumption2} hold.  
Apply MVRC-1 to solve the problems $(\Sigma^1)$ and $(\mathbb{E}^1)$ with {momentum} parameters $\alpha_{t}\equiv\alpha\in (0,1], \beta_t\equiv\beta\le (4{\sqrt{{{(1 + \alpha^{-1} )}^2} L_F^2+3{G_0}} + 4(1 + \alpha^{-1} ){L_F}})^{-1}, \lambda_{t}\in [\beta, (1+\alpha)\beta]$. 
\begin{itemize}[topsep=0pt, noitemsep, leftmargin=*]
	\item For the  problem $(\Sigma^1)$, choose the same $\tau$ and $|\mathcal{A}_t|$ as those in item 1 of \Cref{thm_alpha_decay_single}. Then, the output satisfies
	\begin{equation}\label{eq_conclude_alpha_const_SCFS}
	\mathbb{E}\|{\mathcal{G}}_{\lambda_\zeta}( z_\zeta)\|^2 \le \mathcal{O}\Big(\frac{\alpha^{-1}+ 1}{T\beta}\big(\Phi\left(x_{0}\right)-\Phi^*\big)\Big).
	\end{equation}
	Moreover, to achieve an $\epsilon$-accurate solution, the required sample complexity is $\mathcal{O}(n+\sqrt{n}\epsilon^{-2})$.
	
	\item For the problem $(\mathbb{E}^1)$, choose the same  $\tau$ and $|\mathcal{A}_t|$ as those in item 2 of \Cref{thm_alpha_decay_single}. Then, the output  satisfies
	\begin{equation}\label{eq_conclude_alpha_const_SCE}
	\mathbb{E} \|{\mathcal{G}}_{\lambda_\zeta}( z_\zeta)\|^2\le \mathcal{O}\Big(\!(\alpha^{-1}+1)\Big(\epsilon^2\!+\!\frac{\Phi(x_{0})-\Phi^*}{T\beta} \Big)\!\Big). 
	\end{equation}
	Moreover, to achieve an $\epsilon$-accurate solution, the required sample complexity is $\mathcal{O}(\epsilon^{-3})$.
\end{itemize}
\end{thm}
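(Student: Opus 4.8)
The plan is to mirror the proof of \Cref{thm_alpha_decay_single}, replacing the diminishing schedule by the constant $\alpha$ and tracking how $\alpha$ enters only through the factor $(1+\alpha^{-1})$. The backbone is a one-step descent built around the potential $\Phi(y_t)$ (possibly augmented by the accumulated SpiderBoost error): since $y_0=x_0$ and $\Phi\ge\Phi^*$, telescoping such a descent automatically produces the numerator $\Phi(x_0)-\Phi^*$ in \eqref{eq_conclude_alpha_const_SCFS} and \eqref{eq_conclude_alpha_const_SCE}. I would start from the prox optimality of $x_{t+1}=\prox{\lambda_t r}\big(x_t-\lambda_t\widetilde{\nabla}F(z_t)\big)$, using $1/\lambda_t$-strong convexity of the prox objective to obtain, for every comparison point $u$,
\[
r(x_{t+1})+\inner{\widetilde{\nabla}F(z_t)}{x_{t+1}-x_t}+\tfrac{1}{2\lambda_t}\norm{x_{t+1}-x_t}^2\le r(u)+\inner{\widetilde{\nabla}F(z_t)}{u-x_t}+\tfrac{1}{2\lambda_t}\norm{u-x_t}^2-\tfrac{1}{2\lambda_t}\norm{u-x_{t+1}}^2 .
\]
Combining this with the $L_F$-smoothness descent of $F$ along $z_t\to y_{t+1}$ (where $y_{t+1}-z_t=\tfrac{\beta_t}{\lambda_t}(x_{t+1}-x_t)$), splitting $\widetilde{\nabla}F(z_t)=\nabla F(z_t)+(\widetilde{\nabla}F(z_t)-\nabla F(z_t))$, and choosing $u$ as an $\alpha$-weighted combination of $x_t$ and $z_t$ guided by the identity $z_t-y_t=\alpha(x_t-y_t)$, I expect a per-step bound of the form
\[
\mathbb{E}\,\Phi(y_{t+1})\le \mathbb{E}\,\Phi(y_t)-c_1(\alpha)\,\beta\,\mathbb{E}\norm{\mathcal{G}_{\lambda_t}(z_t)}^2+c_2\,\beta\,\mathbb{E}\norm{\widetilde{\nabla}F(z_t)-\nabla F(z_t)}^2+(\text{movement terms}),
\]
with leading descent constant $c_1(\alpha)\sim(1+\alpha^{-1})^{-1}$, which is precisely the origin of the $(\alpha^{-1}+1)$ factor in the final bound.

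Second, I would control the estimation error via the SpiderBoost recursion. Using \Cref{assumption1} and the recursive construction of $\widetilde{g}_t,\widetilde{g}_t'$, the error $\mathbb{E}\norm{\widetilde{\nabla}F(z_t)-\nabla F(z_t)}^2$ is dominated by $G_0\sum_s \tfrac{1}{|\mathcal{A}_s|}\mathbb{E}\norm{z_s-z_{s-1}}^2$ accumulated over the current epoch of length $\tau$, with $G_0=2(l_g^4L_f^2+l_f^2L_g^2)$; in the online case the checkpoint reset adds a term $\sigma_0^2/|\mathcal{A}_t|$, which under the stated $\tau,|\mathcal{A}_t|$ is $\mathcal{O}(\epsilon^2)$ and yields the extra $\epsilon^2$ in \eqref{eq_conclude_alpha_const_SCE}. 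I would then express each movement $z_s-z_{s-1}$ through the prox steps $x_\cdot-x_{\cdot-1}$ (the momentum update couples two consecutive steps) with $\alpha$-dependent weights, so that $\norm{z_s-z_{s-1}}^2$ is bounded by a multiple of the $\norm{\mathcal{G}_{\lambda_{s-1}}(z_{s-1})}^2$-type quantities already appearing in the descent.

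Third, I would substitute the variance bound into the descent and sum over $t=0,\ldots,T-1$ (and over epochs). With $\tau=|\mathcal{A}_t|=\lfloor\sqrt n\rfloor$ (resp.\ $\lfloor\sqrt{2\sigma_0^2\epsilon^{-2}}\rfloor$), the accumulated error coefficient collapses to $\mathcal{O}(G_0\beta^2)$, and the step-size ceiling $\beta\le\big(4\sqrt{(1+\alpha^{-1})^2L_F^2+3G_0}+4(1+\alpha^{-1})L_F\big)^{-1}$ is exactly what forces the positive variance and movement terms to be absorbed into $\tfrac12 c_1(\alpha)\beta\norm{\mathcal{G}_{\lambda_t}(z_t)}^2$, leaving a net negative coefficient. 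Telescoping and using $\zeta\sim\text{Uniform}\{0,\ldots,T-1\}$ gives \eqref{eq_conclude_alpha_const_SCFS} and \eqref{eq_conclude_alpha_const_SCE}; the sample complexities then follow from $T=\mathcal{O}(\epsilon^{-2})$ together with $\mathcal{O}(n)$ per checkpoint plus $\mathcal{O}(\sqrt n)$ per inner step (resp.\ $\mathcal{O}(\epsilon^{-2})$ and $\mathcal{O}(\epsilon^{-1})$).

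The main obstacle I anticipate is the algebraic bookkeeping in assembling the one-step descent: selecting the comparison point $u$ and combining the three-point prox inequality, the smoothness inequality, and the variance recursion so that every momentum-induced cross-term and the estimation error are dominated by the single negative $\norm{\mathcal{G}_{\lambda_t}(z_t)}^2$ term under one explicit threshold on $\beta$. Verifying that the stated $\beta$-ceiling achieves this absorption, and confirming that the effective descent constant degrades exactly like $(1+\alpha^{-1})^{-1}$ (rather than faster), is the crux; the remaining steps reuse the machinery of \Cref{thm_alpha_decay_single} with the diminishing-schedule summations replaced by constant-$\alpha$ ones.
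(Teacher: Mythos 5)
Your plan has a genuine structural gap: the potential you propose to telescope, $\Phi(y_t)$, cannot be controlled in the composite setting that the theorem covers. Writing out the update rules of \Cref{alg: 1}, one has
\begin{equation*}
y_{t+1}=z_t+\tfrac{\beta}{\lambda_t}(x_{t+1}-x_t)=(1-\alpha)\,y_t+\Big(\alpha-\tfrac{\beta}{\lambda_t}\Big)x_t+\tfrac{\beta}{\lambda_t}\,x_{t+1},
\end{equation*}
and since $\lambda_t\in[\beta,(1+\alpha)\beta]$ forces $\tfrac{\beta}{\lambda_t}\ge\tfrac{1}{1+\alpha}$, the coefficient of $x_t$ is negative for \emph{every} admissible $\lambda_t$ whenever $\alpha<\tfrac{\sqrt5-1}{2}$, i.e.\ $y_{t+1}$ is an extrapolation, not a convex combination, of $y_t,x_t,x_{t+1}$. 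Because the whole point of MVRC-1 is that $y_{t+1}$ is \emph{not} a proximal output (only $x_{t+1}$ is), and $r$ may be nonsmooth, there is no inequality available for $r(y_{t+1})$: convexity of $r$ goes the wrong way and smoothness is absent. Hence the one-step descent $\mathbb{E}\Phi(y_{t+1})\le\mathbb{E}\Phi(y_t)-c_1(\alpha)\beta\,\mathbb{E}\|\mathcal{G}_{\lambda_t}(z_t)\|^2+\cdots$ that your argument is built on does not exist in this form for general $\alpha\in(0,1]$, which is exactly the range the theorem claims. (This is also why the Katyusha scheme of \cite{xu2019katyusha} needs a second prox evaluation; avoiding it is what makes the paper's analysis nontrivial.) Relatedly, your assertion that the $(\alpha^{-1}+1)$ factor arises as the inverse of a descent constant $c_1(\alpha)$ is unsupported; and your third step, bounding the movements $\|z_s-z_{s-1}\|^2$ by gradient-mapping quantities, runs the absorption in the wrong direction, since the variance must be dominated by movements which are in turn absorbed by negative movement terms.

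The paper's proof avoids all of this by telescoping $\Phi$ at the prox iterates $x_t$: the $\lambda_t^{-1}$-strong convexity of the prox subproblem with comparison point $x_t$, combined with $L_F$-smoothness of $F$ along $x_t\to x_{t+1}$, yields a per-step bound whose only negative term is $-c\,\|x_{t+1}-x_t\|^2$ (see \cref{eq_Phi_Taylor_alpha_const}), and the nonsmooth $r$ never gets evaluated at $y_t$ or $z_t$. The momentum enters through \Cref{lemma_seq_bound_alpha_const}, which solves the difference equation for $y_t-x_t$ exactly as a geometrically weighted sum of past increments, and through the series bound \cref{eq_sum_s_bound_alpha_const}, $\sum_{k\ge0}(1-\alpha)^{2k}(k+1)(k+2)=2/(\alpha^3(2-\alpha)^3)$; it is this sum (times the $\alpha^2$ factors from $(\lambda_{s-1}-\beta)^2/\lambda_{s-1}^2$) that produces the $L_F/\alpha$ term in the $\beta$-ceiling and, later, the $18/\alpha$-type coefficients. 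Stationarity is then handled in a \emph{separate} second stage (\cref{eq_Egrad_bound2_alpha_const}): $\mathbb{E}\|\mathcal{G}_{\lambda_\zeta}(z_\zeta)\|^2$ is bounded via the non-expansiveness of the proximal map by $\|z_t-x_t\|=(1-\alpha)\|y_t-x_t\|$, $\|x_{t+1}-x_t\|$, and the variance, after which \cref{sum_xdiff_bound_alpha_const} is substituted in. If you want to salvage your outline, you should replace the potential $\Phi(y_t)$ by $\Phi(x_t)$ and adopt this two-stage structure; the rest of your plan (SpiderBoost variance recursion, choice of $\tau$ and $|\mathcal{A}_t|$, and the complexity accounting) then matches the paper.
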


Hence, under the constant momentum, MVRC-1 maintains the near-optimal sample complexities in solving both the finite-sum problem $(\Sigma^1)$ and the online problem $(\mathbb{E}^1)$. We note that under the constant momentum, the proof technique of the previous \Cref{thm_alpha_decay_single} (under diminishing momentum) does not apply, and the proof of \Cref{thm_alpha_const_single} requires novel developments on bounding the variable sequences. To elaborate, we need to develop \Cref{lemma_seq_bound_alpha_const} in \Cref{sec: append: 2} to bound the difference sequences $\{\|y_t -x_t\|,\|z_{t+1} - z_t\| \}_t$ based on the constant momentum scheme. Moreover, \cref{eq_sum_s_bound_alpha_const} in \Cref{sec: append: 2} is developed to bound the accumulated constant momentum parameter involved in the series $\sum_{t=0}^{T-1} \|y_t - x_t\|^2$. These developments are critical for obtaining the convergence guarantees and sample complexities of MVRC-1 under constant momentum. 

\section{Analysis of MVRC-1 under Nonconvex Gradient-dominant Condition}
In this subsection, we study the convergence guarantees of MVRC-1 in solving unregularized nonconvex composition problems that satisfy the following gradient dominant condition. Throughout, we denote $F^*:= \inf_{x\in \mathbb{R}^d} F(x)$.
\begin{assum}[Gradient dominant]\label{assumption_graddom}
	Consider the unregularized problem (P) with $r( x)\equiv 0$. Function $F$ is called gradient dominant with parameter $v>0$ if $~\forall  x \in \mathbb{R}^{d},$
	\begin{equation}\label{eq_graddom}
		F(x)- F^* \leq \frac{v}{2}\left\|\nabla F( x)\right\|^{2}.
	\end{equation}
\end{assum}
The gradient dominant condition is a relaxation of strong convexity and is satisfied by many nonconvex machine learning models.
Next, under the gradient dominant condition, we show that MVRC-1 achieves a linear convergence rate in solving the composition problems $(\Sigma^1)$ and $(\mathbb{E}^1)$. We first consider the case of diminishing momentum.

\begin{thm}[Diminishing momentum]\label{thm_graddom_singleC}
	Let Assumptions \ref{assumption1}, \ref{assumption2} and \ref{assumption_graddom} hold.  
	Apply MVRC-1 to solve the problems $(\Sigma^1)$ and $(\mathbb{E}^1)$ with momentum parameters $\alpha_{t}=\frac{2}{t+1}, \beta_t\equiv\beta\le (2\sqrt{16L_F^2+6G_0}+8L_F)^{-1}, \lambda_{t}\in [\beta, (1+\alpha_t)\beta]$. 
	\begin{itemize}[topsep=0pt, noitemsep, leftmargin=*]
		\item For the  problem $(\Sigma^1)$, choose the same   $\tau$ and $|\mathcal{A}_t|$ as those in item 1 of \Cref{thm_alpha_decay_single}. Then, the output  satisfies
		\begin{equation}\label{eq_conclude_alpha_decay_SCFS_graddom}
		\mathbb{E} F({z_\zeta }) - {F^*} \leq \mathcal{O} \Big(\frac{v(F\left( {{x_0}} \right) - {F^*})}{T\beta }\Big).
		\end{equation}
		
		\item For the  problem $(\mathbb{E}^1)$, choose the same  $\tau$ and $|\mathcal{A}_t|$ as those in item 2 of \Cref{thm_alpha_decay_single}. Then, the output  satisfies
		\begin{equation}\label{eq_conclude_alpha_decay_SCE_graddom}
		\mathbb{E} F({z_\zeta}) - {F^*} \leq \mathcal{O} \Big(v{\epsilon ^2} + \frac{v(F\left( {{x_0}} \right) - {F^*})}{T\beta }\Big).
		\end{equation}
	\end{itemize}
	{Moreover, t}o achieve an $\epsilon$-accurate solution, we restart MVRC-1 $M$ times with $y_{0,m+1}=x_{0,m+1}$ being randomly selected from $\{z_{t,m}\}_{t=0}^{T-1}$ and choose $M=\mathcal{O}(\log \frac{1}{\epsilon}), T=\max\{\mathcal{O}(v),\tau\}$. Then,
	\begin{itemize}[topsep=0pt, noitemsep, leftmargin=*]
		\item For the problem $(\Sigma^1)$, the required sample complexity is $\mathcal{O}((n+\sqrt{n}v)\log\frac{1}{\epsilon})$.
		\item For the problem $(\mathbb{E}^1)$, the required sample complexity is $\mathcal{O}(v\epsilon^{-2}\log\frac{1}{\epsilon})$.
	\end{itemize}

\end{thm}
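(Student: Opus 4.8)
The plan is to derive the single-run estimates \eqref{eq_conclude_alpha_decay_SCFS_graddom} and \eqref{eq_conclude_alpha_decay_SCE_graddom} almost immediately from \Cref{thm_alpha_decay_single}, and then convert them into a linearly convergent restart scheme. The first observation is that in the unregularized setting $r\equiv 0$ the proximal mapping is the identity, so the generalized gradient mapping in \eqref{eq_prox_grad} collapses to $\mathcal{G}_{\lambda}(x)=\frac{1}{\lambda}\big(x-(x-\lambda\nabla F(x))\big)=\nabla F(x)$ for every $\lambda>0$; moreover $\Phi=F$ and $\Phi^*=F^*$. Hence the conclusions of \Cref{thm_alpha_decay_single} read $\mathbb{E}\|\nabla F(z_\zeta)\|^2\le \mathcal{O}\big((F(x_0)-F^*)/(T\beta)\big)$ for $(\Sigma^1)$, with an extra additive $\mathcal{O}(\epsilon^2)$ for $(\mathbb{E}^1)$. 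Applying the gradient dominant inequality \eqref{eq_graddom} pointwise at $z_\zeta$ and taking expectations gives $\mathbb{E}[F(z_\zeta)-F^*]\le \frac{v}{2}\mathbb{E}\|\nabla F(z_\zeta)\|^2$, which yields \eqref{eq_conclude_alpha_decay_SCFS_graddom} and \eqref{eq_conclude_alpha_decay_SCE_graddom} directly, the factor $v$ emerging from gradient dominance.

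For the restart claim I would track the function-value gap across runs. Let $\mathcal{F}_m$ be the $\sigma$-algebra generated by the first $m$ runs and set $\Delta_m:=\mathbb{E}[F(x_{0,m})-F^*]$. The scheme sets $x_{0,m+1}=z_{\zeta,m}$ chosen uniformly from $\{z_{t,m}\}_{t=0}^{T-1}$, with $y_{0,m+1}=x_{0,m+1}$ so as to match the standing hypothesis $y_0=x_0$ of \Cref{thm_alpha_decay_single}; thus the single-run bound of the previous paragraph applies to each run. Invoking it conditionally on $\mathcal{F}_m$ (treating $x_{0,m}$ as fixed) and then taking the tower expectation produces, for $(\Sigma^1)$, the geometric recursion $\Delta_{m+1}\le \frac{Cv}{T\beta}\,\Delta_m$ for a universal constant $C$, and for $(\mathbb{E}^1)$ the same recursion with an additive floor $\mathcal{O}(v\epsilon^2)$. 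Choosing $T=\max\{\mathcal{O}(v),\tau\}$ large enough that $\frac{Cv}{T\beta}\le \frac12$ makes each run a contraction; the requirement $T\ge\tau$ ensures that the SpiderBoost checkpoint schedule, and hence \Cref{thm_alpha_decay_single}, remains in force.

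Iterating gives $\Delta_{M}\le 2^{-(M-1)}\Delta_1$ for $(\Sigma^1)$ and $\Delta_{M}\le 2^{-(M-1)}\Delta_1+\mathcal{O}(v\epsilon^2)$ for $(\mathbb{E}^1)$, so after $M=\mathcal{O}(\log\frac1\epsilon)$ restarts the gap is driven to $\mathcal{O}(\epsilon^2)$, respectively to the floor $\mathcal{O}(v\epsilon^2)$. Feeding this back through the single-run gradient bound at the final run (where $T=\mathcal{O}(v)$ makes $\Delta_M/(T\beta)=\mathcal{O}(\epsilon^2)$) shows $\mathbb{E}\|\nabla F\|^2=\mathcal{O}(\epsilon^2)$, i.e., an $\epsilon$-accurate point. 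It then remains to count samples: each run performs $\lceil T/\tau\rceil$ full checkpoints and $T$ stochastic updates, costing $\mathcal{O}\big(\lceil T/\tau\rceil B+T b\big)$ with $(B,b)=(n,\sqrt n)$ for $(\Sigma^1)$ and $(B,b)=(\sigma_0^2\epsilon^{-2},\sigma_0\epsilon^{-1})$ for $(\mathbb{E}^1)$. Substituting $\tau$ and $T=\max\{\mathcal{O}(v),\tau\}$ and simplifying via a short case split according to whether $v\ge\tau$ gives per-run cost $\mathcal{O}(n+\sqrt n\,v)$ in the finite-sum case and $\mathcal{O}(\sigma_0^2\epsilon^{-2}+v\sigma_0\epsilon^{-1})$ in the online case; multiplying by $M=\mathcal{O}(\log\frac1\epsilon)$ yields the advertised $\mathcal{O}((n+\sqrt n\,v)\log\frac1\epsilon)$ and $\mathcal{O}(v\epsilon^{-2}\log\frac1\epsilon)$.

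The reduction via $r\equiv 0$ and the per-run sample count are routine. The step I expect to be most delicate is closing the geometric recursion rigorously: one must verify that \Cref{thm_alpha_decay_single} applies verbatim when the initialization $x_{0,m}$ is itself the random output of the previous run, which requires the clean conditional formulation on $\mathcal{F}_m$ together with the warm-start identity $z_{0,m+1}=x_{0,m+1}$ (forced by $\alpha_1=1$ and $y_{0,m+1}=x_{0,m+1}$). The secondary subtlety is controlling the online case, where one must check that the floor $\mathcal{O}(v\epsilon^2)$ is indeed compatible with the target gradient accuracy and that imposing $T\ge\tau$ does not inflate the constant hidden in the choice $T=\mathcal{O}(v)$.
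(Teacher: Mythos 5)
Your proposal is correct and follows essentially the same route as the paper: reduce to \Cref{thm_alpha_decay_single} via $r\equiv 0$ (so $\mathcal{G}_\lambda=\nabla F$ and $\Phi=F$), apply the gradient dominance inequality to get the single-run bounds, then use the random selection of $x_{0,m+1}$ from $\{z_{t,m}\}_{t=0}^{T-1}$ to turn the per-run bound into a geometric recursion on the expected function-value gap with ratio $Cv/(T\beta)\le 1/2$ under $T=\max\{\mathcal{O}(v),\tau\}$, and finally count samples per run times $M=\mathcal{O}(\log\frac{1}{\epsilon})$. The only cosmetic differences are that the paper absorbs the online-case floor by subtracting the fixed point $W\epsilon^2$ with $W=Cv/(1-\frac{Cv}{T\beta})$ before iterating, whereas you sum the geometric series of floor terms, and you add a harmless final conversion back to the gradient-norm metric; both are equivalent to the paper's argument.
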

Therefore, MVRC-1 achieves a linear convergence rate in solving both $(\Sigma^1)$ and $(\mathbb{E}^1)$ under the gradient dominant condition, and the corresponding sample complexities match the best-known existing results. 
Furthermore, our algorithm also allows to adopt a constant momentum scheme under the gradient dominant condition and preserves the convergence guarantee as well as the sample complexity. We obtain the following result.

\begin{thm}[Constant momentum]\label{thm_graddom_const_momentum}
	Let Assumptions \ref{assumption1}, \ref{assumption2} and \ref{assumption_graddom} hold.  
	Apply MVRC-1 to solve the problems $(\Sigma^1)$ and $(\mathbb{E}^1)$ with  parameters $\alpha_{t}\equiv\alpha\in (0,1], \beta_t\equiv\beta\le (4{\sqrt {{{(1 + 1/\alpha )}^2} L_F^2+3{G_0}} + 4(1 + 1/\alpha ){L_F}})^{-1}, \lambda_{t}\in [\beta, (1+\alpha_t)\beta]$. 
	\begin{itemize}[topsep=0pt, noitemsep, leftmargin=*]
		\item For the problem $(\Sigma^1)$, choose the same $\tau$ and $|\mathcal{A}_t|$ as those in item 1 of {\Cref{thm_alpha_decay_single}}. Then, the output satisfies
		\begin{equation}\label{eq_conclude_alpha_const_SCFS_graddom}
		{\mathbb{E} }F({z_\zeta }) - {F^*} \le \mathcal{O} \Big(\frac{v(\alpha^{-1}+1)}{T\beta}\big(F\left(x_{0}\right)-F^*\big) \Big).
		\end{equation}
		
		\item For the problem $(\mathbb{E}^1)$, choose the same $\tau$ and $|\mathcal{A}_t|$ as those in item 2 of {\Cref{thm_alpha_decay_single}}. Then, the output satisfies
		\begin{align*}
		\mathbb{E} F({z_\zeta }) - {F^*} \le \mathcal{O} \Big(\!v(\alpha^{-1}\!+\!1) \Big(\epsilon^2  \!+\!\frac{F\left(x_{0}\right)\!-\!F^*}{T\beta}\Big)\!\Big).
		\end{align*}
	\end{itemize}
	To achieve an $\epsilon$-accurate solution, we restart MVRC-1 $M$ times with $y_{0,m+1}=x_{0,m+1}$ being randomly selected from $\{z_{t,m}\}_{t=0}^{T-1}$ and choose $M=\mathcal{O}(\log \frac{1}{\epsilon}), T=\max\{\mathcal{O}(v),\tau\}$. Then,
	\begin{itemize}[topsep=0pt, noitemsep, leftmargin=*]
		\item For the problem $(\Sigma^1)$, the required sample complexity is $\mathcal{O}((n+\sqrt{n}v)\log\frac{1}{\epsilon})$.
		\item For the problem $(\mathbb{E}^1)$, the required sample complexity is $\mathcal{O}(v\epsilon^{-2}\log\frac{1}{\epsilon})$.
	\end{itemize}
\end{thm}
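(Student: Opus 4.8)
The plan is to derive this theorem as a direct consequence of the constant-momentum nonconvex guarantee \Cref{thm_alpha_const_single}, combined with the gradient dominant inequality and a restart recursion. The first observation I would use is that the problem is unregularized. When $r\equiv 0$, the proximal mapping is the identity, so the generalized gradient mapping in \cref{eq_prox_grad} collapses to $\mathcal{G}_\lambda(x)=\nabla F(x)$, and moreover $\Phi=F$ and $\Phi^*=F^*$. The parameter constraints imposed here ($\alpha_t\equiv\alpha$, $\beta\le(4\sqrt{(1+1/\alpha)^2L_F^2+3G_0}+4(1+1/\alpha)L_F)^{-1}$, $\lambda_t\in[\beta,(1+\alpha)\beta]$) coincide exactly with those of \Cref{thm_alpha_const_single}, so that theorem applies verbatim and, after the reduction, bounds $\mathbb{E}\|\nabla F(z_\zeta)\|^2$: for $(\Sigma^1)$ by $\mathcal{O}\big(\frac{\alpha^{-1}+1}{T\beta}(F(x_0)-F^*)\big)$, and for $(\mathbb{E}^1)$ by $\mathcal{O}\big((\alpha^{-1}+1)(\epsilon^2+\frac{F(x_0)-F^*}{T\beta})\big)$.

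Second, I would apply the gradient dominant condition \cref{eq_graddom} pointwise at $z_\zeta$ and take expectations, giving $\mathbb{E}[F(z_\zeta)-F^*]\le\frac{v}{2}\mathbb{E}\|\nabla F(z_\zeta)\|^2$. Substituting the two bounds above immediately yields the per-run inequalities \cref{eq_conclude_alpha_const_SCFS_graddom} and its online analogue. At this point the single-run part of the theorem is complete; crucially, the constant-momentum-specific difficulty of controlling the difference sequences $\|y_t-x_t\|$ and $\|z_{t+1}-z_t\|$ has already been absorbed into \Cref{thm_alpha_const_single}, so no new sequence estimates are needed here.

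Third, I would convert the per-run sublinear bound into a global linear rate via the restart. Writing $D_m=\mathbb{E}[F(x_{0,m})-F^*]$ and using $x_{0,m+1}=z_{\zeta,m}$ (which is consistent since $y_{0,m+1}=x_{0,m+1}$ forces $z_{0,m+1}=x_{0,m+1}$), the per-run bound gives the recursion $D_{m+1}\le\rho D_m+\mathrm{(floor)}$, with $\rho=\mathcal{O}\big(\frac{v(\alpha^{-1}+1)}{T\beta}\big)$ and floor $0$ for $(\Sigma^1)$, $\mathcal{O}(v\epsilon^2)$ for $(\mathbb{E}^1)$; this needs only the tower property, since the guarantee holds in conditional expectation with constants independent of the random starting point $x_{0,m}$. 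Choosing $T=\max\{\mathcal{O}(v),\tau\}$ so that $T\beta\gtrsim v(\alpha^{-1}+1)$ forces $\rho\le\tfrac12$, and unrolling gives $D_{M+1}\le 2^{-M}D_1+\mathcal{O}(v\epsilon^2)$. Hence $M=\mathcal{O}(\log\frac1\epsilon)$ restarts drive the suboptimality to the noise floor, and since $\mathcal{G}_\lambda=\nabla F$, gradient $\epsilon$-accuracy corresponds to $F-F^*=\mathcal{O}(v\epsilon^2)$, which matches the online floor.

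Finally, for the sample complexity I would count one full batch plus the minibatches over the $T/\tau$ epochs per run and multiply by $M$: for $(\Sigma^1)$ with $\tau=\lfloor\sqrt n\rfloor$ this gives $\mathcal{O}(n+\sqrt n\,v)$ per run, hence $\mathcal{O}((n+\sqrt n\,v)\log\frac1\epsilon)$ total, and for $(\mathbb{E}^1)$ with $\tau=\lfloor\sqrt{2\sigma_0^2\epsilon^{-2}}\rfloor$ the per-run cost is $\mathcal{O}(\sigma_0^2\epsilon^{-2})$ or $\mathcal{O}(v\sigma_0\epsilon^{-1})$, yielding the stated $\mathcal{O}(v\epsilon^{-2}\log\frac1\epsilon)$. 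The main obstacle is not the single-run estimate, which is a near-immediate corollary, but the bookkeeping in the last two steps: setting up the restart recursion so the contraction factor is genuinely below $1$ while the online noise floor is controlled, and splitting the sample-complexity accounting into the regimes $v\gtrless\tau$ so that the full-batch term and the epoch/minibatch terms combine into the claimed near-optimal rates.
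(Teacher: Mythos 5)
Your proposal is correct and follows essentially the same route as the paper: the paper likewise reduces the claim via $r\equiv 0 \Rightarrow \mathcal{G}_\lambda=\nabla F$, plugs the constant-momentum nonconvex bound of \Cref{thm_alpha_const_single} into the gradient dominance inequality to get the per-run guarantees, and then sets up the same restart recursion with $\mathbb{E}F(x_{0,m+1})=\frac{1}{T}\sum_t\mathbb{E}F(z_{t,m})$, choosing $T=\max\{\mathcal{O}(v),\tau\}$ to make the contraction factor at most $1/2$ and $M=\mathcal{O}(\log\frac{1}{\epsilon})$, followed by the same epoch-based sample count. The only cosmetic difference is that the paper handles the online noise floor by subtracting the fixed point $W\epsilon^2=\frac{Cv\epsilon^2}{1-Cv/(T\beta)}$ to obtain an exact geometric contraction, whereas you unroll $D_{m+1}\le\rho D_m+\mathcal{O}(v\epsilon^2)$ and sum the geometric series; these are equivalent.
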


\section{Momentum with SPIDER for Solving Nonconvex Problems $(\Sigma^2)$ and $(\mathbb{E}^2)$}
In this section, we develop momentum schemes with variance reduction for solving the composition optimization problems $(\Sigma^2)$ and $(\mathbb{E}^2)$ that have double finite-sum and double expectation structures, respectively, which are rewritten below for reference.
\begin{align}
(\Sigma^2):~ & \min_{ x \in \mathbb{R}^{d}} \Phi(x)= \frac{1}{N}\sum_{k=1}^{N}f_k\Big(\frac{1}{n}\sum_{i=1}^{n} g_{i}(x)\Big)+r(x), \nonumber\\
(\mathbb{E}^2):~ & \min_{ x \in \mathbb{R}^{d}} \Phi(x)= \mathbb{E}_\eta f_\eta \big(\mathbb{E}_{\xi} g_{\xi}(x)\big)+r(x). \nonumber
\end{align}

\subsection{Algorithm Design}
The details of the algorithm design are presented in \Cref{alg_double}, which is referred to as MVRC-2.
We note that the MVRC-2 for solving the composition problems $(\Sigma^2), (\mathbb{E}^2)$ are different from the MVRC-1 for solving the simpler problems $(\Sigma^1), (\mathbb{E}^1)$ in several aspects. To elaborate, first, in order to handle the double finite-sum and double expectation structure of  $(\Sigma^2)$ and $(\mathbb{E}^2)$, MVRC-2 requires to sample both the mapping $g$ and the function $f$. In particular, the sampling of $g$ is independent from that of $g'$, which is different from MVRC-1 where they share the same set of samples. Second, MVRC-2 adopts a SPIDER-like variance reduction scheme that uses {an} accuracy-dependent stepsize $\theta_{t}$, whereas MVRC-1 uses the SpiderBoost variance reduction scheme that adopts a constant stepsize. As we present later, such a conservative stepsize leads to theoretical convergence guarantees for MVRC-2 in solving the more challenging problems $(\Sigma^2), (\mathbb{E}^2)$ and help achieve a near-optimal sample complexity result. 

\begin{algorithm}[h]
	\caption{(MVRC-2): Momentum with variance reduction for solving composition problems $(\Sigma^2), (\mathbb{E}^2)$}
	\label{alg_double}
	{\bf Input:} $x_{0}\in\mathbb{R}^d$; $T,\tau \in \mathbb{N}$; $\epsilon, \lambda_{t}$, $\beta_{t}>0$ and $\alpha_{t}\in [0,1]$;
	
	{\bf Initialize:} $y_{0}= x_{0}$.
	
	\For{$t=0, 1, \ldots, T-1$}
	{	
		$z_{t}=\left(1-\alpha_{t+1}\right)  y_{t}+\alpha_{t+1}  x_{t}$,
		
		\eIf{$t\mod\tau=0$}
		{
			For $(\Sigma^2):$ Sample sets $\mathcal{A}_{t}, \mathcal{A}_{t}' = \{1,...,n\}$ and $\mathcal{B}_{t} = \{1,..., N\}$ \\
			For $(\mathbb{E}^2):$ Sample sets $\mathcal{A}_{t}, \mathcal{A}_{t}'$ from the distribution of $\xi$ and {sample set} $\mathcal{B}_{t}$ from the distribution of $\eta$\\
			$\widetilde{g}_{t}=\frac{1}{|\mathcal{A}_t|}\sum_{\xi\in\mathcal{A}_{t}} g_{\xi}(z_t),$ $\widetilde{g}_{t}'=\frac{1}{|\mathcal{A}_t'|}\sum_{\xi\in\mathcal{A}_{t}'} g_{\xi}'(z_t),$\\
			$\widetilde{f}_{t}'=\frac{1}{|\mathcal{B}_t|}\sum_{\eta\in\mathcal{B}_{t}} \nabla f_{\eta}(\widetilde{g}_t).$
			
		}
		{
			For $(\Sigma^2):$ Sample subsets $\mathcal{A}_{t},\mathcal{A}_{t}'$ from $\{1,...,n\}$ and $\mathcal{B}_{t}$ from $\{1,..., N\}$ \\
			For $(\mathbb{E}^2):$ Sample sets $\mathcal{A}_{t},\mathcal{A}_{t}'$ from the distribution of $\xi$ and {sample set} $\mathcal{B}_{t}$ from the distribution of $\eta$\\
			$\widetilde{g}_{t}= \widetilde{g}_{t-1}+\frac{1}{|\mathcal{A}_t|}\sum_{\xi\in\mathcal{A}_t}\big( g_{\xi}( z_t)- g_{\xi}( z_{t-1})\big),$\\
			$\widetilde{g}_{t}'= \widetilde{g}_{t-1}'+\frac{1}{|\mathcal{A}_t'|}\sum_{\xi\in\mathcal{A}_t'}\big( g_{\xi}'( z_t)- g_{\xi}'( z_{t-1})\big),$\\
			$\widetilde{f}_{t}'=\widetilde{f}_{t-1}'+\frac{1}{|\mathcal{B}_t|}\sum_{\eta\in\mathcal{B}_t}\big(\nabla f_{\eta}(\widetilde{g}_t)-\nabla f_{\eta}(\widetilde{g}_{t-1})\big)$
		}
		Compute ${\widetilde{\nabla}F}( z_{t})= \widetilde{g}_{t}'^{\top} \widetilde{f}_{t}'$,\\
		$\widetilde{x}_{t+1}=\operatorname{prox}_{\lambda_{t}r}(x_{t}-\lambda_{t}{\widetilde{\nabla}F}( z_{t}))$,\\
		$x_{t+1}=(1-\theta_t)x_t+\theta_{t} \widetilde{x}_{t+1},
		\theta_t=\min\big\{\frac{\epsilon\lambda_{t}}{\|\widetilde{x}_{t+1}-x_t\|},\frac{1}{2}\big\}$,\\
		$y_{t+1}= z_{t}+\frac{\beta_{t}}{\lambda_{t}}( x_{t+1}- x_{t})$
	}	
	{\textbf{Output:} $z_{\zeta}$, where $\zeta\stackrel{\text{Uniform}}{\sim}\{0,1,\ldots,T-1\}$.}
\end{algorithm}

\subsection{Convergence Analysis in Nonconvex Optimization}

We adopt the following standard assumptions from \cite{zhang2019multi} regarding the problems $(\Sigma^2), (\mathbb{E}^2)$.

\begin{assum}\label{assumption4}
	The objective functions in both the finite-sum problem $(\Sigma^2)$ and online problem $(\mathbb{E}^2)$ satisfy
	\begin{enumerate}[topsep=0pt, noitemsep, leftmargin=*]
		\item Every function $f_\eta$ is $l_f$-Lipschitz continuous, and its gradient $\nabla f_\eta$ is $L_f$-Lipschitz continuous;
		\item Every mapping $g_\xi$ is $l_g$-Lipschitz continuous, and its Jacobian matrix $g_\xi'$ is $L_g$-Lipschitz continuous;
		\item Function $r$ is convex and $\Phi^*:= \inf_{x} \Phi(x) > -\infty$.
	\end{enumerate}
\end{assum}

\begin{assum}(For the online problem $(\mathbb{E}^2)$)\label{assumption5}
	The online  problem $(\mathbb{E}^2)$ satisfies: there exists $\sigma_g, \sigma_{g'}, \sigma_{f'}>0$ such that for all {$x\in \mathbb{R}^p$},$y\in \mathbb{R}^d$, 
	\begin{align}
		&\mathbb{E}_{\xi} \big[\| g_{\xi}( x)- g( x)\|^2\big]\le\sigma_g^2,\nonumber\\
		&\mathbb{E}_{\xi}\big[\| g_{\xi}'( x)- g'( x)\|^2\big]\le\sigma_{g'}^2,\nonumber\\
		&\mathbb{E}_{\eta}\big[\| \nabla f_{\eta}( y)- \nabla f( y)\|^2\big]\le\sigma_{f'}^2. \nonumber
	\end{align}
\end{assum}

For the problem $(\Sigma^2)$, we also adopt a mild assumption on the sample sizes that requires $n\le \mathcal{O}(N^2)$ and $N\le \mathcal{O}(n^2)$. We note that such a condition is also implicitly used by the proof of Theorem 4.4 in \cite{zhang2019multi}. Also, we adopt the same evaluation metric $\mathbb{E}\|{\mathcal{G}}_{\lambda}(x)\| \le \epsilon$ as that used in the previous section. We obtain the following results regarding MVRC-2 with diminishing momentum.



\begin{thm}[Diminishing momentum]\label{thm_alpha_decay_DE_nonconvex}
	Let Assumptions \ref{assumption4} and \ref{assumption5} hold.  
	Apply MVRC-2 to solve the problems $(\Sigma^2)$ and $(\mathbb{E}^2)$ with momentum parameters $\alpha_{t}=\frac{2}{t+1}, \beta_t\equiv\beta=\mathcal{O}(L_F^{-1}), \lambda_{t}\in [\beta, (1+\alpha_t)\beta]$. 
	\begin{itemize}[topsep=0pt, noitemsep, leftmargin=*]
		\item For {the problem} $(\Sigma^2)$, choose $\tau=\lfloor{\sqrt{\max\{N,n\}}}\rfloor$. Set $|\mathcal{A}_t|,|\mathcal{A}_t'|,|\mathcal{B}_t|= n,n,N$, respectively, whenever $t~\text{\rm mod}~\tau=0$ and otherwise set {them to be $\mathcal{O}(\tau)$.} Then, for any {$\epsilon< l_fl_g (\max(N,n))^{-\frac{1}{2}}$}, the output satisfies
		\begin{align}\label{eq_conclude_alpha_decay_DFS}
		\mathbb{E} \|{\mathcal{G}}_{\lambda_\zeta}( z_\zeta)\|\le \mathcal{O}\Big(\epsilon+\frac{\Phi(x_0)-\Phi^*}{\epsilon T}\Big),
		\end{align}
		Moreover, to achieve an $\epsilon$-accurate solution, the required  sample complexity (number of evaluations of $g, g',\nabla f$) is $\mathcal{O}(N+n+\sqrt{\max\{N,n\}}\epsilon^{-2})$.
		
		\item For $(\mathbb{E}^2)$, choose  $\tau=\lfloor {l_fl_g\epsilon^{-1}} \rfloor$. Set $|\mathcal{A}_t|,|\mathcal{A}_t'|,|\mathcal{B}_t|=$ $\mathcal{O}(\lceil L_f^2l_g^2\sigma_g^2\epsilon^{-2}\rceil), \mathcal{O}(\lceil l_f^2\sigma_{g'}^2\epsilon^{-2}\rceil), \mathcal{O}(\lceil l_g^2\sigma_{f'}^2\epsilon^{-2}\rceil)$, respectively,  whenever $t~\text{\rm mod}~\tau=0$ and otherwise set them to be $\mathcal{O}(\lfloor l_fl_g\epsilon^{-1}\rfloor)$. Then, the output  satisfies
		\begin{equation}\label{eq_conclude_alpha_decay_DE}
		\mathbb{E} \|{\mathcal{G}}_{\lambda_\zeta}( z_\zeta)\|\le \mathcal{O}\Big(\epsilon+\frac{\Phi\left(x_{0}\right)-\Phi^*}{\epsilon T}\Big).
		\end{equation}
		Moreover, to achieve an $\epsilon$-accurate solution, the required sample complexity is $\mathcal{O}(\epsilon^{-3})$.
	\end{itemize}
\end{thm}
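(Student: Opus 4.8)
The plan is to fuse a normalized-stepsize descent analysis (in the spirit of CIVR \cite{zhang2019stochastic}) with the momentum potential argument already used for MVRC-1, while separately controlling the \emph{multiplicative} estimation error of $\widetilde{\nabla}F(z_t)=\widetilde g_t'^\top\widetilde f_t'$ produced by the SPIDER recursion. The first step is to bound $\mathbb{E}\|\widetilde{\nabla}F(z_t)-\nabla F(z_t)\|^2$ by decomposing $\widetilde{\nabla}F(z_t)-\nabla F(z_t)=(\widetilde g_t'-g'(z_t))^\top\widetilde f_t'+g'(z_t)^\top(\widetilde f_t'-\nabla f(\widetilde g_t))+g'(z_t)^\top(\nabla f(\widetilde g_t)-\nabla f(g(z_t)))$ and invoking \Cref{assumption4}: the factors satisfy $\|\widetilde f_t'\|\le l_f$, $\|g'(z_t)\|\le l_g$, and the last piece is at most $l_gL_f\|\widetilde g_t-g(z_t)\|$. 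This reduces the task to the three scalar SPIDER errors $\mathbb{E}\|\widetilde g_t-g(z_t)\|^2$, $\mathbb{E}\|\widetilde g_t'-g'(z_t)\|^2$, $\mathbb{E}\|\widetilde f_t'-\nabla f(\widetilde g_t)\|^2$, whose recursions ultimately accumulate the squared trajectory increments: each has the form $\mathbb{E}\|\cdot\|_t^2\le(\text{base variance})+\tfrac{C}{|\mathcal A|}\sum_{s}\mathbb{E}\|z_s-z_{s-1}\|^2$ over the current inner epoch. For $(\Sigma^2)$ the base term vanishes at each restart $t\bmod\tau=0$ (full batch), while for $(\mathbb{E}^2)$ it is $\mathcal{O}(\epsilon^2)$ via the mega-batch sizes in the theorem together with \Cref{assumption5}.

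The second step exploits the accuracy-dependent stepsize. The key structural fact is $\widetilde x_{t+1}-x_t=-\lambda_t\widehat{\mathcal G}_t$ with $\widehat{\mathcal G}_t:=\tfrac1{\lambda_t}(x_t-\prox{\lambda_t r}(x_t-\lambda_t\widetilde{\nabla}F(z_t)))$, so the clipping $\theta_t=\min\{\epsilon\lambda_t/\|\widetilde x_{t+1}-x_t\|,\tfrac12\}$ forces $\|x_{t+1}-x_t\|=\theta_t\|\widetilde x_{t+1}-x_t\|\le\epsilon\lambda_t$ in \emph{every} iteration. Propagating this through the momentum recursions $z_t=(1-\alpha_{t+1})y_t+\alpha_{t+1}x_t$ and $y_{t+1}=z_t+\tfrac{\beta_t}{\lambda_t}(x_{t+1}-x_t)$, and using $\lambda_t\in[\beta,(1+\alpha_t)\beta]$ with $\beta=\mathcal{O}(L_F^{-1})$, yields $\|z_s-z_{s-1}\|=\mathcal{O}(\epsilon\beta)$. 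Substituting into the first step with $|\mathcal A|=\Theta(\tau)=\Theta(\sqrt{\max\{N,n\}})$ collapses the accumulated SPIDER term to $\mathcal{O}(\epsilon^2)$, so $\mathbb{E}\|\widetilde{\nabla}F(z_t)-\nabla F(z_t)\|^2\le\mathcal{O}(\epsilon^2)$ holds throughout; here the mild condition $n\le\mathcal{O}(N^2)$, $N\le\mathcal{O}(n^2)$ keeps $\tau$ and the inner batch sizes mutually consistent.

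The third step is the descent. Using $L_F$-smoothness of $F$, convexity of $r$, and the prox optimality at $\widetilde x_{t+1}$, I would derive a per-step inequality $\mathbb{E}\Phi(x_{t+1})\le\Phi(x_t)-c\,\theta_t\lambda_t\|\widehat{\mathcal G}_t\|^2+\theta_t\lambda_t\|\widehat{\mathcal G}_t\|\cdot\mathcal{O}(\epsilon)+(\text{momentum terms})$, where the cross term is $\mathcal{O}(\epsilon)$ by the second step and the momentum terms telescope against a potential built from $\Phi$ and $\|y_t-x_t\|^2$ exactly as in \Cref{thm_alpha_decay_single}. Splitting on the two branches of $\theta_t$ (when $\|\widehat{\mathcal G}_t\|\ge2\epsilon$ the step length equals $\epsilon\lambda_t$ and $\theta_t\lambda_t\|\widehat{\mathcal G}_t\|^2=\epsilon\lambda_t\|\widehat{\mathcal G}_t\|$, giving linear-in-norm descent; otherwise $\|\widehat{\mathcal G}_t\|<2\epsilon$ already), summing over $t$, and using $\Phi\ge\Phi^*$ produces $\tfrac1T\sum_t\mathbb{E}\|\widehat{\mathcal G}_t\|\le\mathcal{O}(\epsilon+\tfrac{\Phi(x_0)-\Phi^*}{\epsilon T})$. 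Passing from the mixed mapping $\widehat{\mathcal G}_t$ to the target $\mathcal G_{\lambda_t}(z_t)$ of \eqref{eq_prox_grad} via nonexpansiveness of the prox, $\|x_t-z_t\|=\mathcal{O}(\epsilon\beta)$, and the gradient-error bound perturbs the estimate by only $\mathcal{O}(\epsilon)$; the sample-complexity counts then follow by taking $T=\Theta(\epsilon^{-2})$ and tallying one mega-batch of cost $\mathcal{O}(N+n)$ (resp.\ $\mathcal{O}(\sigma^2\epsilon^{-2})$) per $\tau$ steps against $\mathcal{O}(\tau)$ inner evaluations.

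The principal obstacle is the coupling between the first two steps. The error of $\widetilde{\nabla}F$ is a \emph{product} of two independently sampled SPIDER sequences (the Jacobian $\widetilde g_t'$ and the outer gradient $\widetilde f_t'$) composed with the inner-mapping estimate $\widetilde g_t$ at which $\nabla f$ is evaluated, so the three recursions interact multiplicatively and must all simultaneously be forced down to $\mathcal{O}(\epsilon^2)$ by the single movement bound $\|z_s-z_{s-1}\|=\mathcal{O}(\epsilon\beta)$ — a bound that itself must survive being filtered through three coupled momentum sequences. Preventing the mismatch between the prox center $x_t$ and the evaluation point $z_t$ from corrupting the $\Omega(\epsilon\lambda_t\|\widehat{\mathcal G}_t\|)$ descent, under only diminishing momentum $\alpha_t=2/(t+1)$, is the most delicate part, and it is precisely where the potential-function bookkeeping from \Cref{thm_alpha_decay_single} must be extended to accommodate the accuracy-dependent stepsize $\theta_t$.
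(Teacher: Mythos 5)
Your proposal is correct and follows essentially the same route as the paper's proof: deterministic movement bounds forced by the clipped step $\theta_t$ (the paper's \Cref{lemma_seq_bound_alpha_decay_double}), three coupled SPIDER error recursions combined through a product decomposition to obtain $\mathbb{E}\|\widetilde{\nabla}F(z_t)-\nabla F(z_t)\|^2\le\mathcal{O}(\epsilon^2)$ (the paper's \Cref{lemma_approx_grad_double}), a descent step using convexity of $r$ plus $L_F$-smoothness with a case analysis on $\theta_t$ (the paper removes the resulting max by an algebraic identity rather than your explicit case split, which is equivalent), and finally conversion to $\mathcal{G}_{\lambda_t}(z_t)$ via prox nonexpansiveness and the $\mathcal{O}(\beta\epsilon)$ bound on $\|z_t-x_t\|$. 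The one step needing repair is your claim that $\|\widetilde f_t'\|\le l_f$ follows from \Cref{assumption4}: $\widetilde f_t'$ is a SPIDER estimator, not a gradient of $f$ evaluated at a point, so bounding its norm requires telescoping the increments $\|\widetilde f_s'-\widetilde f_{s-1}'\|\le L_f\|\widetilde g_s-\widetilde g_{s-1}\|=\mathcal{O}(L_fl_g\beta\epsilon)$ over the at most $\tau$ inner steps of an epoch --- exactly the argument the paper applies to $\|\widetilde g_t'\|$ in its Eq. \eqref{eq_inner_ggrad_bound_double}, and exactly where the restriction $\epsilon< l_fl_g(\max(N,n))^{-1/2}$ (equivalently $\tau\lesssim l_fl_g\epsilon^{-1}$) is consumed for $(\Sigma^2)$, rather than the $N\le\mathcal{O}(n^2)$, $n\le\mathcal{O}(N^2)$ condition your sketch credits. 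Also, your plan to telescope the momentum terms against a potential ``exactly as in \Cref{thm_alpha_decay_single}'' is more machinery than needed: once the clipped step yields $\|y_t-x_t\|,\|z_t-x_t\|=\mathcal{O}(\beta\epsilon)$ deterministically, these terms are $\mathcal{O}(\epsilon^2)$ per iteration and can simply be absorbed into the descent inequality, which is what the paper does.
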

Therefore, MVRC-2 achieves near-optimal sample complexities in solving the nonconvex composition problems  $(\Sigma^2)$ and $(\mathbb{E}^2)$. Furthermore, under the diminishing momentum scheme, \Cref{alg_double} can implement the same momentum restart scheme as that developed for \Cref{alg: 1} in \Cref{subsec: single_nonconvex} to facilitate the practical convergence and maintain the same complexity results in nonconvex optimization. Due to space limitation, we present these results in \Cref{sec_Mdecay_thm_double}.


Next, we establish convergence guarantee for MVRC-2 under a constant momentum scheme in nonconvex optimization. We obtain the following result.
\begin{thm}[Constant momentum]\label{thm_alpha_const_DE_nonconvex}
	Let Assumptions \ref{assumption4} and \ref{assumption5} hold.  
	Apply  MVRC-2 to solve the problems $(\Sigma^2)$ and $(\mathbb{E}^2)$ with momentum parameters $\alpha_{t}\equiv\alpha \in (0,1], \beta_t\equiv\beta=\mathcal{O}(L_F^{-1}), \lambda_{t}\in [\beta, (1+\alpha_t)\beta]$. 
	\begin{itemize}[topsep=0pt, noitemsep, leftmargin=*]
		\item For the problem $(\Sigma^2)$, {choose the same $\tau$, $|\mathcal{A}_t|$,$|\mathcal{A}_t'|$ and $|\mathcal{B}_t|$ as those in item 1 of \Cref{thm_alpha_decay_DE_nonconvex}.} Then, for any {$\epsilon<l_fl_g {(\max(N,n))^{-1/2}}$}, the output satisfies
		\begin{align}\label{eq_conclude_alpha_const_DFS}
		\mathbb{E} \|{\mathcal{G}}_{\lambda_\zeta}( z_\zeta)\|\le \mathcal{O}\Big(\epsilon\alpha^{-1}+\frac{\Phi(x_0)-\Phi^*}{\epsilon T}\Big),
		\end{align}
		Moreover, to achieve an $\epsilon$-accurate solution, the required sample complexity  is $\mathcal{O}(N+n+\sqrt{\max\{N,n\}}\epsilon^{-2})$.
		
		\item For the  problem $(\mathbb{E}^2)$, {choose the same $\tau$, $|\mathcal{A}_t|$,$|\mathcal{A}_t'|$ and $|\mathcal{B}_t|$ as those in item 2 of \Cref{thm_alpha_decay_DE_nonconvex}.} Then, the output  satisfies
		\begin{equation}\label{eq_conclude_alpha_const_DE}
		\mathbb{E} \|{\mathcal{G}}_{\lambda_\zeta}( z_\zeta)\|\le \mathcal{O}\Big(\epsilon\alpha^{-1}+\frac{\Phi\left(x_{0}\right)-\Phi^*}{\epsilon T}\Big).
		\end{equation}
		Moreover, to achieve an $\epsilon$-accurate solution, the required sample complexity is $\mathcal{O}(\epsilon^{-3})$.
	\end{itemize} 
\end{thm}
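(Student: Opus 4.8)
The plan is to run the same Lyapunov-descent-plus-SPIDER-variance scheme already used for the diminishing-momentum result (\Cref{thm_alpha_decay_DE_nonconvex}), replacing only the parts that exploited the decay of $\alpha_t$. Since MVRC-2, Assumptions~\ref{assumption4}--\ref{assumption5}, the three estimators $\widetilde{g}_t,\widetilde{g}_t',\widetilde{f}_t'$, and the normalized update $\theta_t=\min\{\epsilon\lambda_t/\norm{\widetilde{x}_{t+1}-x_t},\tfrac12\}$ are all identical across the two schedules, every per-iteration estimate that does not use the specific form $\alpha_t=\tfrac{2}{t+1}$ carries over. In particular I would reuse (i) a one-step inequality bounding a merit function $\Psi_t:=\Phi(y_t)+\tfrac{c}{\lambda_t}\norm{y_t-x_t}^2$ in terms of $\Psi_{t+1}$, the movement $\norm{x_{t+1}-x_t}$, and the estimator error $\norm{\widetilde{\nabla}F(z_t)-\nabla F(z_t)}$; and (ii) the composite SPIDER variance bound controlling $\mathbb{E}\norm{\widetilde{\nabla}F(z_t)-\nabla F(z_t)}^2$ by a sum of $\norm{z_s-z_{s-1}}^2$ over the current epoch (plus a batch-size noise floor in the online case $(\mathbb{E}^2)$), assembled from the separate variance recursions for $\widetilde{g}_t$, $\widetilde{g}_t'$ and $\widetilde{f}_t'$ and their composition.

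Next I would exploit the clipped stepsize. In either branch of $\theta_t$ one gets $\norm{x_{t+1}-x_t}\le\epsilon\lambda_t$, so the movement is uniformly $\mathcal{O}(\epsilon\beta)$; this is what lets me feed a controlled quantity into both the variance sum and the sequence recursions. To produce the norm (rather than squared-norm) metric $\mathbb{E}\norm{{\mathcal{G}}_{\lambda_\zeta}(z_\zeta)}$, I would run the standard SPIDER two-regime argument on $\norm{\widetilde{x}_{t+1}-x_t}$: when $\norm{\widetilde{x}_{t+1}-x_t}>2\epsilon\lambda_t$ the step is unclipped and the descent buys progress proportional to $\epsilon\,\norm{\widehat{\mathcal{G}}_{\lambda_t}(z_t)}$, while when $\norm{\widetilde{x}_{t+1}-x_t}\le2\epsilon\lambda_t$ the iterate is already $\mathcal{O}(\epsilon)$-stationary. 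Converting the estimated mapping $\widehat{\mathcal{G}}_{\lambda_t}(z_t)$ back to the true ${\mathcal{G}}_{\lambda_t}(z_t)$ costs $\norm{\widetilde{\nabla}F(z_t)-\nabla F(z_t)}$ (nonexpansiveness of the prox) plus a base-point mismatch term $\tfrac{1-\alpha_{t+1}}{\lambda_t}\norm{y_t-x_t}$, both of which must then be summed.

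The crux, exactly as flagged for MVRC-1, is that under $\alpha_t\equiv\alpha$ the differences $\norm{y_t-x_t}$ and $\norm{z_{t+1}-z_t}$ no longer shrink on their own, so the summed error terms above do not telescope automatically. I would therefore port the constant-momentum sequence lemma (the analog of \Cref{lemma_seq_bound_alpha_const}) to MVRC-2: unrolling $z_t=(1-\alpha)y_t+\alpha x_t$ together with $y_{t+1}=z_t+\tfrac{\beta_t}{\lambda_t}(x_{t+1}-x_t)$ and the bound $\norm{x_{t+1}-x_t}\le\epsilon\lambda_t$ yields a recursion for $\norm{y_{t+1}-x_{t+1}}$ whose accumulated form expresses $\sum_t\norm{y_t-x_t}^2$ in terms of $\sum_t\norm{x_{t+1}-x_t}^2$ weighted by an $\alpha^{-1}$-type factor (the analog of \eqref{eq_sum_s_bound_alpha_const}); this is precisely where the $\epsilon\alpha^{-1}$ term in the conclusion originates. \emph{The main obstacle} is that these sequence bounds feed $\sum_s\norm{z_s-z_{s-1}}^2$ back into the SPIDER variance bound, which in turn re-enters the descent, producing a self-referential inequality. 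I would close it by taking $\beta=\mathcal{O}(L_F^{-1})$ small enough: the stated threshold $\beta\le(4\sqrt{(1+1/\alpha)^2L_F^2+3G_0}+4(1+1/\alpha)L_F)^{-1}$ is exactly what makes the feedback coefficient strictly contractive, letting the variance-induced movement terms be absorbed into the negative descent term.

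Finally I would telescope the resulting one-step inequality over $t=0,\dots,T-1$, divide by $T$, and use $\Psi_0=\Phi(x_0)$ (since $y_0=x_0$) together with $\Psi_T\ge\Phi^*$ to obtain $\mathbb{E}\norm{{\mathcal{G}}_{\lambda_\zeta}(z_\zeta)}\le\mathcal{O}(\epsilon\alpha^{-1}+(\Phi(x_0)-\Phi^*)/(\epsilon T))$, matching \eqref{eq_conclude_alpha_const_DFS} and \eqref{eq_conclude_alpha_const_DE}. For the sample complexity I would set $T=\mathcal{O}(\epsilon^{-2})$ so that both terms are $\mathcal{O}(\epsilon)$, and count evaluations: for $(\Sigma^2)$ each of the $T/\tau$ epochs pays a full batch $N+n$ at its checkpoint plus $\mathcal{O}(\tau)$ per inner step, so the total is $\mathcal{O}(N+n)+\mathcal{O}(T\tau)=\mathcal{O}(N+n+\sqrt{\max\{N,n\}}\,\epsilon^{-2})$ with $\tau=\lfloor\sqrt{\max\{N,n\}}\rfloor$, where the side condition $n\le\mathcal{O}(N^2)$, $N\le\mathcal{O}(n^2)$ is what keeps the checkpoint variance and the inner batch $\mathcal{O}(\tau)$ balanced so no extra factor appears. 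The online case $(\mathbb{E}^2)$ is identical except that the checkpoint full batches are replaced by the noise-floor sizes $\mathcal{O}(\epsilon^{-2})$ and $\tau=\mathcal{O}(\epsilon^{-1})$, giving $\mathcal{O}(\epsilon^{-3})$.
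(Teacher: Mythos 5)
Your plan has the right outer skeleton (clipping gives uniform movement bounds, a constant-momentum sequence lemma supplies the $\alpha^{-1}$-type accumulation, then variance bound, descent, telescoping, and the same complexity count), but the mechanism you place at the center of the argument is wrong for MVRC-2, and carrying it out as written would not prove the theorem as stated. You claim the ``main obstacle'' is a self-referential inequality --- variance depending on $\sum_s\|z_s-z_{s-1}\|^2$, which re-enters the descent --- that must be closed by a contraction, using the $\alpha$-dependent threshold $\beta\le(4\sqrt{(1+1/\alpha)^2L_F^2+3G_0}+4(1+1/\alpha)L_F)^{-1}$. That threshold is the one from \Cref{thm_alpha_const_single} (MVRC-1), where the loop genuinely exists because the step there is unnormalized. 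In MVRC-2 the loop is already broken by the clipped step, a fact you yourself note one paragraph earlier: $\|x_{t+1}-x_t\|\le\epsilon\lambda_t$ holds \emph{deterministically}, so \Cref{lemma_seq_bound_alpha_const_double} gives deterministic bounds $\|y_t-x_t\|^2\le 8\beta^2\epsilon^2/\alpha$ and $\|z_{t+1}-z_t\|^2\le16\beta^2\epsilon^2$ with no reference to the variance, and \Cref{lemma_approx_grad_double_const} then turns these into a \emph{uniform} per-iteration bound $\mathbb{E}\|\widetilde{\nabla}F(z_t)-\nabla F(z_t)\|^2\le\epsilon^2$ using only the batch sizes $\mathcal{O}(\tau)$ and epoch length $\tau$ (plus the side condition $\epsilon<l_fl_g(\max(N,n))^{-1/2}$ in the finite-sum case). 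Nothing needs to be absorbed into a negative descent term; the paper's descent inequality \eqref{eq_adjacent_Phi_diff_bound_double_const} simply telescopes $\Phi(x_t)$ with per-step noise $\mathcal{O}(\epsilon^2/\alpha)/L_F$ against the gain $-5L_F\beta\epsilon\,\mathbb{E}\|\widetilde{x}_{t+1}-x_t\|$, yielding \eqref{eq_adjacent_Phi_bound_double_const} and then \eqref{eq_Egrad_bound_double_const}.

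The inconsistency is not cosmetic. The theorem hypothesizes $\beta=\mathcal{O}(L_F^{-1})$ with a universal constant (the actual choices \eqref{hyperpar_alpha_const_DFS_nonconvex} and \eqref{hyperpar_alpha_const_DE_nonconvex} are $\alpha$-free, e.g.\ $\beta=\frac{1}{10L_F}$), and the final bound has the form $\mathcal{O}(\epsilon\alpha^{-1})+\mathcal{O}\big(\frac{\Phi(x_0)-\Phi^*}{\epsilon T}\big)$ with \emph{no} $\alpha$ on the second term. If you instead enforce your contraction threshold $\beta=\mathcal{O}(\alpha L_F^{-1})$, the factor $\frac{1}{5L_F\beta^2}$ multiplying $\frac{1}{\epsilon T}\mathbb{E}[\Phi(x_0)-\Phi(x_T)]$ in \eqref{eq_Egrad_bound_double_const} becomes $\mathcal{O}(L_F\alpha^{-2})$, so you would only obtain $\mathcal{O}\big(\epsilon\alpha^{-3}+\alpha^{-2}\frac{\Phi(x_0)-\Phi^*}{\epsilon T}\big)$, which is strictly weaker than \eqref{eq_conclude_alpha_const_DFS}--\eqref{eq_conclude_alpha_const_DE} and proved under a different (smaller) stepsize than the one the theorem asserts. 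Separately, the merit function $\Psi_t=\Phi(y_t)+\frac{c}{\lambda_t}\|y_t-x_t\|^2$ that you propose to ``reuse'' from the diminishing-momentum analysis does not appear anywhere in the paper --- both \Cref{thm_alpha_decay_DE_nonconvex} and this theorem telescope $\Phi(x_t)$ directly --- so that scaffolding would have to be built and verified from scratch; as stated it is an appeal to a nonexistent lemma. The correct repair is small: drop the contraction step entirely, keep $\beta=\mathcal{O}(L_F^{-1})$ independent of $\alpha$, and route all $\alpha$-dependence through the pointwise bounds \eqref{eq_xzdiff_bound_double_const} and the geometric-series estimate \eqref{eq_sum_s_bound_alpha_const}, exactly as the paper does.
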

To summarize, under either the diminishing momentum or the constant momentum, our MVRC-2 has guaranteed convergence in solving {both} the nonconvex problems $(\Sigma^2)$ and $(\mathbb{E}^2)$ with near-optimal sample complexities. Therefore, in practical scenarios, we expect that the momentum scheme can significantly facilitate the convergence of the algorithm, which is further verified in the next section via numerical experiments.


\section{Experiments}
In this section, we compare the practical performance of our MVRC with that of other advanced stochastic composition optimization algorithms via two experiments: risk-averse portfolio optimization and nonconvex sparse additive model. The algorithms that we consider include VRSC-PG \cite{huo2018accelerated}, ASC-PG \cite{wang2016accelerating}, CIVR \cite{zhang2019stochastic}, and Katyusha \cite{xu2019katyusha}, which adopt either variance reduction or momentum in their algorithm design. 

\begin{figure*}[h]
	\begin{minipage}{.33\textwidth}
		\centering
		\includegraphics[width=\textwidth]{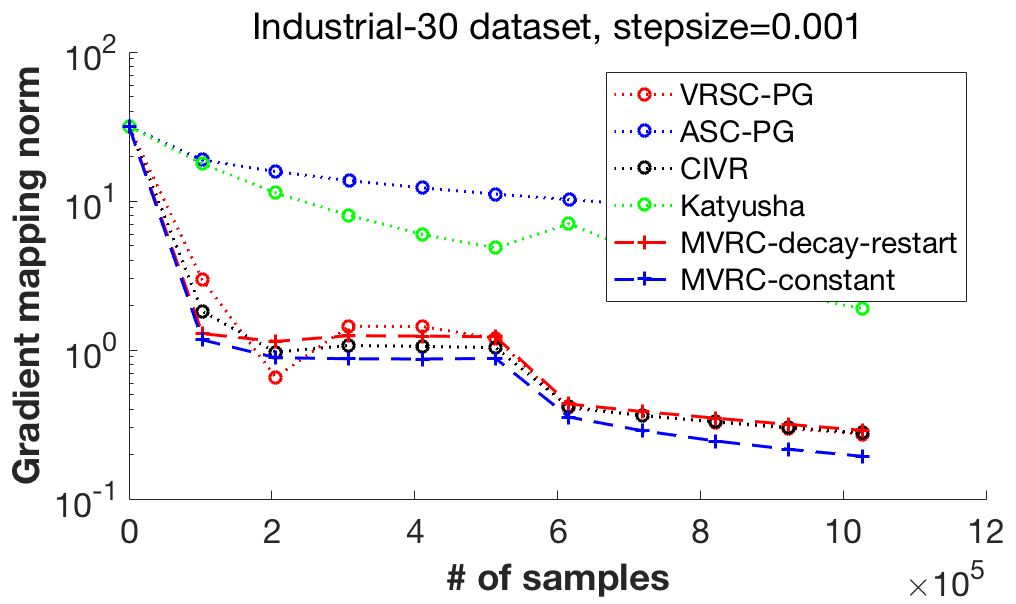}	
		\includegraphics[width=\textwidth]{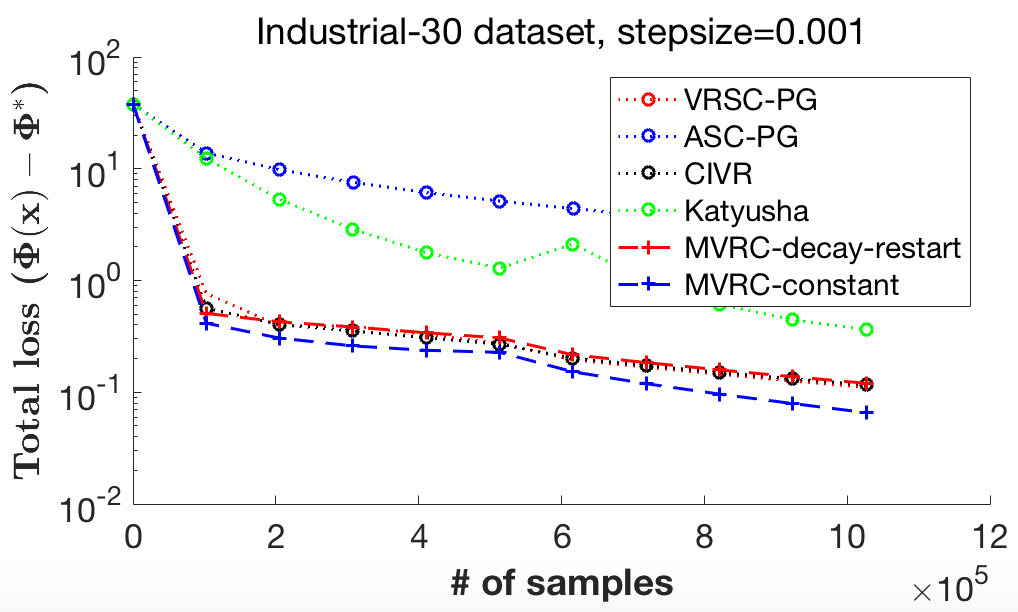}
	\end{minipage} 
	\begin{minipage}{.33\textwidth}
		\centering
		\includegraphics[width=\textwidth]{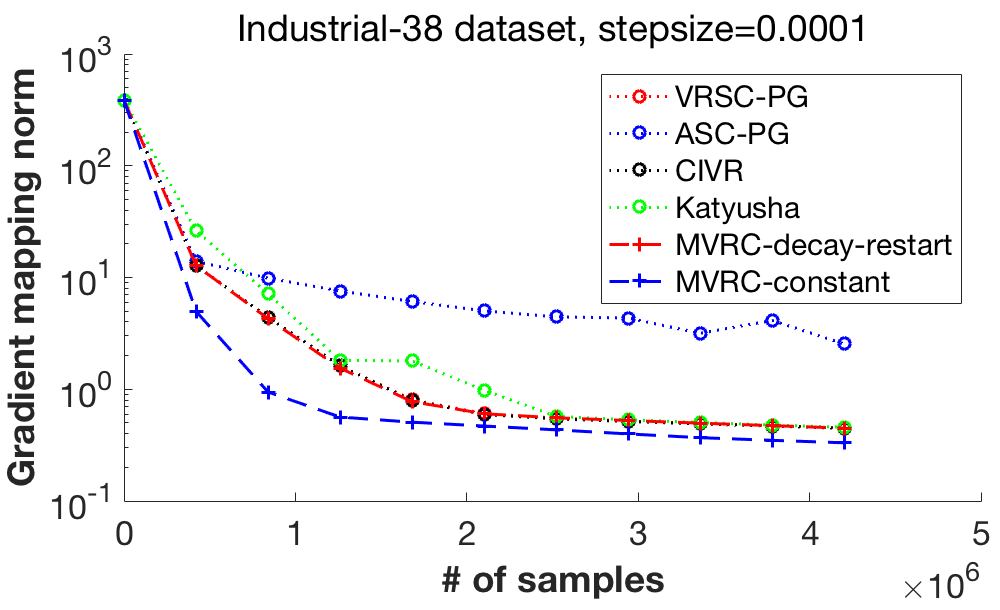}
		\includegraphics[width=\textwidth]{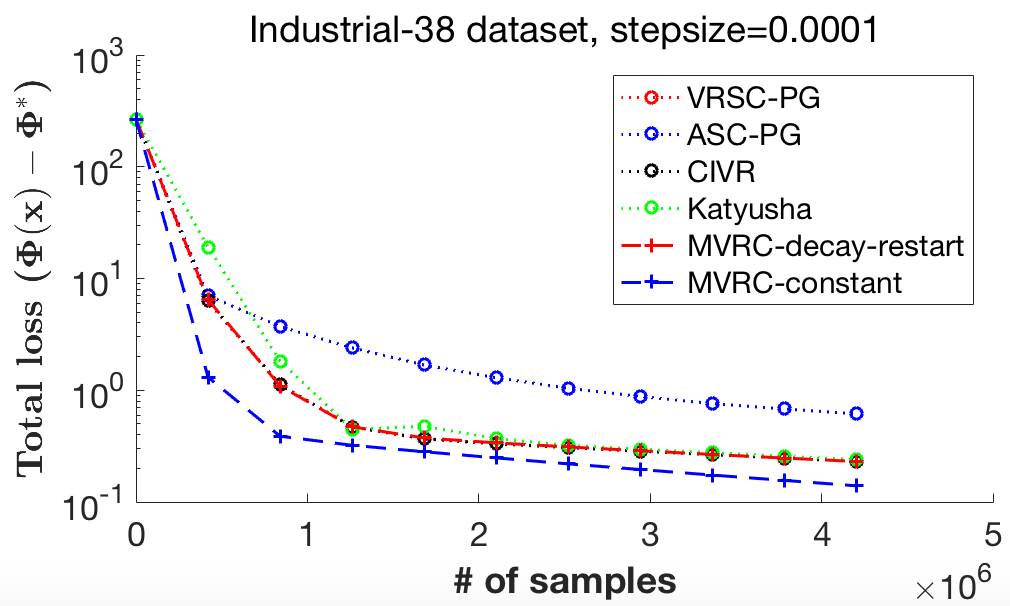}
	\end{minipage} 
	\begin{minipage}{.33\textwidth}
		\centering
		\includegraphics[width=\textwidth]{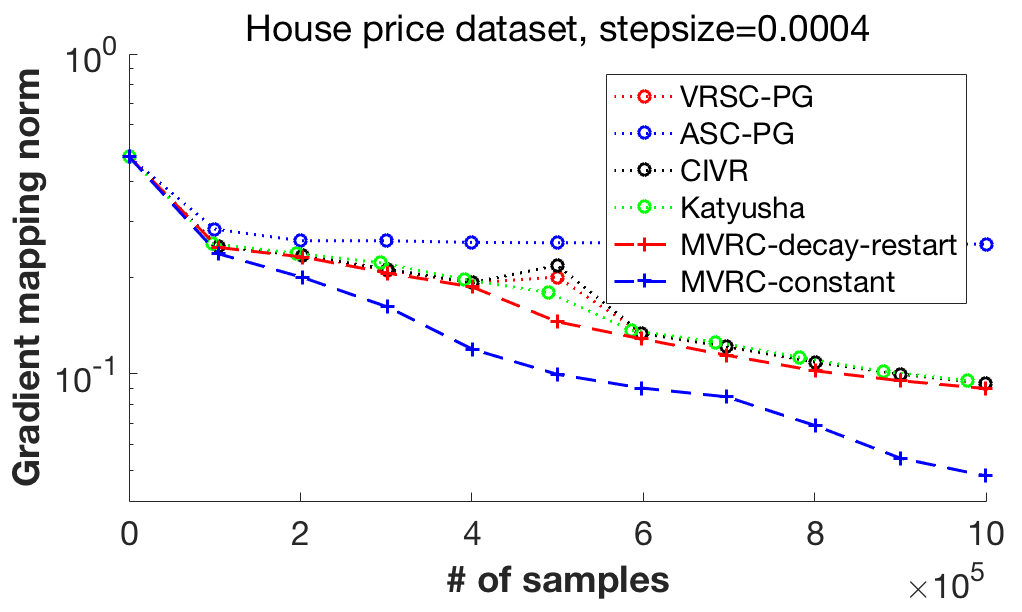}
		\includegraphics[width=\textwidth]{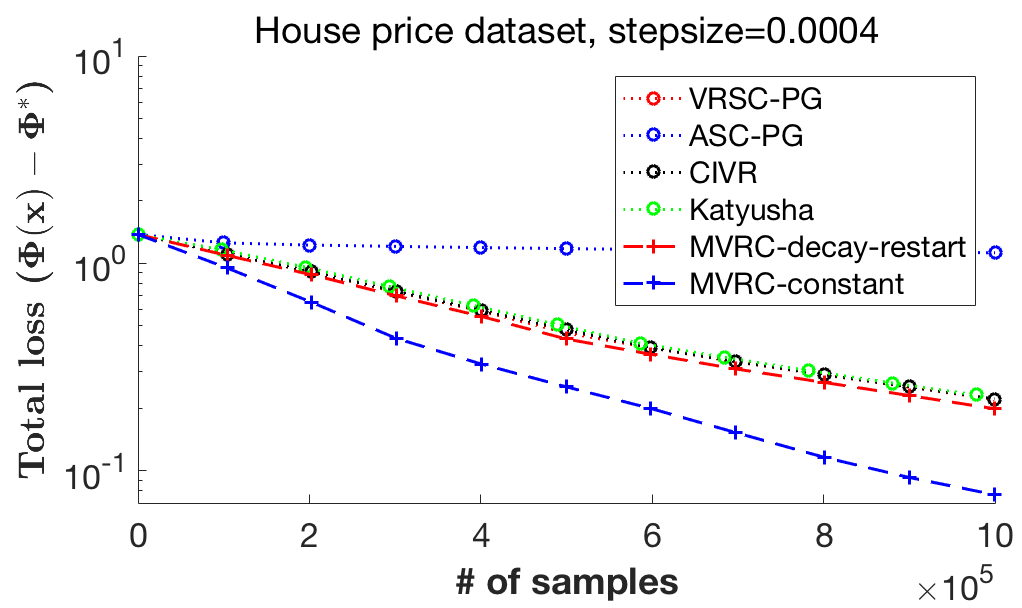}
	\end{minipage}
	\caption{Columns 1-2: Comparison of composition optimization algorithms in solving risk-averse portfolio optimization problems. Column 3: Comparison of composition optimization algorithms in solving nonconvex and nonsmooth sparse additive models.}
	\label{experiment_result}
\end{figure*}

\subsection{Risk-averse Portfolio Optimization}\label{sec_portfolio}
We consider the risk-averse portfolio optimization problem as elaborated in \Cref{append: ml example}. In specific, we set $\lambda=0.2$ in \cref{invest_sumobj} and add an $\ell_1$-{regularizer} $r(x)=0.01\|x\|_1$ to the objective function. We specify the values of $\{R_i\}_{i=1}^n$ using two industrial portfolio datasets from the Keneth R. French Data Library \footnote{\url{http://mba.tuck.dartmouth.edu/pages/faculty/ken.french/data_library.html}}, which have dimension $d=30$ and $d=38$, respectively. As in \cite{zhang2019stochastic}, we select the most recent {$n=24452, 10000$ days from the two datasets, respectively}. We implement all the algorithms using the same initialization point, batch size 256 and learning rate $\eta=10^{-3}, 10^{-4}$ respectively for the two datasets. For variance-reduced algorithms, we set each epoch to include {$J=\lceil n/256\rceil$} inner iterations. For ASC-PG, we set {$\alpha_t=0.0001t^{-5/9}$}, $\beta_t=t^{-4/9}$ as used by \cite{zhang2019stochastic}, whereas for Katyusha \cite{xu2019katyusha} we set $\tau_1=\tau_2=0.4$, $L=1/(3\eta)$, $\alpha=\eta$, {$\theta=1/(12J)$}.
For our MVRC (\Cref{alg: 1}), we consider two settings: 1) diminishing momentum with restart, which chooses $\alpha_t=\frac{2}{t+1}$, $\beta_t\equiv\eta$, $\lambda_t=\frac{t+3}{t+1}\eta$, where $t$ is reset to 0 and $y_t, z_t$ are reset to $x_t$ after each epoch; and 2) constant momentum, which chooses
$\alpha_t\equiv 0.8$, $\beta_t\equiv\eta$, $\lambda_t\equiv 1.8\eta$.

\Cref{experiment_result} (Columns 1 \& 2) presents the convergence curves of these algorithms on both datasets with regard to the gradient mapping norm (top row) and function value gap (bottom row). It can be seen that our MVRC with constant momentum (shown as {``}MVRC-constant{''}) achieves the fastest convergence among all {the} algorithms and is significantly faster than the Katyusha composition optimization algorithm. Also, the convergence of our MVRC with diminishing momentum and restart (shown as {``}MVRC-decay-restart{''}) is comparable to that of CIVR and VRSC-PG, and is faster than ASC-PG.

\subsection{Nonconvex \& Nonsmooth Sparse Additive Model}\label{subsec_SPAM}
We further test these algorithms via solving the sparse additive model as introduced in \Cref{append: ml example}, where we use linear model $h_j(x_j)=\theta_j x_j$ and set $\lambda=1$. In particular, we adopt a modified nonconvex and nonsmooth problem \eqref{SpAM_obj_DE_ours} where the prediction is obtained via the nonlinear model ${\widehat{y}_i} = |\sum_{j=1}^d h_j(x_{ij})|$ and the objective function is further penalized by an $\ell_1$ regularization $r(x)=0.001\|x\|_1$. We use the US house price data\footnote{\url{https://www.kaggle.com/dmvreddy91/usahousing}} that consists of 5000 houses with their prices being approximated via linear combinations of averge income, age, number of rooms and population of a house per area. The price serves as the output and these four features with other 96 random Gaussian features serve as the input. To promote sparsity, all the coefficients of the Gaussian features are set to zero. All the algorithms use the same initialization point, learning rate $\eta=0.004$, batch sizes 10 and 70 for $f$ and $g$, respectively, epoch length $J=140$. For ASC-PG, we set $\alpha_t=0.004t^{-5/9}$, $\beta_t=2t^{-4/9}$. The other hyper-parameters of ASC-PG, Katyusha and our MVRC are the same as those specified in the previous subsection. 

\Cref{experiment_result} (Column 3) presents the convergence curves of these algorithms with regard to the gradient mapping norm ({top right}) and function value gap ({bottom right}). It can be seen that our MVRC with constant momentum {(shown as ``MVRC-constant'')}  converges significantly faster than the other algorithms, in particular, faster than the Katyusha composition optimization algorithm. Also, the convergence of our MVRC with diminishing momentum and restart {(shown as ``MVRC-decay-restart'')}  is comparable to that of CIVR, VRSC-PG and Katyusha. These experiments demonstrate that our MVRC provides significant practical acceleration to nonconvex stochastic composition optimization. 

%
%
%

\section{Conclusion}
In this paper, we develop momentum with variance reduction schemes for solving both finite-sum and online composition optimization problems and provide a comprehensive sample complexity analysis in nonconvex optimization. Our MVRC achieves the state-of-the-art near-optimal sample complexities and attains a linear convergence under the gradient dominant condition. We empirically demonstrate that MVRC with constant momentum outperforms all the other existing stochastic algorithms for composition optimization. In the future work, it is interesting to study the convergence guarantee for MVRC in solving more complex nonconvex problems such the model-agnostic meta-learning and reinforcement learning.

\newpage
\bibliography{reference}
\bibliographystyle{icml2020}

\onecolumn
\section*{Supplemental Materials}
\appendix
\section{Elaboration of Compositional Structure in ML Applications}\label{append: ml example}
\begin{enumerate}
	\item Risk-averse portfolio optimization \cite{zhang2019stochastic}:\\
	In this problem, we need to decide the amount of investment $x\in \mathbb{R}^d$ that involves $d$ assets to maximize the following total expected return penalized by the risk (i.e., the variance of the return)
	\begin{equation}\label{invest_Eobj}
	\min_{x \in \mathbb{R}^{d}} \mathbb{E}_\xi h_\xi(x)-\lambda \text{var}_\xi h_\xi(x),
	\end{equation}
	where $h_\xi(x)$ is the return at a random time $\xi$. Furthermore, suppose $R_\xi \in \mathbb{R}^{d}$ records the return per unit investment for each of the $d$ assets, such that $h_\xi(x)=R_\xi^{\top} x$, and the time $\xi$ is uniformly obtained from $1,2,\ldots,n$. Then, the objective function \eqref{invest_Eobj} can be reformulated as the composition problem $(\Sigma^1)$ with
	\begin{align}\label{invest_sumobj}
	&f(y_1,y_2)=-y_1-\lambda (y_1^2-y_2)\nonumber \\
	&g_i(x)=[h_i(x), h_i^2(x)]=[R_i^{\top}x, (R_i^{\top}x)^2].
	\end{align}
	\item Linear value function approximation in reinforcement learning \cite{zhang2019stochastic}:\\
	Consider a Markov decision process (MDP) with finite state space $\mathcal{S}=\{1,\ldots,S\}$, transition probability matrix $P^{\pi}$ associated with a certain policy $\pi$, and a random reward $R_{i,j}$ obtained after going from state $i$ to state $j$. Then the value function $V^{\pi}: \mathcal{S}\to\mathbb{R}$ must satisfy the Bellman equation:
	\begin{equation}\label{eq_Bellman}
	V^{\pi}(i)=\sum_{j=1}^{S} P_{i, j}^{\pi}\left(R_{i, j}+\gamma V^{\pi}(j)\right)
	\end{equation}
	When the number of states $S$ is large, the value function $V^{\pi}$ can be approximated by some function $\widetilde{V}^{\pi}_i(w)$ parameterized by $w\in \mathbb{R}^d$ to alleviate the computation burden. A simple example is the linear function approximation $\widetilde{V}_i^{\pi}(w)=\Phi_i^{\top}w$, where $\Phi_i\in \mathbb{R}^d$ is fixed. The goal of the value function approximation is to learn the parameter $w$ to minimize the following mean squared error
	\begin{align}\label{MSE_MDP}
	\min_{w\in\mathbb{R}^d} F(w):=&\frac{1}{S}\sum_{i=1}^{S}
	\Big[\widetilde{V}^{\pi}_i(w)- \sum_{j=1}^{S}P_{i,j}^{\pi}(R_{i,j}+\gamma\widetilde{V}^{\pi}_j(w))\Big]^2,
	\end{align}
	\noindent which can be reformulated as the problem $(\mathbb{E}^2)$ with
	\begin{align}\label{MSE_MDP_doubleE}
	&\eta=i\mathop \sim\limits^{\text{uniform}} \{1,\ldots, S\}, \mathbb{P}(\xi=j|\eta=i)=P_{i,j}^{\pi},\nonumber\\
	&g_\xi(w)=[\widetilde{V}_i^{\pi}(w),R_{i,\xi}+\gamma\widetilde{V}^{\pi}_\xi(w)],\nonumber \\
	&f_i(y_1,y_2)=(y_1-y_2)^2, 
	\end{align}
	or the problem $(\mathbb{E}^1)$ with
	\begin{align}\label{MSE_MDP_simpleCE}
	&q_i^{\pi}(w)=\mathbb{E}(w)=\mathbb{E}_\xi[R_{i,\xi}+\gamma\widetilde{V}^{\pi}_\xi(w)|i]~\text{with}~\mathbb{P}(\xi=j|i)=P_{i,j}^{\pi}, \nonumber\\
	&g(w)=[\widetilde{V}_1^{\pi}(w),\ldots,\widetilde{V}_S^{\pi}(w),q_1^{\pi}(w),\ldots,q_S^{\pi}(w)], \nonumber\\
	&f(y_1,\ldots,y_S,z_1,\ldots,z_S)=\frac{1}{S}\sum_{k=1}^S (y_k-z_k)^2. 
	\end{align}
	\item Stochastic neighbor embedding (SNE) \cite{liu2017variance}: \\
	SNE aims at dimension reduction while keeping the original metric among data points as much as possible from a probabilistic view \cite{hinton2003stochastic}. Specifically, for a set of given data points $z_1, \ldots, z_n\in \mathbb{R}^d$, a data point $z_k$ has probability 
	\begin{equation}\label{prob_neighbor_z}
	p_{i|k}=\frac{s(z_i, z_k)}{\sum_{j\ne k} s(z_j, z_k)}
	\end{equation}
	\noindent to select $x_i (i\ne k)$ as its neighbor, where $s(x_i, x_j)$ is the similarity measure between $x_i$ and $x_j$, e.g., the Gaussian form $s(x_i, x_j)=\exp\left(-\|x_i-x_j\|^2/(2\sigma_i^2)\right)$. We want to find a $p$-dim ($p<d$) representation $x_1, \ldots, x_n\in \mathbb{R}^p$ such that its neighbor selection probability
	\begin{equation}\label{prob_neighbor_x}
	q_{i|k}=\frac{s(x_i, x_k)}{\sum_{j\ne k} s(x_j, x_k)}
	\end{equation}
	\noindent is close to the probability \eqref{prob_neighbor_z}. A natural way is to minimize the KL divergence $\text{KL}(p_{i|k}, q_{i|k})=\sum_{i=1}^{n}\sum_{k=1}^{n} p_{i|k}\log(p_{i|k}/q_{i|k})$, which is equivalent to minimize
	\begin{align}\label{SNE_obj}
	\Phi(x)=&-\frac{1}{n^2}\sum_{i=1}^{n}\sum_{k=1}^{n} p_{i|k}\log q_{i|k},\nonumber\\
	=&\frac{1}{n^2}\sum_{i=1}^{n}\sum_{k=1}^{n} p_{i|k} \Big[\log \Big(\frac{1}{n}\sum_{j=1}^{n} s\left(x_{j}, x_{k}\right)-\frac{1}{n}s\left(x_{k}, x_{k}\right)\Big) -\log \Big(\frac{1}{n}s\left(x_{i}, x_{k}\right)\Big)\Big],
	\end{align}
	\noindent where $x=[x_1, \ldots, x_n]\in \mathbb{R}^{np}.$
	This problem can be reformulated as the problem $(\Sigma^2)$ with
	\begin{equation}\label{SNE_DFS_obj}
	\Phi(x)=\frac{1}{n^2}\sum_{i=1}^{n}\sum_{k=1}^{n} f_{i,k}\left[\frac{1}{n}\sum_{j=1}^{n} g_j(x)\right],
	\end{equation}
	\noindent where
	\begin{align}\label{SNE_g_j}
	g_j(x)=&\Big[x_1, x_2, \ldots, x_n, s(x_j, x_1)-\frac{1}{n} s(x_1,x_1),  \ldots, s(x_j, x_n)-\frac{1}{n} s(x_n,x_n)\Big],
	\end{align}
	\begin{equation}\label{SNE_f_ik}
	f_{i,k}(w)=p_{i | k}\Big[\log w_{n+k} - \log \Big(\frac{1}{n}s(w_{i}, w_{k})\Big)\Big].
	\end{equation}

	\item Sparse additive model (SpAM) \cite{wang2017stochastic}:\\
	\indent SpAM is an important model for nonparametric regression where the input vector $x_i=(x_{i1}, \ldots, x_{id})^{\top}\in\mathbb{R}^d$ is high-dimensional. The true output is $y_i\in \mathbb{R}$ and the corresponding model prediction is $\widehat{y}_i=\sum_{j=1}^{d} h_j(x_{ij})$, where $h_j:\mathbb{R}\to\mathbb{R}$ is selected from a certain function family $\mathcal{H}_j$. The object is to minimize the following penalized mean squared error
	\begin{align}\label{SpAM_obj}
	\min _{h_{j} \in \mathcal{H}_{j}, j=1 \ldots, d} 
	&\frac{1}{n}\sum_{i=1}^{n}\Big[y_i-\sum_{j=1}^{d} h_{j}\left(x_{ij}\right)\Big]^{2}+\lambda \sum_{j=1}^{d} \sqrt{\frac{1}{n}\sum_{i=1}^{n}h_{j}^{2}\left(x_{ij}\right)},
	\end{align}
	\noindent \cite{huang2010variable} showed some good statistical properties of SpAM. To use an efficient stochastic optimizer, the objective function \eqref{SpAM_obj} is further reformulated as the problem $(\Sigma^2)$ with
	\begin{align}\label{SpAM_obj_DE}
	&g_i(h_1,\ldots,h_d)=\Big[\Big(y_i-\sum_{j=1}^{d} h_{j}\left(x_{ij}\right)\Big)^{2},h_1(x_{i1})^2,\ldots,h_d(x_{id})^2\Big]^{\top},\nonumber \\
	&f_k(y)=\left\{ \begin{gathered}
	{y_1};k = 1 \hfill \\
	\sqrt {|{y_k}|} ;2 \leqslant k \leqslant d + 1 \hfill \\ 
	\end{gathered}  \right..
	\end{align}
	In our experiment in \Cref{subsec_SPAM}, we adopt a nonconvex and nonsmooth modification of the SpAM, which is also formulated as the problem $(\Sigma^2)$ with
	\begin{align}\label{SpAM_obj_DE_ours}
	&g_i(h_1,\ldots,h_d)=\Big[\Big(y_i-\Big|\sum_{j=1}^{d} h_{j}\left(x_{ij}\right)\Big|\Big)^{2},h_1(x_{i1})^2,\ldots,h_d(x_{id})^2\Big]^{\top},\nonumber \\
	&f_1(y)=y_1,\nonumber\\
	&f_k(y)=\left\{ \begin{gathered}
	\sqrt {|{y_k}|} ; |y_k|\ge 1\hfill \\ 
	1.75y_k^2-0.75y_k^4; |y_k|<1 \hfill \\
	\end{gathered}  \right.; 2 \leqslant k \leqslant d + 1\\
	&r(x)=\mu\|x\|_1,
	\end{align}
	where the prediction ${\widehat{y}_i} = |\sum_{j=1}^d h_j(x_{ij})|$ is nonlinear and nonnegative, and the square root function in $[-1,1]$ is modified to ensure bounded derivative.
\end{enumerate}
\section{Auxiliary Lemmas for Proving \Cref{thm_alpha_decay_single}}
The following \Cref{lemma_approx_grad} originates from the Lemma 1 of \cite{zhang2019stochastic}, and we present it for completeness.
\begin{lemma}\label{lemma_approx_grad}
	Let Assumptions \ref{assumption1} and \ref{assumption2} hold and apply \Cref{alg: 1} to solve the problems $(\Sigma^1)$ and $(\mathbb{E}^1)$. Then, the variance of the stochastic gradient satisfies
	\begin{align}\label{eq_approx_grad}
		\mathbb{E}&\left\|\widetilde{\nabla} F\left(z_{t}\right)-\nabla F\left(z_{t}\right)\right\|^{2} \leq \frac{\sigma_{0}^{2}}{|\mathcal{A}_{\tau \left\lfloor t/\tau \right\rfloor}|}+G_{0} \sum_{s=\tau \lfloor t / \tau\rfloor+1}^{t} \frac{1}{|\mathcal{A}_{s}|} \mathbb{E}\left\|z_{s}-z_{s-1}\right\|^{2},
	\end{align}
	\noindent where
	\begin{align}\label{eq_G0_sigma0}
		G_{0}:=&2\left(l_{g}^{4} L_{f}^{2}+l_{f}^{2} L_{g}^{2}\right),\nonumber\\ 
		\sigma_{0}^{2}:=&\left\{ \begin{gathered}
			0; \text{For the problem } (\Sigma^1) \hfill \\
			2\left(l_{g}^{2} L_{f}^{2} \sigma_{g}^{2}+l_{f}^{2} \sigma_{g^{\prime}}^{2}\right); \text{For the problem }(\mathbb{E}^1) \hfill \\ 
		\end{gathered}  \right.
	\end{align}
Note that when $t=m\tau$ for some positive integer $m$, the summation becomes $\sum_{s=m\tau+1}^{m\tau}$, which is 0 by default. 
\end{lemma}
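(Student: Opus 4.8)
The plan is to reduce the composite gradient error to the estimation errors of the two SpiderBoost estimators $\widetilde{g}_t$ and $\widetilde{g}_t'$, and then control each of those errors by the standard SPIDER/SARAH telescoping argument. First I would split the error at the gradient level as
\[
\widetilde{\nabla}F(z_t) - \nabla F(z_t) = \big(\widetilde{g}_t' - g'(z_t)\big)^\top \nabla f(\widetilde{g}_t) + g'(z_t)^\top\big(\nabla f(\widetilde{g}_t) - \nabla f(g(z_t))\big).
\]
Applying $\|a+b\|^2 \le 2\|a\|^2 + 2\|b\|^2$, then using $\|\nabla f(\widetilde{g}_t)\| \le l_f$ (from $f$ being $l_f$-Lipschitz), $\|g'(z_t)\| \le l_g$ (from $g$ being $l_g$-Lipschitz), and the $L_f$-Lipschitz continuity of $\nabla f$, I obtain the deterministic bound
\[
\|\widetilde{\nabla}F(z_t) - \nabla F(z_t)\|^2 \le 2 l_f^2 \|\widetilde{g}_t' - g'(z_t)\|^2 + 2 l_g^2 L_f^2 \|\widetilde{g}_t - g(z_t)\|^2.
\]
This isolates the value-estimate error $e_t := \widetilde{g}_t - g(z_t)$ and the Jacobian-estimate error $e_t' := \widetilde{g}_t' - g'(z_t)$, which are handled by identical recursions.

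Next I would set up the martingale recursion for $e_t$ (the argument for $e_t'$ is verbatim with $g_\xi'$, $L_g$, $\sigma_{g'}$ replacing $g_\xi$, $l_g$, $\sigma_g$). Let $\mathcal{F}_{t-1}$ be the $\sigma$-algebra generated by all mini-batch samples drawn before iteration $t$; since $z_t$ and $z_{t-1}$ are computed before $\mathcal{A}_t$ is drawn, both are $\mathcal{F}_{t-1}$-measurable. For a non-checkpoint iteration the recursive update gives
\[
e_t = e_{t-1} + \Big(\tfrac{1}{|\mathcal{A}_t|}\textstyle\sum_{\xi\in\mathcal{A}_t}\big(g_\xi(z_t)-g_\xi(z_{t-1})\big) - \big(g(z_t)-g(z_{t-1})\big)\Big).
\]
Because the parenthesized increment has conditional mean zero given $\mathcal{F}_{t-1}$ while $e_{t-1}$ is $\mathcal{F}_{t-1}$-measurable, the cross term vanishes under $\mathbb{E}[\,\cdot\mid\mathcal{F}_{t-1}]$, and the variance of the sample-mean increment is at most $\tfrac{1}{|\mathcal{A}_t|}\mathbb{E}_\xi\|g_\xi(z_t)-g_\xi(z_{t-1})\|^2 \le \tfrac{l_g^2}{|\mathcal{A}_t|}\|z_t-z_{t-1}\|^2$ by $l_g$-Lipschitz continuity. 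Unrolling from the most recent checkpoint $s_0 := \tau\lfloor t/\tau\rfloor$ then yields
\[
\mathbb{E}\|e_t\|^2 \le \mathbb{E}\|e_{s_0}\|^2 + l_g^2 \sum_{s=s_0+1}^{t} \frac{1}{|\mathcal{A}_s|}\mathbb{E}\|z_s - z_{s-1}\|^2,
\]
where at the checkpoint the estimator is recomputed directly: for $(\Sigma^1)$ the full batch gives $e_{s_0}=0$, while for $(\mathbb{E}^1)$ Assumption \ref{assumption2} gives $\mathbb{E}\|e_{s_0}\|^2 \le \sigma_g^2/|\mathcal{A}_{s_0}|$ (and analogously $\sigma_{g'}^2/|\mathcal{A}_{s_0}|$ for $e_{s_0}'$).

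Finally I would substitute the two unrolled bounds into the deterministic inequality from the first step. The checkpoint contributions combine into $2(l_g^2 L_f^2 \sigma_g^2 + l_f^2 \sigma_{g'}^2)/|\mathcal{A}_{s_0}| = \sigma_0^2/|\mathcal{A}_{\tau\lfloor t/\tau\rfloor}|$, and the accumulated increments combine into $2(l_g^4 L_f^2 + l_f^2 L_g^2)\sum_{s=s_0+1}^{t}\tfrac{1}{|\mathcal{A}_s|}\mathbb{E}\|z_s-z_{s-1}\|^2 = G_0\sum_{s=s_0+1}^{t}\tfrac{1}{|\mathcal{A}_s|}\mathbb{E}\|z_s-z_{s-1}\|^2$, which is exactly \eqref{eq_approx_grad}. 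The main obstacle is the careful bookkeeping of the conditioning: one must verify that $z_t$ and $z_{t-1}$ are measurable with respect to the pre-sampling filtration so that the SPIDER increment is a genuine martingale difference, and that the sampling scheme renders $\tfrac{1}{|\mathcal{A}_t|}\sum_\xi g_\xi(z)$ conditionally unbiased for $g(z)$ — the telescoping of the estimator error, and hence the crucial fact that only \emph{incremental} differences rather than the full gradient variance accumulate, hinges entirely on this. Since the statement is the composition-structured specialization of Lemma~1 of \cite{zhang2019stochastic}, no genuinely new difficulty arises beyond tracking the four Lipschitz constants through the two parallel recursions.
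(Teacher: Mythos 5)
Your proof is correct: the decomposition of $\widetilde{\nabla}F(z_t)-\nabla F(z_t)$ into a Jacobian-error term and a value-error term, followed by the martingale telescoping of each SPIDER estimator from the last checkpoint, recovers exactly the constants $G_0=2(l_g^4L_f^2+l_f^2L_g^2)$ and $\sigma_0^2$ in \eqref{eq_approx_grad}. Note that the paper itself does not prove this lemma but imports it from Lemma 1 of \cite{zhang2019stochastic}; your argument is precisely the standard one underlying that cited result, so there is no substantive divergence in approach.
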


\begin{lemma}\label{lemma_seq_bound_alpha_decay}
	Implement \Cref{alg: 1} with $\alpha_{t}=\frac{2}{t+1}, \beta_t\equiv\beta, \beta\le\lambda_{t}\le(1+\alpha_t)\beta$ to solve the problems $(\Sigma^1)$ and $(\mathbb{E}^1)$. Then, the generated sequences $\{ x_{t}, y_{t}, z_{t}\}$ satisfy
	\begin{equation}\label{eq_xydiff}
		y_{t}-x_{t}=-\Gamma_{t} \sum_{s=1}^{t} \frac{\lambda_{s-1}-\beta}{\Gamma_{s}\lambda_{s-1}} (x_s-x_{s-1}),
	\end{equation}
	\begin{equation}\label{eq_xydiff_bound}
		\left\|y_{t}-x_{t}\right\|^{2} \leq \Gamma_{t} \sum_{s=1}^{t} \frac{(\lambda_{s-1}-\beta)^2}{\alpha_{s} \Gamma_{s}\lambda_{s-1}^2}\left\|x_s-x_{s-1}\right\|^{2},
	\end{equation}
	\begin{equation}\label{eq_zdiff_bound}
		\left\|z_{t+1}-z_{t}\right\|^{2} \leq \frac{2 \beta_{t}^{2}}{\lambda_{t}^2}\left\|x_{t+1}-x_{t}\right\|^{2}+2 \alpha_{t+2}^{2} \Gamma_{t+1} \sum_{s=1}^{t+1} \frac{\left(\lambda_{s-1}-\beta_{s-1}\right)^{2}}{\alpha_{s} \Gamma_{s}\lambda_{s-1}^2}\left\|x_s-x_{s-1}\right\|^{2},
	\end{equation}
 where $\Gamma_t=\frac{2}{t(t+1)}$. When $t=0$, the summation $\sum_{t=1}^{0}$ is 0 by default.
\end{lemma}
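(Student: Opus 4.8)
The plan is to reduce everything to a single scalar-vector recursion for the difference $u_t := y_t - x_t$, solve it in closed form, and then convert that closed form into the two quadratic bounds by a weighted Cauchy--Schwarz inequality and Young's inequality. First I would combine the three updates of \Cref{alg: 1}. Substituting $z_t = (1-\alpha_{t+1})y_t + \alpha_{t+1}x_t$ into $y_{t+1} = z_t + \frac{\beta}{\lambda_t}(x_{t+1}-x_t)$ (using $\beta_t\equiv\beta$) and subtracting $x_{t+1}$, the bare $x_t$ terms cancel and one is left with the first-order recursion
\[
u_{t+1} = (1-\alpha_{t+1})\,u_t - \frac{\lambda_t-\beta}{\lambda_t}\,(x_{t+1}-x_t), \qquad u_0 = 0,
\]
where $u_0=0$ because $y_0=x_0$. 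Unrolling gives $u_t = -\sum_{s=1}^t \big(\prod_{j=s+1}^t(1-\alpha_j)\big)\frac{\lambda_{s-1}-\beta}{\lambda_{s-1}}(x_s-x_{s-1})$, and since $1-\alpha_j = \frac{j-1}{j+1}$ the product telescopes to $\prod_{j=s+1}^t \frac{j-1}{j+1} = \frac{s(s+1)}{t(t+1)} = \frac{\Gamma_t}{\Gamma_s}$. Pulling the common factor $\Gamma_t$ out of the sum produces exactly \eqref{eq_xydiff}.

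For the bound \eqref{eq_xydiff_bound} I would apply Cauchy--Schwarz to the sum in \eqref{eq_xydiff} in the form $\big\|\sum_s v_s\big\|^2 \le \big(\sum_s p_s\big)\big(\sum_s \|v_s\|^2/p_s\big)$, taking $v_s = \frac{\lambda_{s-1}-\beta}{\Gamma_s\lambda_{s-1}}(x_s-x_{s-1})$ and the weights $p_s = \alpha_s\,\Gamma_t/\Gamma_s$. The weight is engineered so that the factors $\|v_s\|^2/p_s$ reproduce precisely the target coefficient $\frac{(\lambda_{s-1}-\beta)^2}{\alpha_s\Gamma_s\lambda_{s-1}^2}$, after which the whole estimate collapses to \eqref{eq_xydiff_bound} provided the prefactor $\Gamma_t\sum_{s=1}^t \frac{\alpha_s}{\Gamma_s}$ equals $1$. \textbf{This normalization identity is the one computation that must be verified and is the crux of the argument:} since $\frac{\alpha_s}{\Gamma_s}= s$, we get $\sum_{s=1}^t s = \frac{t(t+1)}{2} = \Gamma_t^{-1}$, so the prefactor is exactly $1$ and the inequality is in fact tight. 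Recognizing the correct weight $p_s=\alpha_s\Gamma_t/\Gamma_s$ (equivalently, recognizing that the $\Gamma$-sequence is the right summation factor) is the main obstacle; everything downstream of it is mechanical.

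Finally, for \eqref{eq_zdiff_bound} I would avoid expanding $z_{t+1}-z_t$ through all of the $y$ and $x$ terms and instead split it as $z_{t+1}-z_t = (z_{t+1}-y_{t+1}) + (y_{t+1}-z_t)$. The second difference equals $\frac{\beta}{\lambda_t}(x_{t+1}-x_t)$ directly from the $y$-update, while the definition of $z_{t+1}$ gives $z_{t+1}-y_{t+1} = \alpha_{t+2}(x_{t+1}-y_{t+1}) = -\alpha_{t+2}u_{t+1}$; hence $z_{t+1}-z_t = -\alpha_{t+2}u_{t+1} + \frac{\beta}{\lambda_t}(x_{t+1}-x_t)$. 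Applying $\norm{a+b}^2 \le 2\norm{a}^2 + 2\norm{b}^2$ and then substituting the already-proved \eqref{eq_xydiff_bound} for $\norm{u_{t+1}}^2$ (with $t$ replaced by $t+1$ and $\beta=\beta_{s-1}$) yields \eqref{eq_zdiff_bound} verbatim, completing the proof.
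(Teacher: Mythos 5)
Your proposal is correct and follows essentially the same route as the paper's proof: the same first-order recursion for $y_t-x_t$ with closed-form solution \eqref{eq_xydiff} (you unroll it via the telescoping product $\prod_{j=s+1}^{t}\frac{j-1}{j+1}=\Gamma_t/\Gamma_s$, the paper simply verifies the solution), the same convexity argument for \eqref{eq_xydiff_bound} (your weighted Cauchy--Schwarz with weights $p_s=\alpha_s\Gamma_t/\Gamma_s$ summing to $1$ is exactly the paper's Jensen inequality with the convex combination $\sum_{s=1}^{t}\frac{\Gamma_t\alpha_s}{\Gamma_s}=1$), and the same identity $z_{t+1}-z_t=\frac{\beta_t}{\lambda_t}(x_{t+1}-x_t)-\alpha_{t+2}(y_{t+1}-x_{t+1})$ followed by $\|a+b\|^2\le 2\|a\|^2+2\|b\|^2$ for \eqref{eq_zdiff_bound}. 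The only quibble is your closing remark that the inequality in \eqref{eq_xydiff_bound} is ``tight'': what is exact is the normalization $\Gamma_t\sum_{s=1}^t\alpha_s/\Gamma_s=1$, not the Jensen step itself, but this does not affect the validity of the argument.
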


\begin{proof}
	Based on the update rules in Algorithm \ref{alg: 1},
	\begin{align}
		 y_{t+1}- x_{t+1}=& z_{t}+\frac{\beta_{t}}{\lambda_{t}}( x_{t+1}- x_{t})- x_{t+1} \nonumber \\
		=&\left(1-\alpha_{t+1}\right) y_{t}+\alpha_{t+1} x_{t} +\frac{\beta_{t}}{\lambda_{t}}( x_{t+1}- x_{t})- x_{t+1} \nonumber \\
		=&\left(1-\alpha_{t+1}\right) ( y_{t}- x_{t}) +\left(\frac{\beta_{t}}{\lambda_{t}}-1\right)( x_{t+1}- x_{t}). \nonumber 
	\end{align}
	\indent Taking the above equality as a difference equation about $ y_{t}- x_{t}$ with initial condition $ y_0- x_0= 0$, it can be verified that Eq. \eqref{eq_xydiff} is the solution. Hence,
	\begin{align}
	\| y_{t}- x_t\|^2=&\left\|\Gamma_{t}\sum_{s=1}^{t} \frac{\lambda_{s-1}-\beta}{\Gamma_{s}\lambda_{s-1}} (x_s-x_{s-1}) \right\|^2 \nonumber \\
	=&\left\|\sum_{s=1}^{t} \frac{\Gamma_{t}\alpha_{s}}{\Gamma_{s}} \frac{\lambda_{s-1}-\beta}{\alpha_{s}\lambda_{s-1}} (x_s-x_{s-1}) \right\|^2 \nonumber \\
	\le&\sum_{s=1}^{t} \frac{\Gamma_{t}\alpha_{s}}{\Gamma_{s}} \left\|\frac{\lambda_{s-1}-\beta}{\alpha_{s}\lambda_{s-1}} (x_s-x_{s-1}) \right\|^2 \nonumber \\
	=&\sum_{s=1}^{t} \frac{\Gamma_{t}(\lambda_{s-1}-\beta)^2}{\Gamma_{s}\alpha_{s}\lambda_{s-1}^2} \left\| x_s-x_{s-1} \right\|^2, \nonumber 
	\end{align}
	\noindent where $\le$ applies Jensen's inequality to the convex function $\|\bullet\|^2$ with $\sum_{s=1}^{t} \frac{\Gamma_{t}\alpha_{s}}{\Gamma_{s}}=1$. This proves Eq. \eqref{eq_xydiff_bound}.
	
	\indent It can be derived from the update rules in Algorithm \ref{alg: 1} that $z_{t+1}- z_t= \beta_{t}(x_{t+1}-x_t)/\lambda_{t}-\alpha_{t+2}( y_{t+1}- x_{t+1})$. Hence,
	\begin{align}
		\| z_{t+1}- z_t\|^2=&\left\| \frac{\beta_t}{\lambda_t}(x_{t+1}- x_t)-\alpha_{t+2}( y_{t+1}- x_{t+1}) \right\|^2 \nonumber \\
		\le&2\left\| \frac{\beta_t}{\lambda_t}(x_{t+1}- x_t) \right\|^2+2\alpha_{t+2}^2\left\|y_{t+1}- x_{t+1}\right\|^2 \nonumber \\
		\le&\frac{2\beta_t^2}{\lambda_t^2}\left\|x_{t+1}- x_t \right\|^2 +2\alpha_{t+2}^2\Gamma_{t+1} \sum_{s=1}^{t+1} \frac{(\lambda_{s-1}-\beta)^2}{\alpha_{s} \Gamma_{s}\lambda_{s-1}^2}\left\|x_s-x_{s-1}\right\|^{2}, \nonumber 
	\end{align}
	\noindent which proves Eq. \eqref{eq_zdiff_bound}. 
\end{proof}

\section{Proof of \Cref{thm_alpha_decay_single}}\label{sec_proof_thm_alpha_decay_single}
\indent Since $ x_{t+1}$ is the minimizer of the function $\widetilde{r}( x):=r( x)+\frac{1}{2\lambda_t}\| x- x_{t}+\lambda_{t}{\widetilde{\nabla}F}( z_{t})\|^2$, which is $\lambda_{t}^{-1}$-strongly convex as $r$ is convex, we have
\begin{align}\label{eq_r_convex}
	&\widetilde{r}( x_t)\ge\widetilde{r}( x_{t+1})+\frac{1}{2\lambda_t}\| x_{t+1}- x_t\|^2\nonumber\\
	\Rightarrow&r( x_t)+\frac{\lambda_t}{2}\|{\widetilde{\nabla}F}( z_{t})\|^2 \ge r( x_{t+1})+\frac{1}{2\lambda_t}\| x_{t+1}- x_{t}+\lambda_{t}{\widetilde{\nabla}F}( z_{t})\|^2+\frac{1}{2\lambda_t}\| x_{t+1}- x_t\|^2\nonumber\\
	\Rightarrow&r( x_{t+1})-r( x_t)\le-\frac{1}{\lambda_t}\| x_{t+1}- x_t\|^2-\left<{\widetilde{\nabla}F}( z_{t}),  x_{t+1}- x_{t}\right>. 
\end{align}
\indent Since $ \nabla F$ is $L_F$-Lipschitz, we obtain that
\begin{equation}\label{eq_f_Taylor}
	F( x_{t+1})-F( x_t)\le\left< \nabla F( x_{t}), x_{t+1}- x_t\right>+\frac{L_F}{2}\| x_{t+1}- x_t\|^2.
\end{equation}
\indent Adding up Eqs. \eqref{eq_r_convex}$\&$\eqref{eq_f_Taylor} and taking expectation, we further obtain that
\begin{align}\label{eq_adjacent_Phi_diff_bound}
\mathbb{E}\Phi( x_{t+1})-\mathbb{E}\Phi( x_t)\le& \mathbb{E}\left< \nabla F( x_{t})-{\widetilde{\nabla}F}( z_{t}), x_{t+1}- x_t\right>+\left(\frac{L_F}{2}-\frac{1}{\lambda_t}\right)\mathbb{E}\| x_{t+1}- x_t\|^2 \nonumber \\
=& \mathbb{E}\left< \nabla F( z_{t})-{\widetilde{\nabla}F}( z_{t}), x_{t+1}- x_t\right>+\mathbb{E}\left< \nabla F( x_{t})- \nabla F( z_{t}), x_{t+1}- x_t\right> \nonumber \\
&+\left(\frac{L_F}{2}-\frac{1}{\lambda_t}\right)\mathbb{E}\| x_{t+1}- x_t\|^2 \nonumber \\
\le& \mathbb{E}\left(\left\| \nabla F(z_{t})-{\widetilde{\nabla}F}(z_{t})\right\| \left\|x_{t+1}-x_t\right\|\right)+\mathbb{E}\left( \left\|\nabla F( x_{t})- \nabla F( z_{t})\right\| \left\|x_{t+1}- x_t\right\|\right) \nonumber \\
&+\left(\frac{L_F}{2}-\frac{1}{\lambda_t}\right)\mathbb{E}\| x_{t+1}- x_t\|^2 \nonumber \\
\le& \frac{\lambda_t}{2}\mathbb{E}\left\| \nabla F(z_{t})-{\widetilde{\nabla}F}(z_{t})\right\|^2+ \frac{1}{2\lambda_t}\mathbb{E}\left\|x_{t+1}-x_t\right\|^2 \nonumber \\
&+L_F\mathbb{E}\left( \left\|x_{t}-z_{t}\right\| \left\|x_{t+1}- x_t\right\|\right)+\left(\frac{L_F}{2}-\frac{1}{\lambda_t}\right)\mathbb{E}\| x_{t+1}- x_t\|^2 \nonumber \\
\le& \frac{\lambda_t}{2}\mathbb{E}\left\| \nabla F(z_{t})-{\widetilde{\nabla}F}(z_{t})\right\|^2+ \frac{1}{2\lambda_t}\mathbb{E}\left\|x_{t+1}-x_t\right\|^2 \nonumber \\
&+L_F(1-\alpha_{t+1})\mathbb{E}\left( \left\|y_{t}-x_{t}\right\| \left\|x_{t+1}- x_t\right\|\right)+\left(\frac{L_F}{2}-\frac{1}{\lambda_t}\right)\mathbb{E}\| x_{t+1}- x_t\|^2 \nonumber \\
\le& \frac{\lambda_t}{2}\mathbb{E}\left\| \nabla F(z_{t})-{\widetilde{\nabla}F}(z_{t})\right\|^2+ \frac{1}{2\lambda_t}\mathbb{E}\left\|x_{t+1}-x_t\right\|^2 \nonumber \\
&+\frac{L_F}{2}\mathbb{E} \left\|y_{t}-x_{t}\right\|^2 +\frac{L_F}{2}\mathbb{E}\left\|x_{t+1}- x_t\right\|^2+\left(\frac{L_F}{2}-\frac{1}{\lambda_t}\right)\mathbb{E}\| x_{t+1}- x_t\|^2 \nonumber \\
\le& \frac{\lambda_t}{2}\mathbb{E}\left\| \nabla F(z_{t})-{\widetilde{\nabla}F}(z_{t})\right\|^2+\left(L_F-\frac{1}{2\lambda_t}\right)\mathbb{E}\| x_{t+1}- x_t\|^2 \nonumber  \nonumber \\
&+\frac{L_F\Gamma_{t}}{2} \sum_{s=1}^{t} \frac{(\lambda_{s-1}-\beta)^2}{\alpha_{s} \Gamma_{s}\lambda_{s-1}^2}\left\|x_s-x_{s-1}\right\|^{2},
\end{align}
\noindent where the last inequality uses Eq. \eqref{eq_xydiff_bound}.
Telescoping the above inequality over $t$ from $0$ to $T-1$ yields that
\begin{align}\label{eq_Phi_Taylor}
\mathbb{E}\Phi(x_{T})-\Phi(x_0)\le& \sum_{t=0}^{T-1}  \frac{\lambda_t}{2}\mathbb{E}\left\| \nabla F(z_{t})-{\widetilde{\nabla}F}(z_{t})\right\|^2 + \sum_{t=0}^{T-1} \left(L_F-\frac{1}{2\lambda_t}\right)\mathbb{E}\| x_{t+1}- x_t\|^2 \nonumber \\
&+\sum_{t=0}^{T-1} \frac{L_F\Gamma_{t}}{2} \sum_{s=1}^{t} \frac{(\lambda_{s-1}-\beta)^2}{\alpha_{s} \Gamma_{s}\lambda_{s-1}^2}\left\|x_s-x_{s-1}\right\|^{2} \nonumber \\
\stackrel{(i)}{\le}& \sum_{t=0}^{T-1}  \frac{\lambda_t}{2}\left[G_{0} \sum_{s=\tau\left\lfloor t/\tau\right\rfloor+1}^{t} \frac{1}{|\mathcal{A}_{s}|} \mathbb{E}\left\| z_{s}- z_{s-1}\right\|^{2}+\frac{\sigma_{0}^{2}}{|\mathcal{A}_{\tau\left\lfloor t/\tau\right\rfloor}|}\right] \nonumber \\
&+\sum_{t=0}^{T-1} \left(L_F-\frac{1}{2\lambda_t}\right)\mathbb{E}\| x_{t+1}- x_t\|^2 \nonumber \\
&+\frac{L_F}{2}\sum_{s=1}^{T-1} \frac{(\lambda_{s-1}-\beta)^2}{\alpha_{s} \Gamma_{s}\lambda_{s-1}^2}\left(\sum_{t=s}^{T-1}\Gamma_{t}\right)\mathbb{E}\left\|x_s-x_{s-1}\right\|^{2} \nonumber \\
\stackrel{(ii)}{=}& \frac{G_0}{2} \sum_{s=1}^{T-1} \frac{1}{|\mathcal{A}_{s}|} \mathbb{E}\left\|z_{s}- z_{s-1}\right\|^{2} \sum_{t=s}^{\min\left(\tau\left\lfloor \frac{s-1}{\tau}\right\rfloor+\tau-1, T-1\right)} \lambda_t+\frac{\sigma_{0}^{2}}{2}\sum_{t=0}^{T-1}\frac{\lambda_t}{|\mathcal{A}_{\tau\left\lfloor t/\tau\right\rfloor}|} \nonumber \\
&+\sum_{t=0}^{T-1} \left(L_F-\frac{1}{2\lambda_t}\right)\mathbb{E}\| x_{t+1}- x_t\|^2 \nonumber \\
&+L_F\sum_{s=1}^{T-1} \frac{(\lambda_{s-1}-\beta)^2}{\alpha_{s} \Gamma_{s}\lambda_{s-1}^2}\sum_{t=s}^{T-1}\left( \frac{1}{t}-\frac{1}{t+1}\right)\mathbb{E}\left\|x_s-x_{s-1}\right\|^{2} \nonumber \\
\stackrel{(iii)}{\le}& \frac{G_0}{2}\sum_{s=1}^{T-1}\frac{1}{\tau}\mathbb{E}\|z_s-z_{s-1}\|^2 \left(\tau\left\lfloor \frac{s-1}{\tau}\right\rfloor+\tau-1-(s-1)\right)(2\beta) \nonumber \\
&+\frac{T\beta\sigma_{0}^2}{|\mathcal{A}_0|}+\sum_{t=0}^{T-1}\left(L_F-\frac{1}{4\beta}\right)\mathbb{E}\|x_{t+1}-x_t\|^2 \nonumber \\
&+\frac{L_F}{4}\sum_{s=1}^{T-1}s(s+1)^2\left(\frac{\alpha_{s-1}}{1+\alpha_{s-1}}\right)^2\mathbb{E}\|x_s-x_{s-1}\|^2\left(\frac{1}{s}-\frac{1}{T}\right) \nonumber \\
\stackrel{(iv)}{\le}& \beta G_0\sum_{s=1}^{T-1}\left[
\frac{2\beta^2}{\lambda_{s-1}^2}\mathbb{E}\left\|x_s- x_{s-1} \right\|^2+2\alpha_{s+1}^2 \Gamma_{s} \sum_{t=1}^{s} \frac{(\lambda_{t-1}-\beta)^2}{\alpha_{t} \Gamma_{t}\lambda_{t-1}^2}\mathbb{E}\left\|x_t-x_{t-1}\right\|^{2}
\right] \nonumber \\
&+\frac{T\beta\sigma_{0}^2}{|\mathcal{A}_0|}+\sum_{t=0}^{T-1}\left(L_F-\frac{1}{4\beta}\right)\mathbb{E}\|x_{t+1}-x_t\|^2 \nonumber \\
&+\frac{L_F}{4}\sum_{s=1}^{T-1}s(s+1)^2\frac{4}{(s+2)^2}\mathbb{E}\|x_s-x_{s-1}\|^2\left(\frac{1}{s}-\frac{1}{T}\right) \nonumber \\
\stackrel{(v)}{\le}& 2\beta G_0\sum_{s=1}^{T-1} \mathbb{E}\left\|x_s- x_{s-1} \right\|^2 \nonumber \\
&+2\beta G_0\sum_{t=1}^{T-1} \sum_{s=t}^{T-1} \frac{4}{(s+2)^2}\frac{2}{s(s+1)}\frac{t+1}{2}\frac{t(t+1)}{2}\left(\frac{\alpha_{t-1}}{1+\alpha_{t-1}}\right)^2
\mathbb{E}\left\|x_t-x_{t-1}\right\|^{2} \nonumber \\
&+\frac{T\beta\sigma_{0}^2}{|\mathcal{A}_0|}+\sum_{t=0}^{T-1}\left(L_F-\frac{1}{4\beta}\right)\mathbb{E}\|x_{t+1}-x_t\|^2+L_F\sum_{t=0}^{T-2}\mathbb{E}\|x_{t+1}-x_{t}\|^2 \nonumber \\
\stackrel{(vi)}{\le}& 2\beta G_0\sum_{t=0}^{T-2} \mathbb{E}\left\|x_{t+1}-x_t\right\|^2+\beta G_0\sum_{t=1}^{T-1} (t+1)\frac{4}{(t+2)^2}
\mathbb{E}\left\|x_t-x_{t-1}\right\|^{2} \nonumber \\
&+\frac{T\beta\sigma_{0}^2}{|\mathcal{A}_0|}+\sum_{t=0}^{T-1}\left(2L_F-\frac{1}{4\beta}\right)\mathbb{E}\|x_{t+1}-x_t\|^2 \nonumber \\
\le& \frac{T\beta\sigma_{0}^2}{|\mathcal{A}_0|}+\left(3\beta G_0+2L_F-\frac{1}{4\beta}\right)\sum_{t=0}^{T-1}\mathbb{E}\|x_{t+1}-x_t\|^2,
\end{align}
\noindent where (i) follows from Lemma \ref{lemma_approx_grad} and switches the last two summations, (ii) switches the first two summations, (iii) uses the hyperparameter choices that $\beta\le\lambda_{t}\le(1+\alpha_{t})\beta\le 2\beta$, $|\mathcal{A}_t|\ge\tau$ and $|\mathcal{A}_{m\tau}|\equiv |\mathcal{A}_0| (m\in\mathbb{N})$, (iv) uses Eq. \eqref{eq_zdiff_bound}, $\frac{1}{\tau}\left(\tau\left\lfloor \frac{s-1}{\tau}\right\rfloor+\tau-1-(s-1)\right)\le 1$ and $\beta\le\lambda_{t}\le(1+\alpha_{t})\beta\le 2\beta$, (v) switches the first two summations and replaces $s$ with $t+1$ in the last summation, (vi) uses the following inequality
\begin{align}\label{eq_sum_s_bound}
	\sum_{s=t}^{T-1}\frac{1}{s(s+1)(s+2)^2}=&\frac{1}{2}\sum_{s=t}^{T-1}\left(\frac{1}{s}-\frac{1}{s+2}\right)\left(\frac{1}{s+1}-\frac{1}{s+2}\right) \nonumber \\
	=&\frac{1}{2}\sum_{s=t}^{T-1}\left[\left(\frac{1}{s}-\frac{1}{s+1}\right)-\left(\frac{1}{s+1}-\frac{1}{s+2}\right)-\frac{1}{2}\left(\frac{1}{s}-\frac{1}{s+2}\right)+\frac{1}{(s+2)^2}\right] \nonumber \\
	=&\frac{1}{2}\left[\left(\frac{1}{t}-\frac{1}{T}\right)-\left(\frac{1}{t+1}-\frac{1}{T+1}\right)-\frac{1}{2}\left(\frac{1}{t}+\frac{1}{t+1}-\frac{1}{T}-\frac{1}{T+1}\right)\right. \nonumber \\
	&+\left.\int_{t-1}^{T-1}\frac{ds}{(s+2)^2}\right] \nonumber \\
	=&\frac{1}{4t}-\frac{3}{4(t+1)}-\frac{1}{4T}+\frac{3}{4(T+1)}+\frac{1}{2}\left(\frac{1}{t+1}-\frac{1}{T+1}\right) \nonumber \\
	\le&\frac{1}{4t(t+1)}.
\end{align} 
\noindent Let $3\beta G_0+2L_F-\frac{1}{4\beta}\le-\frac{1}{8\beta}$, which together with $\beta>0$ implies $0<\beta\le\frac{1}{2\sqrt{16L_F^2+6G_0}+8L_F}$. Then Eq. \eqref{eq_Phi_Taylor} further indicates that
\begin{equation}\label{sum_xdiff_bound}
	\sum_{t=0}^{T-1}\mathbb{E}\|x_{t+1}-x_t\|^2 \le \frac{8T\beta^2\sigma_{0}^2}{|\mathcal{A}_0|}+8\beta\left[\Phi\left(x_{0}\right)-\mathbb{E}\Phi(x_T)\right].
\end{equation}
\noindent Therefore, let $\zeta$ be sampled from $\{0,...,T-1 \}$ uniformly at random, we obtain that
\begin{align}\label{eq_Egrad_bound2}
	\mathbb{E} \|{\mathcal{G}}_{\lambda_\zeta}( z_\zeta)\|^2=&\frac{1}{T}\sum_{t=0}^{T-1} \mathbb{E}\|{\mathcal{G}}_{\lambda_t}( z_t)\|^2 \nonumber \\
	\stackrel{(i)}{=}&\frac{1}{T}\sum_{t=0}^{T-1} \lambda_t^{-2}\mathbb{E}\left\| z_t-\operatorname{prox}_{\lambda_t r}\left[ z_t-\lambda_t\nabla F( z_t)\right]\right\|^2 \nonumber \\
	\stackrel{(ii)}{ =}&\frac{9}{T}\sum_{t=0}^{T-1} \lambda_t^{-2}\mathbb{E}\left\|\frac{1}{3}(z_t-x_t)+\frac{1}{3}( x_t- x_{t+1})\right. \nonumber \\
	&+\left.\frac{1}{3}\left\{\operatorname{prox}_{\lambda_{t}r}[ x_{t}-\lambda_{t}{\widetilde{\nabla}F}( z_{t})]-\operatorname{prox}_{\lambda_t r}\left[ z_t-\lambda_t\nabla F( z_t)\right]\right\}\right\|^2 \nonumber \\
	\stackrel{(iii)}{\le}&\frac{3}{T}\sum_{t=0}^{T-1} \lambda_t^{-2}\mathbb{E}\| z_t- x_t\|^2+\frac{3}{T}\sum_{t=0}^{T-1} \lambda_t^{-2}\mathbb{E}\| x_{t+1}- x_t\|^2 \nonumber \\
	&+\frac{3}{T}\sum_{t=0}^{T-1} \lambda_t^{-2}\mathbb{E}\left\|( x_{t}- z_t)+\lambda_{t}\left[\nabla F( z_t)-{\widetilde{\nabla}F}( z_{t})]\right]\right\|^2 \nonumber \\
	\stackrel{(iv)}{\le}&\frac{9}{T}\sum_{t=0}^{T-1} \lambda_t^{-2}(1-\alpha_{t+1})^2\mathbb{E} \| y_t- x_t\|^2+\frac{3}{T}\sum_{t=0}^{T-1} \lambda_t^{-2}\mathbb{E}\| x_{t+1}- x_t\|^2+\frac{6}{T}\sum_{t=0}^{T-1} \mathbb{E}\|\nabla F( z_t)-{\widetilde{\nabla}F}( z_{t})\|^2 \nonumber \\
	\stackrel{(v)}{\le}&\frac{9}{T}\sum_{t=1}^{T-1} \lambda_t^{-2}(1-\alpha_{t+1})^2 \Gamma_{t} \sum_{s=1}^{t} \frac{(\lambda_{s-1}-\beta)^2}{\alpha_{s} \Gamma_{s}\lambda_{s-1}^2}\mathbb{E}\left\|x_s-x_{s-1}\right\|^{2} + \frac{3}{T}\sum_{t=1}^{T} \lambda_{t-1}^{-2}\mathbb{E}\| x_{t}- x_{t-1}\|^2 \nonumber \\
	&+\frac{6}{T}\sum_{t=0}^{T-1} \left[G_{0} \sum_{s=\tau\left\lfloor t/\tau\right\rfloor+1}^{t} \frac{1}{|\mathcal{A}_s|} \mathbb{E}\left\| z_{s}- z_{s-1}\right\|^{2}+\frac{\sigma_{0}^{2}}{|\mathcal{A}_{\tau\left\lfloor t/\tau\right\rfloor}|}\right] \nonumber \\
	\stackrel{(vi)}{\le}&\frac{9}{T}\sum_{s=1}^{T-1} \sum_{t=s}^{T-1} \beta^{-2} \frac{2}{t(t+1)} \frac{s(s+1)^2}{4} \left(\frac{\alpha_{s-1}}{1+\alpha_{s-1}}\right)^2 \mathbb{E}\left\| x_s- x_{s-1}\right\|^2 \nonumber \\
	&+\frac{3}{T}\sum_{t=1}^{T} \beta^{-2}\mathbb{E}\|x_{t}- x_{t-1}\|^2 \nonumber \\
	&+\frac{6G_0}{T} \sum_{s=1}^{T-1} \frac{1}{\tau} \mathbb{E}\left\|z_{s}- z_{s-1}\right\|^{2} \sum_{t=s}^{\min\left(\tau\left\lfloor \frac{s-1}{\tau}\right\rfloor+\tau-1, T-1\right)} 1+\frac{6\sigma_{0}^{2}}{|\mathcal{A}_0|} \nonumber \\
	\le& \frac{9}{2T\beta^2}\sum_{s=1}^{T-1} s(s+1)^2 \left(\frac{2/s}{1+2/s}\right)^2 \mathbb{E}\left\| x_s- x_{s-1}\right\|^2 \sum_{t=s}^{T-1}\left(\frac{1}{t}-\frac{1}{t+1}\right) \nonumber \\
	&+\frac{3}{T}\sum_{t=1}^{T} \beta^{-2}\mathbb{E}\|x_{t}- x_{t-1}\|^2+ \frac{6G_0}{T} \sum_{s=1}^{T-1} \frac{\tau-1}{\tau} \mathbb{E}\left\|z_{s}- z_{s-1}\right\|^{2}+\frac{6\sigma_{0}^{2}}{|\mathcal{A}_0|} \nonumber \\
	\stackrel{(vii)}{\le}& \frac{18}{T\beta^2}\sum_{s=1}^{T-1} \frac{s(s+1)^2}{(s+2)^2} \mathbb{E}\left\| x_s- x_{s-1}\right\|^2 \left(\frac{1}{s}-\frac{1}{T}\right)+\frac{3}{T\beta^2}\sum_{t=1}^{T} \mathbb{E}\|x_{t}- x_{t-1}\|^2+\frac{6\sigma_{0}^{2}}{|\mathcal{A}_0|} \nonumber \\
	&+\frac{6G_0}{T} \sum_{s=1}^{T-1} \left[\frac{2 \beta^{2}}{\lambda_{s-1}^2}\mathbb{E}\left\|x_{s}-x_{s-1}\right\|^{2}+2 \alpha_{s+1}^{2} \Gamma_{s} \sum_{t=1}^{s} \frac{\left(\lambda_{t-1}-\beta\right)^{2}}{\alpha_{t} \Gamma_{t}\lambda_{t-1}^2}\mathbb{E}\left\|x_t-x_{t-1}\right\|^{2}\right] \nonumber \\
	\stackrel{(viii)}{\le}& \frac{18}{T\beta^2}\sum_{s=1}^{T-1}  \mathbb{E}\left\| x_s- x_{s-1}\right\|^2 + \frac{3}{T\beta^2}\sum_{s=1}^{T} \mathbb{E}\|x_{s}- x_{s-1}\|^2+\frac{6\sigma_{0}^{2}}{|\mathcal{A}_0|} +\frac{12G_0}{T}\sum_{s=1}^{T-1}\mathbb{E}\left\|x_{s}-x_{s-1}\right\|^{2} \nonumber \\ &+\frac{12G_0}{T}\sum_{t=1}^{T-1}\sum_{s=t}^{T-1}\frac{4}{(s+2)^2}\frac{2}{s(s+1)}\frac{t+1}{2}\frac{t(t+1)}{2}\left(\frac{\alpha_{t-1}}{1+\alpha_{t-1}}\right)^2\mathbb{E}\left\|x_t-x_{t-1}\right\|^{2} \nonumber \\
	\stackrel{(ix)}{\le}& \left(\frac{21}{T\beta^2}+\frac{12G_0}{T}\right) \sum_{s=1}^{T} \mathbb{E}\|x_{s}-x_{s-1}\|^2+\frac{6\sigma_{0}^{2}}{|\mathcal{A}_0|} \nonumber \\
	&+\frac{24G_0}{T}\sum_{t=1}^{T-1} \frac{1}{4t(t+1)}
	t(t+1)^2\left(\frac{2}{2+t}\right)^2\mathbb{E}\left\|x_t-x_{t-1}\right\|^{2} \nonumber \\
	\le& \left(\frac{21}{T\beta^2}+\frac{12G_0}{T}\right) \sum_{s=1}^{T} \mathbb{E}\|x_{s}-x_{s-1}\|^2+\frac{6\sigma_{0}^{2}}{|\mathcal{A}_0|}+\frac{6G_0}{T}
	\sum_{t=1}^{T-1}\mathbb{E}\left\|x_t-x_{t-1}\right\|^{2} \nonumber \\
	\stackrel{(x)}{\le}& \left(\frac{21}{T\beta^2}+\frac{20G_0}{T}\right) \left[\frac{8T\beta^2\sigma_{0}^2}{|\mathcal{A}_0|}+8\beta\left[\Phi\left(x_{0}\right)-\mathbb{E}\Phi(x_{T})\right]\right]+\frac{6\sigma_{0}^{2}}{|\mathcal{A}_0|} \nonumber \\
	\stackrel{(xi)}{\le}&\frac{22}{T\beta^2} \left[\frac{8T\beta^2\sigma_{0}^2}{|\mathcal{A}_0|}+8\beta\left[\Phi\left(x_{0}\right)-\mathbb{E}\Phi(x_{T})\right]\right]+\frac{6\sigma_{0}^{2}}{|\mathcal{A}_0|} \nonumber \\
	\le&\frac{182\sigma_{0}^2}{|\mathcal{A}_0|}+\frac{176}{T\beta}\left[\Phi\left(x_{0}\right)-\mathbb{E}\Phi(x_{T})\right], 
\end{align}
\noindent where (i) follows from Eq. \eqref{eq_prox_grad}, (ii) uses $x_{t+1}=\operatorname{prox}_{\lambda_{t}r}[ x_{t}-\lambda_{t}{\widetilde{\nabla}F}( z_{t})]$ in Algorithm \ref{alg: 1}, (iii) uses Jensen's inequality and the non-expansive property of proximal operator (See Section 31 of \cite{pryce1973r} for detail) and the fact that $ y_0= x_0$, (iv) uses the inequality $\| u+ v\|^2\le 2\| u\|^2+2\| v\|^2$ and then $z_t-x_t=(1-\alpha_{t+1})(y_t-x_t)$, (v) uses Eq. \eqref{eq_xydiff_bound} and Lemma \ref{lemma_approx_grad}, (vi) swaps the order of the first two summations as well as that of the last two summations and uses hyperparameter choices that $\beta\le\lambda_{t}\le(1+\alpha_{t})\beta\le 2\beta$, $|\mathcal{A}_t|\ge\tau$ and $|\mathcal{A}_{m\tau}|\equiv |\mathcal{A}_0| (m\in\mathbb{N})$, (vii) uses Eq. \eqref{eq_zdiff_bound}, (viii) swaps the last two summations and uses $\lambda_{s-1}\ge\beta>0$, (ix) uses Eq. \eqref{eq_sum_s_bound}, (x) uses Eq. \eqref{sum_xdiff_bound}, (xi) uses the following inequality $$\beta\le\frac{1}{2\sqrt{16L_F^2+6G_0}+8L_F}\le\frac{1}{2\sqrt{6G_0}}\Rightarrow G_0\le\frac{1}{24\beta^2}.$$
\indent In \Cref{thm_alpha_decay_single}, by substituting the hyperparameter choices specified in item 2 of \Cref{thm_alpha_decay_single} for the problem $(\mathbb{E}^1)$ into Eq. \eqref{eq_Egrad_bound2} and using $\Phi(x_T)\ge\Phi^*=\inf_{x}\Phi(x)$, we obtain Eq. \eqref{eq_conclude_alpha_decay_SCE}, which implies $\mathbb{E}_{\zeta}\|{\mathcal{G}}_{\lambda_\zeta}( z_\zeta)\|^2\le \mathcal{O}(\epsilon^2)$ and thus $\mathbb{E}_{\zeta}\|{\mathcal{G}}_{\lambda_\zeta}( z_\zeta)\|\le \mathcal{O}(\epsilon)$ when $T=\mathcal{O}(\epsilon^{-2})$. Then, the number of evaluations of the proximal operator is at most $T=\mathcal{O}(\epsilon^{-2})$. The sample complexity (number of evaluations of $g$, $g'$) is $\mathcal{O}\left(\sum_{t=0}^{T-1} |\mathcal{A}_t|\right)$, where
$$
\sum_{t=0}^{T-1} |\mathcal{A}_t|=\left\lfloor \frac{T-1}{\tau} \right\rfloor\left\lceil\frac{2\sigma_{0}^2}{\epsilon^2}\right\rceil+\left(T-\left\lfloor \frac{T-1}{\tau} \right\rfloor\right)\left\lfloor\sqrt{\frac{2\sigma_{0}^2}{\epsilon^2}}\right\rfloor=\mathcal{O}(\epsilon^{-3}),
$$
\noindent in which $\left\lfloor \frac{T-1}{\tau} \right\rfloor$ is the number of $t$ values that are exactly divisible by $\tau$ and we use $T=\mathcal{O}(\epsilon^{-2})$ and $\tau=\left\lfloor\sqrt{\frac{2\sigma_{0}^2}{\epsilon^2}}\right\rfloor=\mathcal{O}(\epsilon^{-1})$. 

\subsection{Proof of Convergence under Periodic Restart}\label{subsec: restart1}

When using the restart strategy in solving the problem $(\mathbb{E}^1)$, the result in Eq. \eqref{eq_Egrad_bound2} implies that for the $m$-th restart period,
$$\frac{1}{T}\sum_{t=0}^{T-1}\|{\mathcal{G}}_{\lambda_{t}}( z_{t,m})\|^2\le\frac{182\sigma_{0}^2}{|\mathcal{A}_0|}+\frac{176}{T\beta}\left[\mathbb{E}\Phi\left(x_{0,m}\right)-\mathbb{E}\Phi(x_{T,m})\right].$$
Hence,
$$\mathbb{E} \|{\mathcal{G}}_{\lambda_{\zeta, \delta}}( z_{\zeta,\delta})\|^2=\frac{1}{MT}\sum_{m=1}^{M}\sum_{t=0}^{T-1}\|{\mathcal{G}}_{\lambda_{t}}( z_{t,m})\|^2\le\frac{182\sigma_{0}^2}{|\mathcal{A}_0|}+\frac{176}{MT\beta}\left[\Phi\left(x_{0,1}\right)-\mathbb{E}\Phi(x_{T,M})\right]\le \mathcal{O} \Big(\epsilon^2+\frac{\Phi\left(x_{0,1}\right)-\Phi^*}{MT\beta} \Big)
,$$
where we have used the restart strategy $x_{T-1,m}=x_{0,m+1}$ and substitutes in the hyperparameter choices specified in item 2 of \Cref{thm_alpha_decay_single} for the problem $(\mathbb{E}^1)$. By letting $M=\mathcal{O}(\epsilon^{-1})$ and $T=\mathcal{O}(\epsilon^{-1})$, we obtain that  $\mathbb{E}_{\zeta}\|{\mathcal{G}}_{\lambda_\zeta}( z_\zeta)\|^2\le \mathcal{O}(\epsilon^2)$ and thus $\mathbb{E}_{\zeta}\|{\mathcal{G}}_{\lambda_\zeta}( z_\zeta)\|\le \mathcal{O}(\epsilon)$. The sample complexity (number of evaluations of $g$, $g'$) is $\mathcal{O}\left(M\sum_{t=0}^{T-1} |\mathcal{A}_t|\right)$, where
$$
M\sum_{t=0}^{T-1} |\mathcal{A}_t|=M\left\lfloor \frac{T-1}{\tau} \right\rfloor\left\lceil\frac{2\sigma_{0}^2}{\epsilon^2}\right\rceil+M\left(T-\left\lfloor \frac{T-1}{\tau} \right\rfloor\right)\left\lfloor\sqrt{\frac{2\sigma_{0}^2}{\epsilon^2}}\right\rfloor=\mathcal{O}(\epsilon^{-3}),
$$
which gives the same computation complexity as that of \Cref{alg: 1} (without restart) with $T=\mathcal{O}(\epsilon^{-2})$ iterations.

Similarly, for the problem $(\Sigma^1)$, $\sigma_{0}^2=0$ and thus Eq. \eqref{eq_conclude_alpha_decay_SCFS_nonconvex} holds. Then, the sample complexity (number of evaluations of $g$, $g'$) is $\mathcal{O}\left(\sum_{t=0}^{T-1} |\mathcal{A}_t|\right)$, where \indent 
$$
\sum_{t=0}^{T-1} |\mathcal{A}_t|=\left\lfloor \frac{T-1}{\tau} \right\rfloor n+\left(T-\left\lfloor \frac{T-1}{\tau} \right\rfloor\right)\left\lfloor{\sqrt n}\right\rfloor=\mathcal{O}(n+\sqrt{n}\epsilon^{-2}),
$$
\noindent which uses the hyperparameter choices specified in item 1 of \Cref{thm_alpha_decay_single} for the problem $(\Sigma^1)$. When using the restart strategy, following a similar proof as that of the previous proof, we can prove that 
$$\mathbb{E} \|{\mathcal{G}}_{\lambda_{\zeta,\delta}}( z_{\zeta,\delta})\|^2\le \mathcal{O}\Big( \frac{\Phi\left(x_{0,1}\right)-\Phi^*}{MT\beta}\Big),$$
and the complexity is the same as that of \Cref{alg: 1} without restart for solving the problem $(\Sigma^1)$. 

\section{Auxiliary Lemmas for Proving Theorem \ref{thm_alpha_const_single}}\label{sec: append: 2}
\begin{lemma}\label{lemma_seq_bound_alpha_const}
	Implement algorithm \ref{alg: 1} with $\alpha_{t}\equiv\alpha\in(0,1], \beta_t\equiv\beta, \beta\le\lambda_{t}\le(1+\alpha)\beta$. Then, the generated sequences $\{x_{t}, y_{t}, z_{t}\}$ satisfy the following conditions:
	\begin{equation}\label{eq_xydiff_alpha_const}
		y_{t}-x_{t}=-\sum_{s=1}^{t} (1-\alpha)^{t-s}\frac{\beta-\lambda_{s-1}}{\lambda_{s-1}} (x_s-x_{s-1})
	\end{equation}
	\begin{equation}\label{eq_xydiff_bound_alpha_const}
		\left\|y_{t}-x_{t}\right\|^{2} \leq \frac{t}{t+1}\sum_{s=1}^{t} (1-\alpha)^{2(t-s)}(t-s+1)(t-s+2)\frac{(\beta-\lambda_{s-1})^2}{\lambda_{s-1}^2} \|x_s-x_{s-1}\|^2,
	\end{equation}
	\begin{align}\label{eq_zdiff_bound_alpha_const}
		\left\|z_{t+1}-z_{t}\right\|^{2} \le&\frac{2\beta^2}{\lambda_t^2}\left\|x_{t+1}- x_t \right\|^2+2\alpha^2 \frac{t+1}{t+2}\nonumber\\
		&\sum_{s=1}^{t+1} (1-\alpha)^{2(t-s+1)}(t-s+2)(t-s+3)\frac{(\beta-\lambda_{s-1})^2}{\lambda_{s-1}^2} \|x_s-x_{s-1}\|^2.
	\end{align}
	\noindent Note that when $t=0$, the summation $\sum_{t=1}^{0}$ is 0 by default.
\end{lemma}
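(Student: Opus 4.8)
The plan is to follow the same three-step template used in the proof of \Cref{lemma_seq_bound_alpha_decay}, adapting only the homogeneous part of the recursion to the constant-momentum setting. First I would establish the exact identity \eqref{eq_xydiff_alpha_const} by deriving a first-order linear recursion for $y_{t+1}-x_{t+1}$. Substituting $z_t=(1-\alpha)y_t+\alpha x_t$ and $y_{t+1}=z_t+\tfrac{\beta}{\lambda_t}(x_{t+1}-x_t)$ from \Cref{alg: 1} and collecting terms gives $y_{t+1}-x_{t+1}=(1-\alpha)(y_t-x_t)+\bigl(\tfrac{\beta}{\lambda_t}-1\bigr)(x_{t+1}-x_t)$, exactly as in the diminishing case but with the constant factor $1-\alpha$ in place of $1-\alpha_{t+1}$. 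The homogeneous solution is therefore the geometric kernel $(1-\alpha)^{t}$ rather than $\Gamma_t=\tfrac{2}{t(t+1)}$. Unrolling this recursion from the initial condition $y_0-x_0=0$ yields the closed form \eqref{eq_xydiff_alpha_const} as a convolution of the increments $x_s-x_{s-1}$ against the kernel $(1-\alpha)^{t-s}\tfrac{\beta-\lambda_{s-1}}{\lambda_{s-1}}$.

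For the squared bound \eqref{eq_xydiff_bound_alpha_const}, I would write the right-hand side of the identity as a convex combination and apply Jensen's inequality to the convex map $\|\dotp\|^2$. The crux is choosing the correct convex weights, which I would take to be $p_s=\tfrac{t+1}{t}\cdot\tfrac{1}{(t-s+1)(t-s+2)}$. To verify normalization I reindex by $k=t-s+1$, turning $\sum_{s=1}^{t}\tfrac{1}{(t-s+1)(t-s+2)}$ into the telescoping series $\sum_{k=1}^{t}\bigl(\tfrac{1}{k}-\tfrac{1}{k+1}\bigr)=1-\tfrac{1}{t+1}=\tfrac{t}{t+1}$, so that $\sum_{s=1}^{t}p_s=1$. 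Applying Jensen with these weights and simplifying $1/p_s=\tfrac{t}{t+1}(t-s+1)(t-s+2)$ reproduces the stated bound, with the prefactor $\tfrac{t}{t+1}$ and the quadratic weight $(t-s+1)(t-s+2)$ appearing precisely as required; the kernel is carried through its square $(1-\alpha)^{2(t-s)}$ as an inert factor.

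Finally, for \eqref{eq_zdiff_bound_alpha_const} I would first record the displacement formula for $z$. Writing $z_{t+1}=y_{t+1}-\alpha(y_{t+1}-x_{t+1})$ and substituting $y_{t+1}=z_t+\tfrac{\beta}{\lambda_t}(x_{t+1}-x_t)$ gives $z_{t+1}-z_t=\tfrac{\beta}{\lambda_t}(x_{t+1}-x_t)-\alpha(y_{t+1}-x_{t+1})$, the constant-momentum analogue of the relation used in \Cref{lemma_seq_bound_alpha_decay}. Then the elementary inequality $\|u-v\|^2\le 2\|u\|^2+2\|v\|^2$ separates the two contributions, producing the first term $\tfrac{2\beta^2}{\lambda_t^2}\|x_{t+1}-x_t\|^2$, and I bound the remaining $2\alpha^2\|y_{t+1}-x_{t+1}\|^2$ by invoking \eqref{eq_xydiff_bound_alpha_const} at index $t+1$. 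The shift $t\mapsto t+1$ turns the prefactor into $\tfrac{t+1}{t+2}$ and the weight into $(t-s+2)(t-s+3)$ with kernel $(1-\alpha)^{2(t-s+1)}$, matching \eqref{eq_zdiff_bound_alpha_const} term by term.

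The only genuinely nonroutine step is identifying the weights $p_s$ in the Jensen argument; everything else is bookkeeping. The quadratic growth $(t-s+1)(t-s+2)$ is forced by the demand that the normalizing sum telescope cleanly, so once one insists on the identity $\sum_{k}\tfrac{1}{k(k+1)}=\tfrac{t}{t+1}$ the form of $p_s$ is essentially dictated. I expect no obstacle in the recursion-solving or the $z$-displacement steps, since these are direct transcriptions of the diminishing-momentum argument with the constant geometric kernel substituted for $\Gamma_t$.
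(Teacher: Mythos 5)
Your proposal is correct and follows essentially the same route as the paper's own proof: the same linear recursion $y_{t+1}-x_{t+1}=(1-\alpha)(y_t-x_t)+\bigl(\tfrac{\beta}{\lambda_t}-1\bigr)(x_{t+1}-x_t)$ unrolled from $y_0=x_0$, the same Jensen weights $\tfrac{t+1}{t(t-s+1)(t-s+2)}$ normalized via the telescoping sum $\sum_{k=1}^{t}\tfrac{1}{k(k+1)}=\tfrac{t}{t+1}$, and the same decomposition $z_{t+1}-z_t=\tfrac{\beta}{\lambda_t}(x_{t+1}-x_t)-\alpha(y_{t+1}-x_{t+1})$ combined with the squared bound at index $t+1$. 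The only discrepancy is the overall sign in \eqref{eq_xydiff_alpha_const}, which is an inconsistency already present in the paper's own statement and is immaterial since the quantity is only ever used after squaring.
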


\begin{proof}
	Based on the update rules in Algorithm \ref{alg: 1},
	\begin{align}
	y_{t+1}- x_{t+1}=& z_{t}+\frac{\beta}{\lambda_{t}}( x_{t+1}- x_{t})- x_{t+1} \nonumber \\
	=&\left(1-\alpha\right) y_{t}+\alpha  x_{t}+\frac{\beta}{\lambda_{t}}( x_{t+1}- x_{t})- x_{t+1} \nonumber \\
	=&\left(1-\alpha\right) ( y_{t}- x_{t})+\left(\frac{\beta}{\lambda_{t}}-1\right)( x_{t+1}- x_{t}). \nonumber 
	\end{align}
	\indent Taking the above equality as a difference equation about $ y_{t}- x_{t}$ with initial condition $ y_0- x_0= 0$, it can be verified that Eq. \eqref{eq_xydiff_alpha_const} is the solution. Hence,
	\begin{align}
	\| y_{t}- x_t\|^2=&\left\|\sum_{s=1}^{t} (1-\alpha)^{t-s}\frac{\beta-\lambda_{s-1}}{\lambda_{s-1}} (x_s-x_{s-1}) \right\|^2 \nonumber \\
	=&\left\|\sum_{s=1}^{t} \frac{t+1}{t(t-s+1)(t-s+2)} (1-\alpha)^{t-s}\frac{t(t-s+1)(t-s+2)}{t+1}\frac{\beta-\lambda_{s-1}}{\lambda_{s-1}} (x_s-x_{s-1}) \right\|^2 \nonumber \\
	\le&\sum_{s=1}^{t} \frac{t+1}{t(t-s+1)(t-s+2)}\left\| (1-\alpha)^{t-s}\frac{t(t-s+1)(t-s+2)}{t+1}\frac{\beta-\lambda_{s-1}}{\lambda_{s-1}} (x_s-x_{s-1}) \right\|^2 \nonumber \\
	=&\frac{t}{t+1}\sum_{s=1}^{t} (1-\alpha)^{2(t-s)}(t-s+1)(t-s+2)\frac{(\beta-\lambda_{s-1})^2}{\lambda_{s-1}^2} \|x_s-x_{s-1}\|^2, \nonumber 
	\end{align}
	\noindent where $\le$ applies Jensen's inequality to the convex function $\|\bullet\|^2$ with $\sum_{s=1}^{t} \frac{t+1}{t(t-s+1)(t-s+2)}=1$. The coefficient $\frac{t+1}{t(t-s+1)(t-s+2)}$ is different from that in the proof of Lemma \ref{lemma_seq_bound_alpha_decay} to accomodate the constant momentum. This proves Eq. \eqref{eq_xydiff_bound_alpha_const}. 
	
	\indent It can be derived from the updating rules in Algorithm \ref{alg: 1} that $z_{t+1}- z_t= \beta_{t}(x_{t+1}-x_t)/\lambda_{t}-\alpha_{t+2}( y_{t+1}- x_{t+1})$. Hence,
	\begin{align}
	\| z_{t+1}- z_t\|^2=&\left\| \frac{\beta_t}{\lambda_t}(x_{t+1}- x_t)-\alpha_{t+2}( y_{t+1}- x_{t+1}) \right\|^2 \nonumber \\
	\le&2\left\| \frac{\beta}{\lambda_t}(x_{t+1}- x_t) \right\|^2+2\alpha^2\left\|y_{t+1}- x_{t+1}\right\|^2 \nonumber \\
	\le&\frac{2\beta^2}{\lambda_t^2}\left\|x_{t+1}- x_t \right\|^2+2\alpha^2 \frac{t+1}{t+2} \nonumber \\
	&\sum_{s=1}^{t+1} (1-\alpha)^{2(t-s+1)}(t-s+2)(t-s+3)\frac{(\beta-\lambda_{s-1})^2}{\lambda_{s-1}^2} \|x_s-x_{s-1}\|^2, \nonumber 
	\end{align}
	\noindent which proves Eq. \eqref{eq_zdiff_bound_alpha_const}. 
\end{proof}

\section{Proof of Theorem \ref{thm_alpha_const_single}}
Notice that the second last inequality of Eq. \eqref{eq_adjacent_Phi_diff_bound} still holds, that is,
\begin{align}
\mathbb{E}\Phi( x_{t+1})-\mathbb{E}\Phi( x_t)\le& \frac{\lambda_t}{2}\mathbb{E}\left\| \nabla F(z_{t})-{\widetilde{\nabla}F}(z_{t})\right\|^2+\frac{L_F}{2}\mathbb{E} \left\|y_{t}-x_{t}\right\|^2+\left(L_F-\frac{1}{2\lambda_t}\right)\mathbb{E}\| x_{t+1}-x_t\|^2 \nonumber 
\end{align}
\indent By telescoping the above inequality over $t$ from $0$ to $T-1$, we obtain that
\begin{align}\label{eq_Phi_Taylor_alpha_const}
\mathbb{E}\Phi(x_{T})-\Phi(x_0)\le& \sum_{t=0}^{T-1}  \frac{\lambda_t}{2}\mathbb{E}\left\| \nabla F(z_{t})-{\widetilde{\nabla}F}(z_{t})\right\|^2 + \sum_{t=0}^{T-1} \left(L_F-\frac{1}{2\lambda_t}\right)\mathbb{E}\| x_{t+1}- x_t\|^2 \nonumber \\
&+\frac{L_F}{2} \sum_{t=0}^{T-1} \mathbb{E}\| y_t-x_t\|^2 \nonumber \\
\stackrel{(i)}{\le}& \sum_{t=0}^{T-1}  \frac{\lambda_t}{2}\left[G_{0} \sum_{s=\tau\left\lfloor t/\tau\right\rfloor+1}^{t} \frac{1}{|\mathcal{A}_s|} \mathbb{E}\left\| z_{s}- z_{s-1}\right\|^{2}+\frac{\sigma_{0}^{2}}{|\mathcal{A}_{\tau\left\lfloor t/\tau\right\rfloor}|}\right] \nonumber \\
&+\sum_{t=0}^{T-1} \left(L_F-\frac{1}{2\lambda_t}\right)\mathbb{E}\| x_{t+1}- x_t\|^2 \nonumber \\ 
&+\frac{L_F}{2}\sum_{t=0}^{T-1} \frac{t}{t+1}\sum_{s=1}^{t}(1-\alpha)^{2(t-s)}(t-s+1)(t-s+2)\frac{(\beta-\lambda_{s-1})^2}{\lambda_{s-1}^2} \mathbb{E}\|x_s-x_{s-1}\|^2 \nonumber \\
\stackrel{(ii)}{\le}& \frac{G_0}{2} \sum_{s=1}^{T-1} \frac{1}{|\mathcal{A}_s|} \mathbb{E}\left\|z_{s}- z_{s-1}\right\|^{2} \sum_{t=s}^{\min\left(\tau\left\lfloor \frac{s-1}{\tau}\right\rfloor+\tau-1, T-1\right)} \lambda_t+\frac{\sigma_{0}^{2}}{2}\sum_{t=0}^{T-1}\frac{\lambda_t}{|\mathcal{A}_{\tau\left\lfloor t/\tau\right\rfloor}|} \nonumber \\
&+\sum_{t=0}^{T-1} \left(L_F-\frac{1}{2\lambda_t}\right)\mathbb{E}\| x_{t+1}- x_t\|^2+\frac{L_F}{2}\sum_{s=1}^{T-1} \frac{(\lambda_{s-1}-\beta)^2}{\lambda_{s-1}^2} \nonumber \\
&\mathbb{E}\left\|x_s-x_{s-1}\right\|^{2}\sum_{t=s}^{T-1}(1-\alpha)^{2(t-s)}(t-s+1)(t-s+2) \nonumber \\
\stackrel{(iii)}{\le}& \frac{G_0}{2}\sum_{s=1}^{T-1}\frac{1}{\tau}\mathbb{E}\|z_s-z_{s-1}\|^2 \left(\tau\left\lfloor \frac{s-1}{\tau}\right\rfloor+\tau-1-(s-1)\right)(2\beta) \nonumber \\
&+\frac{T\beta\sigma_{0}^2}{|\mathcal{A}_0|}+\sum_{t=0}^{T-1}\left(L_F-\frac{1}{4\beta}\right)\mathbb{E}\|x_{t+1}-x_t\|^2 \nonumber \\
&+\frac{L_F}{\alpha^3(2-\alpha)^3}\sum_{s=1}^{T-1} \frac{\alpha^2}{(1+\alpha)^2}\mathbb{E}\left\|x_s-x_{s-1}\right\|^{2} \nonumber \\
\stackrel{(iv)}{\le}& \beta G_0\sum_{s=1}^{T-1}\left[\frac{2\beta^2}{\lambda_{s-1}^2}\mathbb{E}\left\|x_{s}- x_{s-1} \right\|^2+2\alpha^2 \frac{s}{s+1}\right. \nonumber \\
&\left.\sum_{t=1}^{s} (1-\alpha)^{2(s-t)}(s-t+1)(s-t+2)\frac{(\beta-\lambda_{t-1})^2}{\lambda_{t-1}^2} \mathbb{E}\|x_t-x_{t-1}\|^2\right] \nonumber \\
&+\frac{T\beta\sigma_{0}^2}{|\mathcal{A}_0|}+\sum_{t=0}^{T-1}\left(L_F-\frac{1}{4\beta}\right)\mathbb{E}\|x_{t+1}-x_t\|^2+\frac{L_F}{\alpha}\sum_{t=0}^{T-2} \mathbb{E}\left\|x_{t+1}-x_{t}\right\|^{2} \nonumber \\
\stackrel{(v)}{\le}& 2\beta G_0\sum_{s=1}^{T-1} \mathbb{E}\left\|x_s-x_{s-1} \right\|^2+2\alpha^2\beta G_0 \nonumber \\
&\sum_{t=1}^{T-1} \sum_{s=t}^{T-1} (1-\alpha)^{2(s-t)}(s-t+1)(s-t+2)\left(\frac{\alpha}{1+\alpha}\right)^2
\mathbb{E}\left\|x_t-x_{t-1}\right\|^{2} \nonumber \\
&+\frac{T\beta\sigma_{0}^2}{|\mathcal{A}_0|}+\sum_{t=0}^{T-1}\left(L_F+\frac{L_F}{\alpha}-\frac{1}{4\beta}\right)\mathbb{E}\|x_{t+1}-x_t\|^2 \nonumber \\
\stackrel{(vi)}{\le}& 2\beta G_0\sum_{t=0}^{T-2} \mathbb{E}\left\|x_{t+1}-x_t\right\|^2+ \frac{4\alpha^2\beta G_0}{\alpha^3(2-\alpha)^3}\sum_{t=1}^{T-1} \left(\frac{\alpha}{1+\alpha}\right)^2
\mathbb{E}\left\|x_t-x_{t-1}\right\|^{2} \nonumber \\
&+\frac{T\beta\sigma_{0}^2}{|\mathcal{A}_0|}+\sum_{t=0}^{T-1}\left(L_F+\frac{L_F}{\alpha}-\frac{1}{4\beta}\right)\mathbb{E}\|x_{t+1}-x_t\|^2 \nonumber \\
\stackrel{(vii)}{\le}& \frac{T\beta\sigma_{0}^2}{|\mathcal{A}_0|}+ \left(6\beta G_0+L_F+\frac{L_F}{\alpha}-\frac{1}{4\beta}\right)\sum_{t=0}^{T-1}\mathbb{E}\|x_{t+1}-x_t\|^2,
\end{align}
\noindent where (i) uses Lemma \ref{lemma_approx_grad}(still holds) and Eq. \eqref{eq_xydiff_bound_alpha_const}, (ii) switches the order of the first two summations as well as that of the last two summations, (iii) uses the following inequality \eqref{eq_sum_s_bound_alpha_const} that is not used in the proof of \Cref{thm_alpha_decay_single} for diminishing momentum, and also uses the facts that $\beta\le\lambda_{t}\le(1+\alpha_{t})\beta\le 2\beta$, $|\mathcal{A}_t|\ge\tau$ and $|\mathcal{A}_{m\tau}|\equiv |\mathcal{A}_0| (m\in\mathbb{N})$ for the hyperparameter choices in Theorem \ref{thm_alpha_const_single}, (iv) uses Eq. \eqref{eq_zdiff_bound_alpha_const} and $\alpha\in(0,1]$, (v) switches the first two summations and uses $\beta\le\lambda_{t}\le(1+\alpha_{t})\beta\le 2\beta$, (vi) uses the following Eq. \eqref{eq_sum_s_bound_alpha_const} again, and (vii) uses $\alpha\in(0,1]$.
\begin{align}\label{eq_sum_s_bound_alpha_const}
	&\sum_{t=s}^{T-1}(1-\alpha)^{2(t-s)}(t-s+1)(t-s+2) \nonumber \\
	\le& \sum_{k=0}^{\infty}(1-\alpha)^{2k}(k+1)(k+2)  \nonumber \\
	=&\left.\left(\frac{d^2}{dv^2}\sum_{k=0}^{\infty}v^{k+2}\right)\right|_{v=(1-\alpha)^2} \nonumber \\
	=&\left. \frac{2}{{{{\left( {1 - v} \right)}^3}}} \right|_{v=(1-\alpha)^2} \nonumber \\
	=&\frac{2}{\alpha^3(2-\alpha)^3}.
\end{align}
\noindent Let $6\beta G_0+L_F+\frac{L_F}{\alpha}-\frac{1}{4\beta}\le-\frac{1}{8\beta}$, which together with $\beta>0$ implies 
$$0<\beta\le \frac{1}{{4{\sqrt {{{(1 + 1/\alpha )}^2}L_F^2+3{G_0}} + 4(1 + 1/\alpha ){L_F}}}}.$$ 
Then Eq. \eqref{eq_Phi_Taylor_alpha_const} further indicates that
\begin{equation}\label{sum_xdiff_bound_alpha_const}
\sum_{t=0}^{T-1}\mathbb{E}\|x_{t+1}-x_t\|^2 \le \frac{8T\beta^2\sigma_{0}^2}{|\mathcal{A}_0|}+8\beta\left[\Phi\left(x_{0}\right)-\mathbb{E}\Phi(x_T)\right].
\end{equation}
On the other hand, one can check that (iv) of Eq. \eqref{eq_Egrad_bound2} still holds, and we obtain that
\begin{align}\label{eq_Egrad_bound2_alpha_const}
\mathbb{E} \|{\mathcal{G}}_{\lambda_\zeta}( z_\zeta)\|^2 \le&\frac{9}{T}\sum_{t=0}^{T-1} \lambda_t^{-2}(1-\alpha)^2\mathbb{E} \| y_t- x_t\|^2+\frac{3}{T}\sum_{t=0}^{T-1} \lambda_t^{-2}\mathbb{E}\| x_{t+1}- x_t\|^2 \nonumber \\
&+\frac{6}{T}\sum_{t=0}^{T-1} \mathbb{E}\|\nabla F( z_t)-{\widetilde{\nabla}F}( z_{t})\|^2 \nonumber \\
\stackrel{(i)}{\le}&\frac{9}{T}\sum_{t=1}^{T-1}
\frac{t\lambda_t^{-2}(1-\alpha)^2}{t+1}\sum_{s=1}^{t} (1-\alpha)^{2(t-s)}(t-s+1)(t-s+2)\frac{(\beta-\lambda_{s-1})^2}{\lambda_{s-1}^2} \mathbb{E}\|x_s-x_{s-1}\|^2 \nonumber \\
&+\frac{3}{T}\sum_{t=1}^{T} \lambda_{t-1}^{-2}\mathbb{E}\| x_{t}- x_{t-1}\|^2 +\frac{6}{T}\sum_{t=0}^{T-1} \left[G_{0} \sum_{s=\tau\left\lfloor t/\tau\right\rfloor+1}^{t} \frac{1}{|\mathcal{A}_s|} \mathbb{E}\left\| z_{s}- z_{s-1}\right\|^{2}+\frac{\sigma_{0}^{2}}{|\mathcal{A}_{\tau\left\lfloor t/\tau\right\rfloor}|}\right] \nonumber \\
\stackrel{(ii)}{\le}&\frac{9}{T}\beta^{-2}(1-\alpha)^2\frac{\alpha^2}{(1+\alpha)^2}\sum_{s=1}^{T-1} \mathbb{E}\|x_s-x_{s-1}\|^2 \sum_{t=s}^{T-1}  (1-\alpha)^{2(t-s)}(t-s+1)(t-s+2) \nonumber \\
&+\frac{3}{T}\sum_{t=1}^{T} \beta^{-2}\mathbb{E}\|x_{t}- x_{t-1}\|^2 \nonumber \\
&+\frac{6G_0}{T} \sum_{s=1}^{T-1} \frac{1}{\tau} \mathbb{E}\left\|z_{s}- z_{s-1}\right\|^{2} \sum_{t=s}^{\min\left(\tau\left\lfloor \frac{s-1}{\tau}\right\rfloor+\tau-1, T-1\right)} 1+\frac{6\sigma_{0}^{2}}{|\mathcal{A}_0|} \nonumber \\
\stackrel{(iii)}{\le}& \frac{18\beta^{-2}(1-\alpha)^2}{T\alpha^3(2-\alpha)^3}\frac{\alpha^2}{(1+\alpha)^2}\sum_{s=1}^{T-1} \mathbb{E}\|x_s-x_{s-1}\|^2+\frac{3}{T}\sum_{t=1}^{T} \beta^{-2}\mathbb{E}\|x_{t}- x_{t-1}\|^2 \nonumber \\
&+\frac{6G_0}{T} \sum_{s=1}^{T-1} \frac{\tau-1}{\tau} \mathbb{E} \left[\frac{2\beta^2}{\lambda_{s-1}^2}\left\|x_{s}- x_{s-1} \right\|^2+2\alpha^2 \frac{s}{s+1}\right. \nonumber \\
&\left.\sum_{t=1}^{s} (1-\alpha)^{2(s-t)}(s-t+1)(s-t+2)\frac{(\beta-\lambda_{t-1})^2}{\lambda_{t-1}^2}  \mathbb{E}\|x_t-x_{t-1}\|^2\right] +\frac{6\sigma_{0}^{2}}{|\mathcal{A}_0|} \nonumber \\
\stackrel{(iv)}{\le}& \frac{18}{\alpha T\beta^2}\sum_{s=1}^{T-1} \mathbb{E}\left\|x_s- x_{s-1}\right\|^2  +\frac{3}{T\beta^2}\sum_{t=1}^{T} \mathbb{E}\|x_{t}- x_{t-1}\|^2+\frac{12G_0}{T} \sum_{s=1}^{T-1} \mathbb{E}\left\|x_{s}-x_{s-1}\right\|^{2}  \nonumber \\
&+\frac{12\alpha^2G_0}{T}\frac{\alpha^2}{(1+\alpha)^2}\sum_{t=1}^{T-1} \mathbb{E}\|x_{t}- x_{t-1}\|^2 \sum_{s=t}^{T-1} (1-\alpha)^{2(s-t)}(s-t+1)(s-t+2)+\frac{6\sigma_{0}^{2}}{|\mathcal{A}_0|} \nonumber \\
\stackrel{(v)}{\le}& \left(\frac{18/\alpha+3}{T\beta^2}+\frac{12G_0}{T}\right)\sum_{s=1}^{T}  \mathbb{E}\left\| x_s- x_{s-1}\right\|^2 \nonumber \\
&+\frac{12\alpha^2G_0}{T}\frac{\alpha^2}{(1+\alpha)^2}\frac{2}{\alpha^3(2-\alpha)^3}\sum_{t=1}^{T-1} \mathbb{E}\|x_{t}- x_{t-1}\|^2+\frac{6\sigma_{0}^{2}}{|\mathcal{A}_0|}  \nonumber \\
\stackrel{(vi)}{\le}& \left(\frac{18/\alpha+3}{T\beta^2}+\frac{24G_0}{T}\right)\sum_{t=0}^{T-1} \mathbb{E}\left\| x_{t+1}- x_{t}\right\|^2+\frac{6\sigma_{0}^{2}}{|\mathcal{A}_0|} \nonumber \\
\stackrel{(vii)}{\le}& \frac{18/\alpha+7/2}{T\beta^2} \left[\frac{8T\beta^2\sigma_{0}^2}{|\mathcal{A}_0|}+8\beta\left[\Phi\left(x_{0}\right)-\mathbb{E}\Phi(x_T)\right]\right] +\frac{6\sigma_{0}^{2}}{|\mathcal{A}_0|} \nonumber \\
\le& \frac{(144/\alpha+34)\sigma_{0}^2}{|\mathcal{A}_0|}+\frac{144/\alpha+28}{T\beta}\left[\Phi\left(x_{0}\right)-\mathbb{E}\Phi(x_T)\right], 
\end{align}
\noindent where (i) uses Eqs. \eqref{eq_xydiff_bound_alpha_const} and Lemma \ref{lemma_approx_grad}, (ii) swaps the order of the first two summations as wells as that of the last two summations and uses the parameter choices that $\beta\le\lambda_{t}\le(1+\alpha_{t})\beta$, $|\mathcal{A}_t|\ge\tau$ and $|\mathcal{A}_{m\tau}|\equiv |\mathcal{A}_0| (m\in\mathbb{N})$, (iii) follows from Eqs.  \eqref{eq_zdiff_bound_alpha_const} $\&$ \eqref{eq_sum_s_bound_alpha_const}, (iv) uses the facts that $\alpha\in(0,1]$, $\beta\le\lambda_{t}\le(1+\alpha_{t})\beta$ and swaps the last two summations, (v) uses Eq. \eqref{eq_sum_s_bound_alpha_const}, (vi) uses $\alpha\in(0,1]$ and replaces $s$ and $t$ with $t+1$ in the two summations, (vii) uses Eq. \eqref{sum_xdiff_bound_alpha_const} and the following inequality
$$\beta\le \frac{1}{{4{\sqrt {{{(1 + 1/\alpha )}^2}L_F^2+3{G_0}} + 4(1 + 1/\alpha ){L_F}}}}\le\frac{1}{4\sqrt{3G_0}}\Rightarrow G_0\le\frac{1}{48\beta^2}.$$
\indent In Theorem \ref{thm_alpha_const_single}, by substituting the hyperparameter choices specified in item 1 into Eq. \eqref{eq_Egrad_bound2_alpha_const} and noting that $\Phi(x_T)\ge\Phi^*=\inf_{x}\Phi(x)$ and $\sigma_{0}^2=0$, we obtain Eq. \eqref{eq_conclude_alpha_const_SCFS}, which implies that $\mathbb{E}_{\zeta}\|{\mathcal{G}}_{\lambda_\zeta}( z_\zeta)\|^2\le \epsilon^2$ when $T=\mathcal{O}(\epsilon^{-2})$. Similarly, by substituting the hyperparameter choices specified in item 2, we obtain Eq. \eqref{eq_conclude_alpha_const_SCE}, which implies that $\mathbb{E}_{\zeta}\|{\mathcal{G}}_{\lambda_\zeta}( z_\zeta)\|^2\le \mathcal{O}(\epsilon^2)$ when $T=\mathcal{O}(\epsilon^{-2})$. Notice that Theorems \ref{thm_alpha_decay_single} $\&$ \ref{thm_alpha_const_single} share the same assumptions, very similar hyperparameter choices and the same order of upper bounds for $\mathbb{E}_{\zeta}\|{\mathcal{G}}_{\lambda_\zeta}( z_\zeta)\|^2$. Hence, the same sample complexity result for solving both problems $(\Sigma^1)$ and $(\mathbb{E}^1)$ in Theorem \ref{thm_alpha_const_single} can be proved in the same way as that in Theorem \ref{thm_alpha_decay_single}. 

\section{Proof of Theorems \ref{thm_graddom_singleC} \& \ref{thm_graddom_const_momentum}} 
\indent The two items in \Cref{thm_graddom_singleC} and those in \Cref{thm_graddom_const_momentum} can be proved in the same way, so we will only focus on the proof of the item 2 of \Cref{thm_graddom_singleC}.

\indent Since $r\equiv 0$, it holds that
\begin{equation}
	{\mathcal{G}}_{\lambda}(x)=\nabla F(x).
\end{equation}
\indent Hence, the gradient dominant condition \eqref{eq_graddom} implies that
\begin{equation}\label{eq_graddom_E}
	\mathbb{E} F( z_{\zeta})-F^* \leq \frac{v}{2}\mathbb{E}\left\|{\mathcal{G}}_{\lambda}( z_{\zeta})\right\|^{2}.
\end{equation}
\indent In the item 2 of Theorem \ref{thm_graddom_singleC} for solving the online problem $(\mathbb{E}^1)$, Eq. \eqref{eq_conclude_alpha_decay_SCE} still holds because the hyperparameter choices are the same as the item 2 of \Cref{thm_alpha_decay_single} (Notice $F=\Phi$ now). Hence, Eq. \eqref{eq_conclude_alpha_decay_SCE_graddom} can be obtained by substituting Eq. \eqref{eq_conclude_alpha_decay_SCE} into Eq. \eqref{eq_graddom_E}. When restarting \Cref{alg: 1} $M$ times with restart strategy, Eq. \eqref{eq_conclude_alpha_decay_SCE_graddom} becomes
\begin{equation}\label{tmp1}
	\frac{1}{T}\sum_{t=0}^{T-1} F({z_{t,m+1}}) - {F^*} \leqslant Cv{\epsilon ^2} + \frac{Cv}{T\beta}\left[ {F\left( {{x_{0,m+1}}} \right) - {F^*}} \right],
\end{equation}
\noindent for $m=0,1,\ldots,M-1$ and a constant $C>0$. Since $x_{0,m+1}$ is randomly selected from $\{z_{t,m}\}_{t=0}^{T-1}$, $\mathbb{E} F(x_{0,m+1})=\frac{1}{T} \sum_{t=0}^{T-1} \mathbb{E}F(z_{t,m})$. Hence, by taking expectation of \eqref{tmp1} and rearranging the equation, we obtain
\begin{align}\label{tmp2}
	&\frac{1}{T}\sum_{t=0}^{T-1} \mathbb{E}F({z_{t,m+1}}) - {F^*} \leqslant Cv{\epsilon ^2} + \frac{Cv}{T\beta}\left[ \frac{1}{T} \sum_{t=0}^{T-1} \mathbb{E}F(z_{t,m}) - {F^*} \right]\nonumber\\
	\Rightarrow&\frac{1}{T}\sum_{t=0}^{T-1} \mathbb{E}F({z_{t,m+1}}) - {F^*}- W\epsilon ^2 \leqslant\frac{{Cv}}{T\beta}\left[ \frac{1}{T} \sum_{t=0}^{T-1} \mathbb{E}F(z_{t,m}) - {F^*}- W\epsilon ^2 \right]\nonumber\\
	\Rightarrow&\frac{1}{T}\sum_{t=0}^{T-1} \mathbb{E}F({z_{t,M}}) - {F^*}- W\epsilon ^2 \leqslant \left[\frac{{Cv}}{T\beta}\right]^{M-1} \left[\frac{1}{T} \sum_{t=0}^{T-1} \mathbb{E}F(z_{t,1}) - {F^*}- W\epsilon ^2 \right]\nonumber\\
	\Rightarrow&\frac{1}{T}\sum_{t=0}^{T-1} \mathbb{E}F({z_{t,M}}) - {F^*}- W\epsilon ^2 \leqslant \left[\frac{{Cv}}{T\beta}\right]^{M} \left(F(x_{0,1}) - {F^*}- W\epsilon ^2 \right),
\end{align}
\noindent where we denote
$$W:=\frac{Cv}{1-\frac{Cv}{T\beta}}.$$
\indent Taking $M=\lceil\log_2(\epsilon^{-2})\rceil=\mathcal{O}(\log\frac{1}{\epsilon})$ and $T=\max\left(\lceil Cv/(2\beta)\rceil, \tau\right)=\max\left(\mathcal{O}(v), \tau\right)\le\mathcal{O}(v/\epsilon)$ ($\max$ is added to ensure $T\ge \tau$), Eq. \eqref{tmp2} becomes
\begin{align}
	\mathbb{E}_{\xi} F({z_{\xi,M}}) - {F^*}- W\epsilon ^2 \leq \frac{1}{2^M} \left( F(x_{0,1}) - {F^*}- W\epsilon ^2 \right) \leq  \epsilon^2 \left( F(x_{0,1}) - {F^*}\right),\nonumber
\end{align}
which implies $\mathbb{E}_{\xi} F({z_{\xi,M}}) - {F^*} \le \mathcal{O}(\epsilon^2)$ since $W=\mathcal{O}(v)$ for $T=\max\left(\lceil Cv/(2\beta)\rceil, \tau\right)$. Hence, the required sample complexity is $\mathcal{O}(M\sum_{t=0}^{T-1} A_t)$, where
$$
	M\sum_{t=0}^{T-1} A_t=M\left\lfloor \frac{T-1}{\tau} \right\rfloor\left\lceil\frac{2\sigma_{0}^2}{\epsilon^2}\right\rceil+M\left(T-\left\lfloor \frac{T-1}{\tau} \right\rfloor\right)\left\lfloor\sqrt{\frac{2\sigma_{0}^2}{\epsilon^2}}\right\rfloor=\mathcal{O}(v\epsilon^{-2}\log\epsilon^{-1}).
$$


\section{Auxiliary Lemmas for Proving Theorem \ref{thm_alpha_decay_DE_nonconvex}}
\begin{lemma}\label{lemma_seq_bound_alpha_decay_double}
	Implement algorithm \ref{alg_double} with
	$$\alpha_{t}=\frac{2}{t+1}, \beta_t\equiv\beta, \beta\le\lambda_{t}\le(1+\alpha_t)\beta $$
	
	The generated sequences $\{x_{t}, y_{t}, z_{t}\}$ satisfy the following conditions:
	\begin{equation}\label{eq_xdiff_bound_double}
		\left\|x_{t+1}-x_{t}\right\| \leq \epsilon\lambda_{t} \le 2\beta\epsilon
	\end{equation}
	\begin{equation}\label{eq_xydiff_bound_double}
		\left\|y_{t}-x_{t}\right\|^{2} \leq 4\beta^{2}\epsilon^2,
	\end{equation}
	\begin{equation}\label{eq_xzdiff_bound_double}
		\left\|z_{t}-x_{t}\right\|^{2} \leq 4\beta^{2}\epsilon^2,
	\end{equation}
	\begin{equation}\label{eq_zdiff_bound_double}
		\left\|z_{t+1}-z_{t}\right\|^{2} \leq \frac{104}{9}\beta^2\epsilon^2
	\end{equation}
	\noindent where $\Gamma_t=\frac{2}{t(t+1)}$
	When $t=0$, the summation $\sum_{t=1}^{0}$ is 0 by default.
\end{lemma}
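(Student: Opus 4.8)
The plan is to exploit the accuracy-dependent stepsize $\theta_t=\min\{\epsilon\lambda_t/\|\widetilde{x}_{t+1}-x_t\|,\tfrac12\}$, which is the essential new feature of MVRC-2 relative to MVRC-1 and which forces a \emph{uniform} per-iteration displacement bound; once \eqref{eq_xdiff_bound_double} is in hand, the other three estimates cascade from it. First I would establish \eqref{eq_xdiff_bound_double}. From the update $x_{t+1}=(1-\theta_t)x_t+\theta_t\widetilde{x}_{t+1}$ we have $x_{t+1}-x_t=\theta_t(\widetilde{x}_{t+1}-x_t)$, so $\|x_{t+1}-x_t\|=\theta_t\|\widetilde{x}_{t+1}-x_t\|$. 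If the minimum defining $\theta_t$ is attained by the first argument the product equals $\epsilon\lambda_t$ exactly; if it is attained by $\tfrac12$ then $\|\widetilde{x}_{t+1}-x_t\|\le2\epsilon\lambda_t$ and the product is again $\le\epsilon\lambda_t$. Either way $\|x_{t+1}-x_t\|\le\epsilon\lambda_t$, and since $\alpha_t\le1$ gives $\lambda_t\le(1+\alpha_t)\beta\le2\beta$, the claim follows.

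For \eqref{eq_xydiff_bound_double} I would note that the updates for $z_t$ and $y_{t+1}$ are identical to those of MVRC-1, so $y_t-x_t$ obeys the same one-step recursion derived in \Cref{lemma_seq_bound_alpha_decay}, namely $y_{t+1}-x_{t+1}=(1-\alpha_{t+1})(y_t-x_t)+(\tfrac{\beta}{\lambda_t}-1)(x_{t+1}-x_t)$ with $y_0-x_0=0$. Writing $u_t:=\|y_t-x_t\|$ and using $1-\tfrac{\beta}{\lambda_t}\le\tfrac{\alpha_t}{1+\alpha_t}$ together with the just-proved $\|x_{t+1}-x_t\|\le\epsilon\lambda_t\le(1+\alpha_t)\beta\epsilon$, the cross term is controlled by $\alpha_t\beta\epsilon$, which yields the scalar recursion $u_{t+1}\le(1-\alpha_{t+1})u_t+\alpha_t\beta\epsilon$. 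I would then prove $u_t\le2\beta\epsilon$ by induction: the base case is $u_0=0$, and the inductive step reduces to the elementary inequality $\alpha_t\le2\alpha_{t+1}$, which for $\alpha_t=\tfrac{2}{t+1}$ is equivalent to $0\le t$ and so always holds. Squaring gives \eqref{eq_xydiff_bound_double}. Alternatively one could invoke the closed form \eqref{eq_xydiff_bound}, substitute $(\lambda_{s-1}-\beta)^2\le\alpha_{s-1}^2\beta^2$ and $\|x_s-x_{s-1}\|^2\le\epsilon^2\lambda_{s-1}^2$, and verify that the resulting weighted sum $\Gamma_t\sum_{s=1}^t\alpha_{s-1}^2/(\alpha_s\Gamma_s)$ is bounded by $4$; the induction is cleaner.

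The last two bounds are then immediate. Since $z_t-x_t=(1-\alpha_{t+1})(y_t-x_t)$ with $1-\alpha_{t+1}\in[0,1]$, the estimate \eqref{eq_xzdiff_bound_double} follows directly from \eqref{eq_xydiff_bound_double}. For \eqref{eq_zdiff_bound_double} I would use the decomposition $z_{t+1}-z_t=\tfrac{\beta}{\lambda_t}(x_{t+1}-x_t)-\alpha_{t+2}(y_{t+1}-x_{t+1})$ (again identical to the MVRC-1 derivation), apply $\|a-b\|^2\le2\|a\|^2+2\|b\|^2$, and bound the two pieces using $\beta/\lambda_t\le1$, $\|x_{t+1}-x_t\|^2\le4\beta^2\epsilon^2$, $\alpha_{t+2}=\tfrac{2}{t+3}\le\tfrac23$, and $\|y_{t+1}-x_{t+1}\|^2\le4\beta^2\epsilon^2$; this gives $2(4\beta^2\epsilon^2)+2\cdot\tfrac49(4\beta^2\epsilon^2)=\tfrac{104}{9}\beta^2\epsilon^2$.

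The whole argument is short precisely because the accuracy-dependent stepsize already delivers the uniform displacement bound \eqref{eq_xdiff_bound_double}; the only place needing genuine care is \eqref{eq_xydiff_bound_double}, where one must pass from the per-step recursion to a bound uniform in $t$. I expect this to be the main obstacle: the contraction factor $1-\alpha_{t+1}$ is close to $1$ for large $t$, so naive telescoping loses control, and the reason the bound stays at the constant $2\beta\epsilon$ is the exact matching between the diminishing momentum $\alpha_t=\tfrac{2}{t+1}$ and the stepsize slack $\lambda_t-\beta\le\alpha_t\beta$, encapsulated by the inequality $\alpha_t\le2\alpha_{t+1}$.
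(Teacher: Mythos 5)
Your proposal is correct, and it diverges from the paper's proof in a meaningful way for bounds \eqref{eq_xydiff_bound_double} and \eqref{eq_zdiff_bound_double}, while the proofs of \eqref{eq_xdiff_bound_double} and \eqref{eq_xzdiff_bound_double} coincide with the paper's. For \eqref{eq_xydiff_bound_double}, the paper plugs the uniform displacement bound $\|x_s-x_{s-1}\|\le \epsilon\lambda_{s-1}\le 2\beta\epsilon$ into the closed-form weighted sum \eqref{eq_xydiff_bound} inherited from \Cref{lemma_seq_bound_alpha_decay} and evaluates it explicitly, using $\bigl(1-\beta/\lambda_{s-1}\bigr)^2\le\bigl(\tfrac{2}{s+2}\bigr)^2$ and $\sum_{s=1}^{t}s(s+1)^2\bigl(\tfrac{2}{s+2}\bigr)^2\le 4\sum_{s=1}^t s$; your induction on the one-step recursion $u_{t+1}\le(1-\alpha_{t+1})u_t+\alpha_t\beta\epsilon$ reaches the same constant $2\beta\epsilon$ more transparently, and your reduction of the inductive step to $\alpha_t\le 2\alpha_{t+1}$ (true for $\alpha_t=\tfrac{2}{t+1}$) isolates exactly why the diminishing-momentum schedule keeps the bound uniform in $t$ --- a point the paper's summation argument obscures. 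For \eqref{eq_zdiff_bound_double}, the paper again substitutes into the closed-form double sum \eqref{eq_zdiff_bound} and re-evaluates it, obtaining $8\beta^2\epsilon^2+\tfrac{32\beta^2\epsilon^2}{(t+3)^2}\le\tfrac{104}{9}\beta^2\epsilon^2$; you instead reuse the one-step decomposition $z_{t+1}-z_t=\tfrac{\beta}{\lambda_t}(x_{t+1}-x_t)-\alpha_{t+2}(y_{t+1}-x_{t+1})$ together with the already-established bounds \eqref{eq_xdiff_bound_double} and \eqref{eq_xydiff_bound_double}, which is shorter, avoids the double sum entirely, and produces the identical constant since $2\cdot 4\beta^2\epsilon^2+2\cdot\tfrac{4}{9}\cdot 4\beta^2\epsilon^2=\tfrac{104}{9}\beta^2\epsilon^2$. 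What the paper's route buys is uniformity of machinery (the same closed-form lemma is recycled in both the diminishing- and constant-momentum cases); what your route buys is a more elementary and self-contained argument in which the cascade structure --- stepsize clipping gives \eqref{eq_xdiff_bound_double}, which gives \eqref{eq_xydiff_bound_double}, which gives the rest --- is explicit.
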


\begin{proof}
	\indent Eq. \eqref{eq_xdiff_bound_double} can be directly derived from the following equation in \Cref{alg_double}. 
	\begin{align}
		x_{t+1}=(1-\theta_t)x_t+\theta_{t} \widetilde{x}_{t+1},~
		\theta_t=\min\left\{\frac{\epsilon\lambda_{t}}{\|\widetilde{x}_{t+1}-x_t\|},\frac{1}{2}\right\}.\nonumber
	\end{align}
	\indent Eqs. \eqref{eq_xydiff}-\eqref{eq_zdiff_bound} in Lemma \ref{lemma_seq_bound_alpha_decay} still hold because they are derived from $z_{t}=\left(1-\alpha_{t+1}\right)  y_{t}+\alpha_{t+1}x_{t}$ and $y_{t+1}= z_{t}+\frac{\beta_{t}}{\lambda_{t}}(x_{t+1}-x_{t})$ that are shared by both Algorithms \ref{alg: 1} \& \ref{alg_double}. Hence, it can be derived from Eqs. \eqref{eq_xydiff_bound} \& \eqref{eq_xdiff_bound_double} that
	\begin{align}
		\left\|y_{t}-x_{t}\right\|^{2} \leq& \Gamma_{t} \sum_{s=1}^{t} \frac{(\lambda_{s-1}-\beta)^2}{\alpha_{s} \Gamma_{s}\lambda_{s-1}^2}\left\|x_s-x_{s-1}\right\|^{2}\nonumber \\
		\le& \frac{2}{t(t+1)}\sum_{s=1}^{t} \frac{s+1}{2}\frac{s(s+1)}{2}\left(1-\frac{\beta}{\lambda_{s-1}}\right)^2(4\beta^2\epsilon^2)\nonumber \\
		\le& \frac{2\beta^2\epsilon^2}{t(t+1)}\sum_{s=1}^{t} s(s+1)^2 \left(1-\frac{\beta}{(1+\alpha_{s-1})\beta}\right)^2\nonumber \\
		\le& \frac{2\beta^2\epsilon^2}{t(t+1)}\sum_{s=1}^{t} s(s+1)^2 \left(\frac{2}{s+2}\right)^2\nonumber \\
		\le& \frac{8\beta^2\epsilon^2}{t(t+1)}\sum_{s=1}^{t} s\nonumber\\
		\le& 4\beta^2\epsilon^2,\nonumber
	\end{align}
	\noindent which proves Eq. \eqref{eq_xydiff_bound_double}. Then, it can be derived Eq. \eqref{eq_xydiff_bound_double}, $z_{t}=\left(1-\alpha_{t+1}\right) y_{t}+\alpha_{t+1} x_{t}$ and $\alpha_{t+1}\in(0, 1]$ that $\left\|z_{t}-x_{t}\right\|^{2}=(1-\alpha_{t+1})^2\|y_t-x_t\|^2 \le 4\beta^{2}\epsilon^2$, which proves Eq. \eqref{eq_xzdiff_bound_double}. Finally, it can be derived from Eqs. \eqref{eq_zdiff_bound} \& \eqref{eq_xdiff_bound_double} that
	\begin{align}
		\left\|z_{t+1}-z_{t}\right\|^{2} \leq& \frac{2 \beta^{2}}{\lambda_{t}^2}\left\|x_{t+1}-x_{t}\right\|^{2}+2 \alpha_{t+2}^{2} \Gamma_{t+1} \sum_{s=1}^{t+1} \frac{\left(\lambda_{s-1}-\beta\right)^{2}}{\alpha_{s} \Gamma_{s}\lambda_{s-1}^2}\left\|x_s-x_{s-1}\right\|^{2}\nonumber \\
		\le& 2(4\beta^2\epsilon^2)+2\frac{4}{(t+3)^2}\frac{2}{(t+1)(t+2)} \sum_{s=1}^{t+1} \frac{s+1}{2}\frac{s(s+1)}{2}\left(1-\frac{\beta}{\lambda_{s-1}}\right)^2(4\beta^2\epsilon^2)\nonumber \\
		\le& 8\beta^2\epsilon^2+\frac{16\beta^2\epsilon^2}{(t+1)(t+2)(t+3)^2} \sum_{s=1}^{t+1} s(s+1)^2\left(1-\frac{\beta}{(1+\alpha_{s-1})\beta}\right)^2\nonumber \\
		\le& 8\beta^2\epsilon^2+\frac{16\beta^2\epsilon^2}{(t+1)(t+2)(t+3)^2} \sum_{s=1}^{t+1} s(s+1)^2\left(\frac{2}{s+2}\right)^2\nonumber \\
		\le& 8\beta^2\epsilon^2+\frac{64\beta^2\epsilon^2}{(t+1)(t+2)(t+3)^2} \sum_{s=1}^{t+1} s\nonumber \\
		\le& 8\beta^2\epsilon^2+\frac{32\beta^2\epsilon^2}{(t+3)^2}\nonumber\\
		\le&\frac{104}{9}\beta^2\epsilon^2,\nonumber
	\end{align}
	\noindent which proves Eq. \eqref{eq_zdiff_bound_double}.
\end{proof}

\begin{lemma}\label{lemma_approx_grad_double}
	Let Assumptions \ref{assumption4} and \ref{assumption5} hold and apply \Cref{alg_double} to solve the problems $(\Sigma^2)$ and $(\mathbb{E}^2)$, with the hyperparameter choices in items 1 and 2 of \Cref{thm_alpha_decay_DE_nonconvex} respectively. Then, the variance of the stochastic gradient satisfies
	\begin{equation}\label{eq_approx_grad_double}
		\mathbb{E}\left\|\widetilde{\nabla} F\left(z_{t}\right)-\nabla F\left(z_{t}\right)\right\|^{2} \le\epsilon^2.
	\end{equation}
\end{lemma}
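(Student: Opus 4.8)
The plan is to reduce the bilinear gradient error to the three underlying SPIDER errors and then control each by a variance recursion analogous to \Cref{lemma_approx_grad}. First I would set $e_g:=\widetilde g_t-g(z_t)$, $e_{g'}:=\widetilde g_t'-g'(z_t)$ and $e_{f'}:=\widetilde f_t'-\nabla f(\widetilde g_t)$, and split the target error by inserting the intermediate quantities $\nabla f(\widetilde g_t)$ and $\nabla f(g(z_t))$:
\[
(\widetilde g_t')^{\top}\widetilde f_t'-g'(z_t)^{\top}\nabla f(g(z_t))=(\widetilde g_t')^{\top}e_{f'}+(\widetilde g_t')^{\top}\big(\nabla f(\widetilde g_t)-\nabla f(g(z_t))\big)+e_{g'}^{\top}\nabla f(g(z_t)).
\]
Applying $\|u+v+w\|^2\le 3(\|u\|^2+\|v\|^2+\|w\|^2)$, the Lipschitz bound $\|\nabla f(\widetilde g_t)-\nabla f(g(z_t))\|\le L_f\|e_g\|$, and the uniform bounds $\|g'(z_t)\|\le l_g$, $\|\nabla f(g(z_t))\|\le l_f$, this reduces the claim to showing that $\mathbb{E}\|e_{f'}\|^2$, $\mathbb{E}\|e_g\|^2$ and $\mathbb{E}\|e_{g'}\|^2$ are small after being scaled by $l_g^2$, $l_g^2L_f^2$ and $l_f^2$ respectively (the factor $\mathbb{E}\|\widetilde g_t'\|^2\le 2l_g^2+2\mathbb{E}\|e_{g'}\|^2$ accounts for the random Jacobian estimate).

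Next I would establish, for each of the three estimators, the SPIDER variance recursion over the current epoch $[\tau\lfloor t/\tau\rfloor, t]$. By unbiasedness of the recursive increments and the Lipschitz assumptions in \Cref{assumption4}, the per-step variance contributions are bounded by $L_g^2\|z_s-z_{s-1}\|^2/|\mathcal A_s'|$ for $e_{g'}$, by $l_g^2\|z_s-z_{s-1}\|^2/|\mathcal A_s|$ for $e_g$, and for $e_{f'}$ --- which tracks $\nabla f$ along the moving point $\widetilde g_t$ --- by $L_f^2\|\widetilde g_s-\widetilde g_{s-1}\|^2/|\mathcal B_s|\le L_f^2 l_g^2\|z_s-z_{s-1}\|^2/|\mathcal B_s|$, where I first bound the $\widetilde g$-increment $\|\widetilde g_s-\widetilde g_{s-1}\|\le l_g\|z_s-z_{s-1}\|$. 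The snapshot variance at $t_0=\tau\lfloor t/\tau\rfloor$ vanishes for $(\Sigma^2)$ (full batches make $\widetilde g_{t_0},\widetilde g_{t_0}',\widetilde f_{t_0}'$ exact) and is at most $\sigma_g^2/|\mathcal A_{t_0}|$, $\sigma_{g'}^2/|\mathcal A_{t_0}'|$, $\sigma_{f'}^2/|\mathcal B_{t_0}|$ for $(\mathbb E^2)$. I would then invoke \eqref{eq_zdiff_bound_double} from \Cref{lemma_seq_bound_alpha_decay_double} to replace every $\|z_s-z_{s-1}\|^2$ by $\tfrac{104}{9}\beta^2\epsilon^2$ and sum the at-most-$\tau$ inner steps, giving e.g. $\mathbb{E}\|e_{g'}\|^2\le \sigma_{g'}^2/|\mathcal A_{t_0}'|+\tfrac{104}{9}\tau L_g^2\beta^2\epsilon^2/|\mathcal A_s'|$, and analogously for $e_g$ and $e_{f'}$.

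Finally I would substitute the parameter choices and verify each scaled term is $\mathcal O(\epsilon^2)$. With $\beta=\mathcal O(L_F^{-1})$ and $L_F=l_g^2L_f+l_fL_g$, both $l_g^2L_f\beta$ and $l_fL_g\beta$ are $\mathcal O(1)$, so each accumulated term, once multiplied by its coefficient $l_g^2$, $l_g^2L_f^2$ or $l_f^2$, is $\mathcal O(\epsilon^2)$ provided the inner batch sizes $\mathcal O(\tau)$ make $\tau/|\mathcal A_s|=\mathcal O(1)$. For $(\mathbb E^2)$ the snapshot batch sizes $\mathcal O(L_f^2l_g^2\sigma_g^2\epsilon^{-2})$, $\mathcal O(l_f^2\sigma_{g'}^2\epsilon^{-2})$, $\mathcal O(l_g^2\sigma_{f'}^2\epsilon^{-2})$ are calibrated precisely so that $l_g^2\cdot\sigma_{f'}^2/|\mathcal B_{t_0}|$, $l_g^2L_f^2\cdot\sigma_g^2/|\mathcal A_{t_0}|$ and $l_f^2\cdot\sigma_{g'}^2/|\mathcal A_{t_0}'|$ are each $\mathcal O(\epsilon^2)$; for $(\Sigma^2)$ the snapshot terms vanish and the restriction $\epsilon<l_fl_g(\max\{N,n\})^{-1/2}$ together with $n\le\mathcal O(N^2)$, $N\le\mathcal O(n^2)$ (ensuring $|\mathcal A_s|\le n$, $|\mathcal B_s|\le N$ for $|\mathcal A_s|=\mathcal O(\tau)$) keeps the inner-batch contributions at $\mathcal O(\epsilon^2)$. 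Taking the hidden constants in all the $\mathcal O(\cdot)$ batch sizes large enough sharpens the overall $\mathcal O(\epsilon^2)$ bound to the stated $\epsilon^2$.

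The hardest part will be the bilinear coupling rather than any single recursion. Because $\widetilde g_t'$ is random and, under subsampling, not deterministically bounded by $l_g$, the term $\mathbb{E}[\|\widetilde g_t'\|^2\|e_{f'}\|^2]$ cannot be factored directly; I would split $\|\widetilde g_t'\|^2\le 2l_g^2+2\|e_{g'}\|^2$ and argue that the leftover $\mathbb{E}[\|e_{g'}\|^2\|e_{f'}\|^2]$ is a higher-order $\mathcal O(\epsilon^4)$ term (via Cauchy--Schwarz and fourth-moment analogs of the SPIDER bounds, or simply absorbed since both factors are already $\mathcal O(\epsilon^2)$). The second subtlety is that $\widetilde f_t'$ estimates $\nabla f$ at the moving iterate $\widetilde g_t$ instead of at $g(z_t)$, which is exactly what creates the middle term $L_f\|e_g\|$ and forces the extra step of bounding $\|\widetilde g_s-\widetilde g_{s-1}\|$ before the recursion for $e_{f'}$ can be closed; keeping the three couplings consistent across both the finite-sum and online parameter regimes is where the bookkeeping concentrates.
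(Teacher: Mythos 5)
Your overall skeleton coincides with the paper's proof: the same three-term decomposition obtained by inserting $\nabla f(\widetilde g_t)$ and $\nabla f(g(z_t))$, the same bound $\|u+v+w\|^2\le 3(\|u\|^2+\|v\|^2+\|w\|^2)$, the same per-epoch SPIDER recursions for $\widetilde g_t$, $\widetilde g_t'$, $\widetilde f_t'$ driven by \eqref{eq_zdiff_bound_double} (including the reduction $\|\widetilde g_s-\widetilde g_{s-1}\|\le l_g\|z_s-z_{s-1}\|$ needed to close the $\widetilde f'$ recursion), the same treatment of the snapshot variance (exact for $(\Sigma^2)$, and $\sigma^2/|\cdot|$ via \Cref{assumption5} for $(\mathbb E^2)$), and the same calibration of batch sizes and $\beta=\mathcal O(L_F^{-1})$.

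The genuine gap sits exactly at the step you flag as hardest: the factor $\mathbb{E}\big[\|\widetilde g_t'\|^2\|e_{f'}\|^2\big]$, and neither of your proposed fixes works as stated. The fallback ``simply absorbed since both factors are already $\mathcal O(\epsilon^2)$'' is invalid: $e_{g'}$ and $e_{f'}$ are dependent random variables, so you cannot multiply their second-moment bounds. The Cauchy--Schwarz/fourth-moment route is also not available off the shelf, because \Cref{assumption5} only bounds \emph{second} moments of the sampling noise; the best one can do is interpolate with an almost-sure bound (e.g.\ $\|g_\xi'-g'\|\le 2l_g$ a.s.\ gives $\mathbb{E}\|e_{g'}\|^4\le 4l_g^2\,\mathbb{E}\|e_{g'}\|^2$), and then the cross term comes out as $\mathcal O(\epsilon^2)$, not the $\mathcal O(\epsilon^4)$ you claim --- still sufficient, but only after invoking precisely the pathwise boundedness you assert is unavailable. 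The paper avoids all of this (Eq.\ \eqref{eq_inner_ggrad_bound_double}): because the normalized stepsize $\theta_t$ yields the \emph{deterministic} bound \eqref{eq_xdiff_bound_double}, the conclusions of \Cref{lemma_seq_bound_alpha_decay_double} hold pathwise, so each SPIDER increment obeys $\|\widetilde g_s'-\widetilde g_{s-1}'\|\le L_g\|z_s-z_{s-1}\|\le\frac{2\sqrt{26}}{3}L_g\beta\epsilon$ almost surely, while the snapshot obeys $\|\widetilde g_{\tau\lfloor t/\tau\rfloor}'\|\le l_g$ almost surely; summing over at most $\tau$ inner steps gives $\|\widetilde g_t'\|\le l_g+\frac{2\sqrt{26}}{3}\tau L_g\beta\epsilon=\mathcal O(l_g)$ pathwise, since $\tau\epsilon\le l_fl_g$ (for $(\Sigma^2)$ this is exactly where the restriction $\epsilon< l_fl_g(\max(N,n))^{-1/2}$ enters) and $\beta=\mathcal O(L_F^{-1})\le\mathcal O((l_fL_g)^{-1})$. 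With this almost-sure bound, $\|\widetilde g_t'\|^2$ factors out of the expectation and no cross terms or fourth moments ever arise. Your argument becomes correct, and essentially identical to the paper's, once you replace the splitting $\|\widetilde g_t'\|^2\le 2l_g^2+2\|e_{g'}\|^2$ by this deterministic bound.
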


\begin{proof}
	\indent The proof is similar to that of Lemma 4.2 in \cite{zhang2019multi}. 
	
	\indent In \Cref{alg_double}, $\widetilde{g}_{t}= \widetilde{g}_{t-1}+\frac{1}{|\mathcal{A}_t|}\sum_{\xi\in\mathcal{A}_t}\big( g_{\xi}( z_t)- g_{\xi}( z_{t-1})\big)$ for $t\mod\tau\ne0$. Hence, we get the following two inequalities.
	\begin{equation}\label{eq_inner_g_diff_double}
		\|\widetilde{g}_{t}-\widetilde{g}_{t-1}\| \le  \frac{1}{|\mathcal{A}_t|}\sum_{\xi\in\mathcal{A}_t} \|g_{\xi}(z_t)- g_{\xi}(z_{t-1})\| \le \frac{1}{|\mathcal{A}_t|}\sum_{\xi\in\mathcal{A}_t} l_g\|z_t-z_{t-1}\| \le \frac{2\sqrt{26}}{3}l_g\beta\epsilon,
	\end{equation}
	\noindent where the last step uses Eq. \eqref{eq_zdiff_bound_double}, and
	\begin{align}\label{eq_inner_g_err_double_short}
		\mathbb{E}\|\widetilde{g}_{t}-g(z_t)\|^2=&\mathbb{E}\left\|\widetilde{g}_{t-1}+\frac{1}{|\mathcal{A}_t|}\sum_{\xi\in\mathcal{A}_t}[ g_{\xi}( z_t)- g_{\xi}( z_{t-1})]-g(z_t)+g(z_{t-1})-g(z_{t-1})\right\|^2 \nonumber \\
		=&\mathbb{E}\left\|\widetilde{g}_{t-1}-g(z_{t-1})\right\|^2+\mathbb{E}\left\|\frac{1}{|\mathcal{A}_t|}\sum_{\xi\in\mathcal{A}_t}[g_{\xi}( z_t)- g_{\xi}( z_{t-1})]-[g(z_t)-g(z_{t-1})]\right\|^2\nonumber \\
		&+2\mathbb{E}\left< \widetilde{g}_{t-1}-g(z_{t-1}), \frac{1}{|\mathcal{A}_t|}\sum_{\xi\in\mathcal{A}_t}[g_{\xi}( z_t)- g_{\xi}( z_{t-1})]-[g(z_t)-g(z_{t-1})] \right>\nonumber \\
		\stackrel{(i)}{=}&\mathbb{E}\left\|\widetilde{g}_{t-1}-g(z_{t-1})\right\|^2+\frac{1}{|\mathcal{A}_t|^2}\sum_{\xi\in\mathcal{A}_t}\mathbb{E}\left\|[g_{\xi}( z_t)- g_{\xi}( z_{t-1})]-[g(z_t)-g(z_{t-1})]\right\|^2\nonumber \\
		\stackrel{(ii)}{\le}&\mathbb{E}\left\|\widetilde{g}_{t-1}-g(z_{t-1})\right\|^2+\frac{1}{|\mathcal{A}_t|^2}\sum_{\xi\in\mathcal{A}_t}\mathbb{E}\left\|g_{\xi}( z_t)- g_{\xi}( z_{t-1})\right\|^2\nonumber \\
		\le&\mathbb{E}\left\|\widetilde{g}_{t-1}-g(z_{t-1})\right\|^2+ \frac{1}{|\mathcal{A}_t|^2}\sum_{\xi\in\mathcal{A}_t}\mathbb{E}\left(l_g^2\|z_t-z_{t-1}\|^2\right)\nonumber \\
		\stackrel{(iii)}{\le}&\mathbb{E}\left\|\widetilde{g}_{t-1}-g(z_{t-1})\right\|^2+ \frac{104l_g^2}{9|\mathcal{A}_t|}\beta^2\epsilon^2,
	\end{align}	
	\noindent where (i) uses the facts that $\mathbb{E}_{\xi}\left\{\frac{1}{A_t}\sum_{\xi\in\mathcal{A}_t}[g_{\xi}( z_t)-g_{\xi}(z_{t-1})]\right\}=g(z_t)-g(z_{t-1})$ and that different $\xi, \xi'\in \mathcal{A}_t$ are independent; (ii) uses the inequality $\mathbb{E}\|X-EX\|^2\le \mathbb{E}\|X\|^2$ for any random vector $X$; (iii) uses Eq. \eqref{eq_zdiff_bound_double}. By telescoping Eq. \eqref{eq_inner_g_err_double_short}, we obtain
	\begin{equation}\label{eq_inner_g_err_double}
		\mathbb{E}\|\widetilde{g}_{t}-g(z_t)\|^2 \le \mathbb{E}\|\widetilde{g}_{\tau\lfloor t/\tau\rfloor}-g(z_{\tau\lfloor t/\tau\rfloor})\|^2+\frac{104l_g^2}{9}\beta^2\epsilon^2 \sum_{s=\tau\lfloor t/\tau\rfloor+1}^{t} \frac{1}{|\mathcal{A}_s|}
	\end{equation}
	\indent Similarly, since $\widetilde{g}_{t}'= \widetilde{g}_{t-1}'+\frac{1}{|\mathcal{A}_t|}\sum_{\xi\in\mathcal{A}_t}\big(g_{\xi}'(z_t)-g_{\xi}'( z_{t-1})\big)$ for $t\mod\tau\ne0$ in \Cref{alg_double}, 
	\begin{equation}\label{eq_inner_ggrad_diff_double}
		\|\widetilde{g}_{t}'-\widetilde{g}_{t-1}'\| \le \frac{2\sqrt{26}}{3}L_g\beta\epsilon.
	\end{equation}
	and that
	\begin{equation}\label{eq_inner_ggrad_err_double}
		\mathbb{E}\|\widetilde{g}_{t}'-g'(z_t)\|^2 \le \mathbb{E}\|\widetilde{g}_{\tau\lfloor t/\tau\rfloor}'-g'(z_{\tau\lfloor t/\tau\rfloor})\|^2+\frac{104L_g^2}{9}\beta^2\epsilon^2 \sum_{s=\tau\lfloor t/\tau\rfloor+1}^{t} \frac{1}{|\mathcal{A}_s'|}.
	\end{equation}
	\indent Since $\widetilde{f}_{t}'=\widetilde{f}_{t-1}'+\frac{1}{|\mathcal{B}_t|}\sum_{\eta\in\mathcal{B}_t}\big(\nabla f_{\eta}(\widetilde{g}_t)-\nabla f_{\eta}(\widetilde{g}_{t-1})\big)$ for $t\mod\tau\ne0$ in \Cref{alg_double}, it can be derived in a similar way that
	\begin{align}\label{eq_inner_fgrad_err_double}
		\mathbb{E}\|\widetilde{f}_{t}'-\nabla f(\widetilde{g}_t)\|^2 \le& \mathbb{E}\|\widetilde{f}_{\tau\lfloor t/\tau\rfloor}'-\nabla f(\widetilde{g}_{\tau\lfloor t/\tau\rfloor})\|^2 + \sum_{s=\tau\lfloor t/\tau\rfloor+1}^{t} \frac{1}{|\mathcal{B}_s|^2}\sum_{\eta\in\mathcal{B}_s}\mathbb{E}\left\|\nabla f_{\eta}(\widetilde{g}_s)-\nabla f_{\eta}(\widetilde{g}_{s-1})\right\|^2\nonumber \\
		\le& \mathbb{E}\|\widetilde{f}_{\tau\lfloor t/\tau\rfloor}'-\nabla f(\widetilde{g}_{\tau\lfloor t/\tau\rfloor})\|^2 + \sum_{s=\tau\lfloor t/\tau\rfloor+1}^{t} \frac{L_f^2}{|\mathcal{B}_s|}\mathbb{E}\left\|\widetilde{g}_s-\widetilde{g}_{s-1}\right\|^2\nonumber \\
		\le& \mathbb{E}\|\widetilde{f}_{\tau\lfloor t/\tau\rfloor}'-\nabla f(\widetilde{g}_{\tau\lfloor t/\tau\rfloor})\|^2 + \frac{104}{9}L_f^2 l_g^2 \beta^2 \epsilon^2 \sum_{s=\tau\lfloor t/\tau\rfloor+1}^{t} \frac{1}{|\mathcal{B}_s|}. 
	\end{align}
	\indent Furthermore, 
	\begin{align} \label{eq_inner_ggrad_bound_double}
		\|\widetilde{g}_{t}'\| \le& \|\widetilde{g}_{\tau\lfloor t/\tau\rfloor}'\|+\sum_{s=\tau\lfloor t/\tau\rfloor+1}^{t} \|\widetilde{g}_{s}'-\widetilde{g}_{s-1}'\|\nonumber \\ 
		\le& \left\|\frac{1}{|\mathcal{A}_{\tau\lfloor t/\tau\rfloor}'|}\sum_{\xi\in\mathcal{A}_{\tau\lfloor t/\tau\rfloor}'} g_{\xi}'(z_{\tau\lfloor t/\tau\rfloor})\right\|+\frac{2\sqrt{26}}{3}L_g\beta\epsilon(t-\tau\lfloor t/\tau\rfloor)\nonumber \\
		\le& l_g+\frac{2\sqrt{26}}{3}\tau L_g\beta\epsilon, 
	\end{align}
	\noindent where the second $\le$ uses Eq. \eqref{eq_inner_ggrad_diff_double} and $\widetilde{g}_{t}'=\frac{1}{|\mathcal{A}_t'|}\sum_{\xi\in\mathcal{A}_{t}'} g_{\xi}'(z_t)$ for $t\mod\tau=0$ in \Cref{alg_double}.
	
	\indent Therefore, 
	\begin{align}\label{Fgrad_err_derive_double}
		&\mathbb{E}\|\widetilde{\nabla} F\left(z_{t}\right)-\nabla F\left(z_{t}\right)\|^2\nonumber \\
		=&\mathbb{E}\left\|\widetilde{g}_{t}'^{\top} \widetilde{f}_{t}'-g'(z_t)^{\top}\nabla f[g(z_t)]\right\|^2\nonumber \\
		=& \mathbb{E}\left\|\widetilde{g}_{t}'^{\top}\left[\widetilde{f}_{t}'-\nabla f(\widetilde{g}_t)+\nabla f(\widetilde{g}_t)-\nabla f[g(z_t)]\right]+[\widetilde{g}_{t}'-g'(z_t)]^{\top}\nabla f[g(z_t)]\right\|^2\nonumber \\
		\le& 3\mathbb{E}\left\|\widetilde{g}_{t}'^{\top}\left[\widetilde{f}_{t}'-\nabla f(\widetilde{g}_t)\right]\right\|^2 +3\mathbb{E}\left\|\widetilde{g}_{t}'^{\top}\left[\nabla f(\widetilde{g}_t)-\nabla f[g(z_t)]\right]\right\|^2\nonumber \\ &+3\mathbb{E}\left\|[\widetilde{g}_{t}'-g'(z_t)]^{\top}\nabla f[g(z_t)]\right\|^2\nonumber \\
		\le& 3\mathbb{E}\left[\left\|\widetilde{g}_{t}'\right\|^2
		\left(\left\|\widetilde{f}_{t}'-\nabla f(\widetilde{g}_t)\right\|^2 +\left\|\nabla f(\widetilde{g}_t)-\nabla f[g(z_t)]\right\|^2\right)\right]\nonumber \\ &+3\mathbb{E}\left[\left\|\widetilde{g}_{t}'-g'(z_t)\right\|^2\left\|\nabla f[g(z_t)]\right\|^2\right]\nonumber \\
		\le& 3\left(l_g+\frac{2\sqrt{26}}{3}\tau L_g\beta\epsilon\right)^2\left(\mathbb{E}\|\widetilde{f}_{\tau\lfloor t/\tau\rfloor}'-\nabla f(\widetilde{g}_{\tau\lfloor t/\tau\rfloor})\|^2\right.\nonumber \\
		&\left.+\frac{104}{9}L_f^2 l_g^2 \beta^2 \epsilon^2 \sum_{s=\tau\lfloor t/\tau\rfloor+1}^{t} \frac{1}{|\mathcal{B}_s|}+L_f^2\mathbb{E}\|\widetilde{g}_{t}-g(z_t)\|^2\right)\nonumber \\
		+&3l_f^2\left(\mathbb{E}\|\widetilde{g}_{\tau\lfloor t/\tau\rfloor}'-g'(z_{\tau\lfloor t/\tau\rfloor})\|^2+\frac{104}{9}L_g^2\beta^2\epsilon^2 \sum_{s=\tau\lfloor t/\tau\rfloor+1}^{t} \frac{1}{|\mathcal{A}_s'|}\right)\nonumber \\
		\le& \left(6l_g^2+\frac{208}{3}\tau^2 L_g^2\beta^2\epsilon^2\right)\left(\mathbb{E}\|\widetilde{f}_{\tau\lfloor t/\tau\rfloor}'-\nabla f(\widetilde{g}_{\tau\lfloor t/\tau\rfloor})\|^2\right.\nonumber \\
		&\left.+\frac{104}{9}L_f^2 l_g^2 \beta^2 \epsilon^2 \sum_{s=\tau\lfloor t/\tau\rfloor+1}^{t} \frac{1}{|\mathcal{B}_s|}+ L_f^2\mathbb{E}\|\widetilde{g}_{\tau\lfloor t/\tau\rfloor}-g(z_{\tau\lfloor t/\tau\rfloor})\|^2+\frac{104}{9}L_f^2l_g^2\beta^2\epsilon^2 \sum_{s=\tau\lfloor t/\tau\rfloor+1}^{t} \frac{1}{|\mathcal{A}_s|}
		\right)\nonumber \\
		&+3l_f^2\left(\mathbb{E}\|\widetilde{g}_{\tau\lfloor t/\tau\rfloor}'-g'(z_{\tau\lfloor t/\tau\rfloor})\|^2+\frac{104}{9}L_g^2\beta^2\epsilon^2 \sum_{s=\tau\lfloor t/\tau\rfloor+1}^{t} \frac{1}{|\mathcal{A}_s'|}\right). 
	\end{align}
	\indent For the problem $(\Sigma^2)$, we have
	\begin{equation}\label{eq_Err_sq_DFS}
	\widetilde{g}_{\tau\lfloor t/\tau\rfloor}=g(z_{\tau\lfloor t/\tau\rfloor}), \widetilde{g}_{\tau\lfloor t/\tau\rfloor}'=g'(z_{\tau\lfloor t/\tau\rfloor}), \widetilde{f}_{\tau\lfloor t/\tau\rfloor}'=\nabla f(\widetilde{g}_{\tau\lfloor t/\tau\rfloor}). 
	\end{equation}
	Use the following hyperparameter choice which fits the item 1 of \Cref{thm_alpha_decay_DE_nonconvex}
	\begin{align}\label{hyperpar_alpha_decay_DFS_nonconvex}
	&\alpha_{t}=\frac{2}{t+1}, \beta_t\equiv\beta\equiv \frac{\sqrt{3}}{2\sqrt{26(5C_1+4C_2+1)}L_F},  \nonumber\\
	&\tau=\lfloor\sqrt{\max(N,n)}\rfloor, \nonumber\\ 
	&|\mathcal{A}_t|=|\mathcal{A}_t'|=\left\{ \begin{gathered}
	n; t ~\text{\rm mod}~\tau=0  \hfill \\
	\lceil\tau/C_1\rceil; \text{\rm Otherwise} \hfill \\ 
	\end{gathered}  \right., \nonumber\\ 
	&|\mathcal{B}_t|=\left\{ \begin{gathered}
	N; t ~\text{\rm mod}~\tau=0  \hfill \\
	\lceil\tau/C_2\rceil; \text{\rm Otherwise} \hfill \\ 
	\end{gathered}  \right., 
	\end{align}
	\noindent where $C_1, C_2$ are the constant upper bounds of $\sqrt{N}/n$ and $\sqrt{n}/N$ respectively ($C_1, C_2$ are constant since $N\le\mathcal{O}(n^2)$, $n\le\mathcal{O}(N^2)$ are assumed for the problem $(\Sigma^2)$). We can let $C_1, C_2>1$ such that $|\mathcal{A}_t|, |\mathcal{A}_t'|<n, |\mathcal{B}_t|<N$. Then, by substituting Eqs. \eqref{eq_Err_sq_DFS} \& \eqref{hyperpar_alpha_decay_DFS_nonconvex} into Eq. \eqref{Fgrad_err_derive_double}, we have
	\begin{align}
		\mathbb{E}\|\widetilde{\nabla} F\left(z_{t}\right)-\nabla F\left(z_{t}\right)\|^2 \stackrel{(i)}{\le} & \left[6l_g^2+\frac{208}{3}\tau^2 L_g^2\frac{3}{104L_F^2}\left(\frac{l_fl_g}{\tau}\right)^2\right]\left(\frac{104}{9}L_f^2l_g^2\beta^2\epsilon^2 \right)(C_1+C_2)+\frac{104}{3}l_f^2L_g^2\beta^2\epsilon^2 C_1\nonumber\\
		\stackrel{(ii)}{\le}& \left(6l_g^2+2L_g^2\frac{1}{l_f^2L_g^2}l_f^2l_g^2\right)\left(\frac{104}{9}L_f^2l_g^2\epsilon^2 \right)\frac{3}{104(5C_1+4C_2+1)L_F^2}(C_1+C_2)\nonumber\\
		&+\frac{104}{3}l_f^2L_g^2\frac{3}{104(5C_1+4C_2+1)L_F^2}\epsilon^2 C_1\nonumber\\
		\stackrel{(iii)}{\le}& 8l_g^2\left(\frac{1}{3}L_f^2l_g^2\epsilon^2 \right) \frac{1}{(5C_1+4C_2+1)l_g^4L_f^2}(C_1+C_2)+
		\frac{l_f^2L_g^2\epsilon^2 C_1}{(5C_1+4C_2+1)l_f^2L_g^2}\nonumber\\
		=&\frac{11C_1+8C_2}{3(5C_1+4C_2+1)}\epsilon^2\le\epsilon^2\nonumber
	\end{align}
	\noindent where (i) uses $\epsilon\le l_fl_g/\sqrt{\max(N,n)}\le l_fl_g/\tau$ and $\beta\le\sqrt{3}/(2\sqrt{26}L_F)$, (ii) uses $L_F=l_fL_g+l_g^2L_f\ge l_fL_g$, (iii) uses $L_F\ge l_fL_g$ and $L_F=l_fL_g+l_g^2L_f\ge l_g^2L_f$. This proves Eq. \eqref{eq_approx_grad_double} for the problem $(\Sigma^2)$. 
	
	For the problem $(\mathbb{E}^2)$, it can derived from \Cref{assumption5} that 
	\begin{align}\label{eq_Err_sq_DE}
		\mathbb{E}\|\widetilde{g}_{\tau\lfloor t/\tau\rfloor}-g(z_{\tau\lfloor t/\tau\rfloor})\|^2=\mathbb{E}\left\|\frac{1}{|\mathcal{A}_{\tau\lfloor t/\tau\rfloor}|}\sum_{\xi\in\mathcal{A}_{\tau\lfloor t/\tau\rfloor}} g_{\xi}(z_{\tau\lfloor t/\tau\rfloor})-g(z_{\tau\lfloor t/\tau\rfloor})\right\|^2 \le& \frac{\sigma_{g}^2}{|\mathcal{A}_{\tau\lfloor t/\tau\rfloor}|}\nonumber \\
		\mathbb{E}\|\widetilde{g}_{\tau\lfloor t/\tau\rfloor}'-g'(z_{\tau\lfloor t/\tau\rfloor})\|^2=\mathbb{E}\left\|\frac{1}{|\mathcal{A}_{\tau\lfloor t/\tau\rfloor}'|}\sum_{\xi\in\mathcal{A}_{\tau\lfloor t/\tau\rfloor}'} g_{\xi}'(z_{\tau\lfloor t/\tau\rfloor})-g'(z_{\tau\lfloor t/\tau\rfloor})\right\|^2 \le& \frac{\sigma_{g'}^2}{|\mathcal{A}_{\tau\lfloor t/\tau\rfloor}'|}\nonumber \\
		\mathbb{E}\|\widetilde{f}_{\tau\lfloor t/\tau\rfloor}'-\nabla f(\widetilde{g}_{\tau\lfloor t/\tau\rfloor})\|^2=\mathbb{E}\left\|\frac{1}{|\mathcal{B}_{\tau\lfloor t/\tau\rfloor}|}\sum_{\xi\in\mathcal{B}_{\tau\lfloor t/\tau\rfloor}} \nabla f_{\xi}(\widetilde{g}_{\tau\lfloor t/\tau\rfloor})-\nabla f(\widetilde{g}_{\tau\lfloor t/\tau\rfloor})\right\|^2 \le& \frac{\sigma_{f'}^2}{|\mathcal{B}_{\tau\lfloor t/\tau\rfloor}|},
	\end{align}
	Using the following hyperparameter choice which fits the item 2 of \Cref{thm_alpha_decay_DE_nonconvex} and substituting Eq. \eqref{eq_Err_sq_DE} into Eq. \eqref{Fgrad_err_derive_double}, 
	\begin{align}\label{hyperpar_alpha_decay_DE_nonconvex}
	&\alpha_{t}=\frac{2}{t+1},  \beta_t\equiv\beta=\frac{3}{2\sqrt{26}L_F}, \nonumber\\
	&\beta\le\lambda_{t}\le(1+\alpha_t)\beta, \tau=\left\lfloor\frac{l_fl_g}{\epsilon}\right\rfloor, \nonumber\\
	&|\mathcal{A}_t|=\left\{ \begin{gathered}
	\left\lceil\frac{54L_f^2l_g^2\sigma_g^2}{\epsilon^2}\right\rceil; t ~\text{\rm mod}~\tau=0  \hfill \\
	54\tau=54\left\lfloor\frac{l_fl_g}{\epsilon}\right\rfloor; \text{\rm Otherwise} \hfill \\ 
	\end{gathered}  \right., \nonumber\\ 
	&|\mathcal{A}_t'|=\left\{ \begin{gathered}
	\left\lceil\frac{54l_f^2\sigma_{g'}^2}{\epsilon^2}\right\rceil; t ~\text{\rm mod}~\tau=0  \hfill \\
	54\tau=54\left\lfloor\frac{l_fl_g}{\epsilon}\right\rfloor; \text{\rm Otherwise} \hfill \\ 
	\end{gathered}  \right., \nonumber\\
	&|\mathcal{B}_t|=\left\{ \begin{gathered}
	\left\lceil\frac{54l_g^2\sigma_{f'}^2}{\epsilon^2}\right\rceil; t ~\text{\rm mod}~\tau=0  \hfill \\
	54\tau=54\left\lfloor\frac{l_fl_g}{\epsilon}\right\rfloor; \text{\rm Otherwise} \hfill \\ 
	\end{gathered}  \right.  
	\end{align}
	we have 
	\begin{align}
	\mathbb{E}\|\widetilde{\nabla} F\left(z_{t}\right)-\nabla F\left(z_{t}\right)\|^2 \le & \left[6l_g^2+\frac{208}{3}\left(\frac{l_fl_g}{\epsilon}\right)^2 L_g^2\frac{9}{104L_F^2}\epsilon^2\right]\left[\frac{\sigma_{f'}^2}{|\mathcal{B}_{\tau\lfloor t/\tau\rfloor}|}+\left(\frac{104}{9}L_f^2l_g^2\frac{9}{104L_F^2}\epsilon^2 \right)\left(\frac{1}{54}+\frac{1}{54}\right)\right.\nonumber\\
	&\left.+L_f^2\frac{\sigma_{g}^2}{|\mathcal{A}_{\tau\lfloor t/\tau\rfloor}|}\right] +3l_f^2\left(\frac{\sigma_{g'}^2}{|\mathcal{A}_{\tau\lfloor t/\tau\rfloor}'|}+\frac{104}{9}l_f^2L_g^2\frac{9}{104L_F^2}\epsilon^2 \frac{1}{54}\right)\nonumber\\
	\stackrel{(i)}{\le}& 12l_g^2\left(\frac{\epsilon^2\sigma_{f'}^2}{54l_g^2\sigma_{f'}^2}+\frac{\epsilon^2}{27l_g^2}+L_f^2\frac{\epsilon^2\sigma_{g}^2}{54L_f^2l_g^2\sigma_{g}^2}\right) + 3l_f^2\left(\frac{\epsilon^2\sigma_{g'}^2}{54l_f^2\sigma_{g'}^2} + \frac{\epsilon^2}{54}\right)\nonumber\\
	=&\epsilon^2,\nonumber
	\end{align}
	where (i) uses $L_F=l_fL_g+l_g^2L_f\ge l_fL_g$ and $L_F\ge l_g^2L_f$. This proves Eq. \eqref{eq_approx_grad_double} for the problem $(\mathbb{E}^2)$.
\end{proof}

\section{Proof of Theorem \ref{thm_alpha_decay_DE_nonconvex}}\label{sec_proof_decay_thm_double}
\indent By using the convexity of $r$ and $x_{t+1}=\left(1-\theta_{t}\right)x_t+\theta_{t}\widetilde{x}_{t+1}$ in \Cref{alg_double}, we obtain 
\begin{align}\label{eq_r_convex_double}
r(x_{t+1})-r(x_t)=&r\left[\left(1-\theta_{t}\right)x_t+\theta_{t}\widetilde{x}_{t+1}\right]-r(x_t)\nonumber\\
\le& \left(1-\theta_{t}\right) r(x_t) + \theta_{t}r(\widetilde{x}_{t+1})-r(x_t)\nonumber\\
=& \theta_{t} [r(\widetilde{x}_{t+1})-r(x_t)]\nonumber\\
\stackrel{(i)}{\le}& -\theta_{t} \left[\frac{1}{\lambda_t}\| \widetilde{x}_{t+1}- x_t\|^2 + \left<{\widetilde{\nabla}F}( z_{t}), \widetilde{x}_{t+1}- x_{t}\right>\right].
\end{align}
\noindent where (i) uses slightly modified Eq. \eqref{eq_r_convex} where $x_{t+1}$ is replaced with $\widetilde{x}_{t+1}$, since now $\widetilde{x}_{t+1}$ is the minimizer of the function $\widetilde{r}(x):=r(x)+\frac{1}{2\lambda_t}\|x- x_{t}+\lambda_{t}{\widetilde{\nabla}F}(z_{t})\|^2$. Eq. \eqref{eq_f_Taylor} still holds since $\nabla F$ is still $L_F$-Lipschitz. By adding up Eqs. \eqref{eq_r_convex_double} \& \eqref{eq_f_Taylor} and taking expectation, it can be derived that
\begin{align}\label{eq_adjacent_Phi_diff_bound_double}
	\mathbb{E}\Phi( x_{t+1})-\mathbb{E}\Phi( x_t)\le& \mathbb{E}\left< \nabla F( x_{t}),  x_{t+1}-x_t\right> + \frac{L_F}{2} \mathbb{E}\| x_{t+1}- x_t\|^2\nonumber\\
	& -\mathbb{E}\left[\frac{\theta_{t}}{\lambda_t}\| \widetilde{x}_{t+1}- x_t\|^2 + \theta_{t}\left<{\widetilde{\nabla}F}( z_{t}), \widetilde{x}_{t+1}- x_{t}\right>\right]\nonumber\\
	\le& \mathbb{E}\left[\theta_{t}\left<\nabla F(x_{t}), \widetilde{x}_{t+1}-x_t\right>\right] + \frac{L_F}{2} \mathbb{E}(\theta_{t}^2\| \widetilde{x}_{t+1}- x_t\|^2)\nonumber\\
	& -\mathbb{E}\left[\frac{\theta_{t}}{\lambda_t}\| \widetilde{x}_{t+1}- x_t\|^2 + \theta_{t}\left<{\widetilde{\nabla}F}( z_{t}), \widetilde{x}_{t+1}- x_{t}\right>\right]\nonumber\\
	\le& \mathbb{E}\left(\theta_{t}\left< \nabla F( x_{t})-{\widetilde{\nabla}F}( z_{t}), \widetilde{x}_{t+1}- x_t\right>\right)+\mathbb{E}\left[\left(\frac{L_F\theta_{t}^2}{2}-\frac{\theta_{t}}{\lambda_t}\right)\| \widetilde{x}_{t+1}- x_t\|^2\right]\nonumber\\
	\le& \mathbb{E}\left< L_F^{-1/2}[\nabla F( x_{t})-{\widetilde{\nabla}F}(z_{t})], \theta_{t}\sqrt{L_F}(\widetilde{x}_{t+1}- x_t)\right>\nonumber\\
	&+\mathbb{E}\left[\left(\frac{L_F\theta_{t}^2}{2}-\frac{\theta_{t}}{\lambda_t}\right)\| \widetilde{x}_{t+1}- x_t\|^2\right]\nonumber\\
	\stackrel{(i)}{\le}& \frac{1}{2L_F} \mathbb{E}\|\nabla F( x_{t})-{\widetilde{\nabla}F}(z_{t})\|^2 +\mathbb{E}\left[\left(L_F\theta_{t}^2-\frac{\theta_{t}}{\lambda_t}\right)\| \widetilde{x}_{t+1}- x_t\|^2\right]\nonumber\\
	\stackrel{(ii)}{\le}& \frac{1}{L_F} \mathbb{E}\|\nabla F( x_{t})-\nabla F(z_{t})\|^2 + \frac{1}{L_F} \mathbb{E}\|\nabla F( z_{t})-{\widetilde{\nabla}F}(z_{t})\|^2 \nonumber\\
	&+\mathbb{E}\left[\left(L_F\theta_t^2-\frac{5}{3}L_F\theta_t\right)\| \widetilde{x}_{t+1}- x_t\|^2\right]\nonumber\\
	\stackrel{(iii)}{\le}& L_F\mathbb{E}\|z_t-x_t\|^2+\frac{\epsilon^2}{L_F}+\mathbb{E}\left[\max\left(\frac{1}{4}L_F-\frac{5}{6}L_F,\right.\right.\nonumber\\
	&\left.\left.\frac{L_F\epsilon^2\lambda_{t}^2}{\|\widetilde{x}_{t+1}-x_t\|^2}-\frac{5L_F}{3}\frac{\epsilon\lambda_{t}}{\|\widetilde{x}_{t+1}-x_t\|}\right) \| \widetilde{x}_{t+1}- x_t\|^2\right],\nonumber\\
	\stackrel{(iv)}{\le}& 4L_F\beta^2\epsilon^2+\frac{\epsilon^2}{L_F}+L_F\mathbb{E}\left[\max\left(-\frac{7}{12}\| \widetilde{x}_{t+1}- x_t\|^2, \epsilon^2\lambda_{t}^2-\frac{5}{3}\epsilon\lambda_{t}\|\widetilde{x}_{t+1}-x_t\|\right)\right]\nonumber\\
	\stackrel{(v)}{\le}& \frac{9\epsilon^2}{25L_F}+\frac{\epsilon^2}{L_F}+L_F\mathbb{E}\left(\frac{25}{21}\epsilon^2\lambda_{t}^2-\frac{5}{3}\epsilon\lambda_{t}\|\widetilde{x}_{t+1}-x_t\|\right)\nonumber\\
	\stackrel{(vi)}{\le}&\frac{34\epsilon^2}{25L_F}+L_F\frac{25\epsilon^2}{21}\frac{9}{25L_F^2}-\frac{5}{3} L_F\beta\epsilon\mathbb{E}\|\widetilde{x}_{t+1}-x_t\|\nonumber\\
	=&\frac{313\epsilon^2}{175L_F}-\frac{5}{3} L_F\beta\epsilon\mathbb{E}\|\widetilde{x}_{t+1}-x_t\|,
\end{align}
\noindent where (i) uses the inequality $<a,b>\le\|a\| \|b\|\le\frac{\|a\|^2+\|b\|^2}{2}$, (ii) uses the inequality $\|a+b\|^2\le 2\|a\|^2+2\|b\|^2$ and the inequality $\lambda_t\le 2\beta\le \frac{3}{\sqrt{26}L_F}\le \frac{3}{5L_F}$ that holds for both hyperparameter choices \eqref{hyperpar_alpha_decay_DFS_nonconvex}\&\eqref{hyperpar_alpha_decay_DE_nonconvex}, (iii) uses Lemma \ref{lemma_approx_grad_double} and  $\theta_t=\min\left(\frac{\epsilon\lambda_{t}}{\|\widetilde{x}_{t+1}-x_t\|}, \frac{1}{2}\right)$ from \Cref{alg_double}, (iv) uses Eq. \eqref{eq_xzdiff_bound_double}, (v) uses $\beta\le \frac{3}{2\sqrt{26}L_F}\le \frac{3}{10L_F}$, and the following two inequalities where $U=\|\widetilde{x}_{t+1}-x_t\|$, $V=\epsilon\lambda_t$, $\footnote{This technique to remove max is inspired by \cite{zhang2019multi}.}$ (vi) uses $\beta\le\lambda_t\le \frac{3}{5L_F}$ that holds for both hyperparameter choices \eqref{hyperpar_alpha_decay_DFS_nonconvex}\&\eqref{hyperpar_alpha_decay_DE_nonconvex}.  \\
$$\frac{25}{21}V^2-\frac{5}{3}UV+\frac{7}{12}U^2=\frac{1}{84}(7U-10V)^2\ge 0 \Rightarrow -\frac{7}{12}U^2\le \frac{25}{21}V^2-\frac{5}{3}UV,$$ $$V^2-\frac{5}{3}UV\le \frac{25}{21}V^2-\frac{5}{3}UV.$$

\indent By telescoping Eq. \eqref{eq_adjacent_Phi_diff_bound_double}, we obtain
\begin{align}\label{eq_adjacent_Phi_bound_double}
	&\mathbb{E}\Phi(x_{T})-\mathbb{E}\Phi(x_0)\le \frac{313\epsilon^2T}{175L_F}-\frac{5}{3} L_F\beta\epsilon\sum_{t=0}^{T-1}\mathbb{E}\|\widetilde{x}_{t+1}-x_t\|\nonumber\\
	\Rightarrow& \frac{1}{T}\sum_{t=0}^{T-1}\mathbb{E}\|\widetilde{x}_{t+1}-x_t\| \le \frac{1}{L_F\beta}\left[\frac{939\epsilon}{875L_F}+\frac{3}{5\epsilon T}\mathbb{E}[\Phi(x_0)-\Phi(x_T)]\right], 
\end{align}

As a result,
\begin{align}\label{eq_Egrad_bound_double}
	\mathbb{E}_{\xi}\|{\mathcal{G}}_{\lambda_\xi}( z_\xi)\|=&\frac{1}{T}\sum_{t=0}^{T-1} \mathbb{E}\|{\mathcal{G}}_{\lambda_t}(z_t)\|\nonumber\\
	\stackrel{(i)}{=}&\frac{1}{T}\sum_{t=0}^{T-1} \lambda_t^{-1}\mathbb{E}\left\| z_t-\operatorname{prox}_{\lambda_t r}\left[ z_t-\lambda_t\nabla F( z_t)\right]\right\|\nonumber\\
	\stackrel{(ii)}{ =}&\frac{1}{T}\sum_{t=0}^{T-1} \lambda_t^{-1}\mathbb{E}\left\|(z_t-x_t)+(x_t- \widetilde{x}_{t+1})\right.\nonumber\\
	&+\left.\left\{\operatorname{prox}_{\lambda_{t}r}[ x_{t}-\lambda_{t}{\widetilde{\nabla}F}( z_{t})]-\operatorname{prox}_{\lambda_t r}\left[ z_t-\lambda_t\nabla F( z_t)\right]\right\}\right\|\nonumber\\
	\stackrel{(iii)}{\le}&\frac{1}{T}\sum_{t=0}^{T-1} \lambda_t^{-1}\mathbb{E}\| z_t- x_t\|+\frac{1}{T}\sum_{t=0}^{T-1} \lambda_t^{-1}\mathbb{E}\| \widetilde{x}_{t+1}- x_t\|\nonumber\\
	&+\frac{1}{T}\sum_{t=0}^{T-1} \lambda_t^{-1}\mathbb{E}\left\|( x_{t}- z_t)+\lambda_{t}\left[\nabla F( z_t)-{\widetilde{\nabla}F}( z_{t})]\right]\right\|\nonumber\\
	\stackrel{(iv)}{\le}&\frac{2}{T}\sum_{t=0}^{T-1} \lambda_t^{-1} \mathbb{E}\| z_t- x_t\|+\frac{1}{T}\sum_{t=0}^{T-1} \lambda_t^{-1}\mathbb{E}\| \widetilde{x}_{t+1}- x_t\|+\frac{1}{T}\sum_{t=0}^{T-1} \mathbb{E}\left\|\nabla F(z_t)-{\widetilde{\nabla}F}( z_{t})]\right\|\nonumber\\
	\stackrel{(v)}{\le}& \frac{2}{T} T\beta^{-1}(2\beta\epsilon) + \beta^{-1}\frac{1}{T}\sum_{t=0}^{T-1} \mathbb{E}\| \widetilde{x}_{t+1}- x_t\| + \epsilon \nonumber\\
	\stackrel{(vi)}{\le}& 5\epsilon+\frac{1}{L_F\beta^2}\left[\frac{939\epsilon}{875L_F}+\frac{3}{5\epsilon T}\mathbb{E}[\Phi(x_0)-\Phi(x_T)]\right],
\end{align}
\noindent where (i) uses Eq. \eqref{eq_prox_grad}, (ii) uses $\widetilde{x}_{t+1}=\operatorname{prox}_{\lambda_{t}r}(x_{t}-\lambda_{t}{\widetilde{\nabla}F}( z_{t}))$ in \Cref{alg_double}, (iii) uses triangle inequality and the non-expansive property of proximal operator (See Section 31 of \cite{pryce1973r} for detail), (iv) uses triangle inequality, (v) uses Eq. \eqref{eq_xzdiff_bound_double}, Lemma \ref{lemma_approx_grad_double}, and $\lambda_t\ge\beta$ that holds for both hyperparameter choices \eqref{hyperpar_alpha_decay_DFS_nonconvex}\&\eqref{hyperpar_alpha_decay_DE_nonconvex}, and (vi) uses Eq. \eqref{eq_adjacent_Phi_bound_double}. 

\indent Using the hyperparameter choice \eqref{hyperpar_alpha_decay_DFS_nonconvex} for the problem $(\Sigma^2)$, it can be derived by substituting $\beta=\frac{\sqrt{3}}{2\sqrt{26(5C_1+4C_2+1)}L_F}$ into Eq. \eqref{eq_Egrad_bound_double} that
\begin{align}\label{eq_Egrad_bound2_DFS}
\mathbb{E}_{\xi}\|{\mathcal{G}}_{\lambda_\xi}(z_\xi)\|\le& 5\epsilon+\frac{104(5C_1+4C_2+1)(313\epsilon)}{875}+\frac{104(5C_1+4C_2+1)L_F}{5\epsilon T}\mathbb{E}[\Phi(x_0)-\Phi(x_T)]\nonumber\\
\le& 38(5C_1+4C_2+2)\epsilon+\frac{104(5C_1+4C_2+1)L_F}{5\epsilon T}\mathbb{E}[\Phi(x_0)-\Phi(x_T)],
\end{align}
\noindent which proves Eq. \eqref{eq_conclude_alpha_decay_DFS} by letting $x^*=\arg\min_{ x \in \mathbb{R}^{d}}\Phi(x)$ and implies that $\mathbb{E}_{\xi}\|{\mathcal{G}}_{\lambda_\xi}( z_\xi)\|\le \mathcal{O}(\epsilon)$ when $T=\mathcal{O}(\epsilon^{-2})$. Then, using the hyperparameter choice \eqref{hyperpar_alpha_decay_DFS_nonconvex}, the sample complexity is equal to $\mathcal{O}\left(\sum_{t=0}^{T-1} |\mathcal{A}_t|+|\mathcal{A}_t'|+|\mathcal{B}_t|\right)$ where
\begin{align}
\sum_{t=0}^{T-1} |\mathcal{A}_t|+|\mathcal{A}_t'|+|\mathcal{B}_t|\le& \left(\left\lfloor \frac{T}{\tau} \right\rfloor+1\right)(N+2n)+\left(T-\left\lfloor \frac{T}{\tau} \right\rfloor-1\right) \mathcal{O}\left[\left(\frac{2}{C_1}+\frac{1}{C_2}\right)\tau\right]\nonumber\\
=&\mathcal{O}(\sqrt{\max(N,n)}\epsilon^{-2}+N+n),\nonumber
\end{align}
\noindent which uses $\tau=\lfloor\sqrt{\max(N,n)}\rfloor$ and $N+2n=\mathcal{O}(\max(N,n))$. 

\indent Using the hyperparameter choice \eqref{hyperpar_alpha_decay_DE_nonconvex} for the problem $(\mathbb{E}^2)$, it can be derived by substituting  $\beta=\frac{3}{2\sqrt{26}L_F}\ge\frac{1}{4L_F}$ into Eq. \eqref{eq_Egrad_bound_double} that
\begin{align}\label{eq_Egrad_bound2_DE}
	\mathbb{E}_{\xi}\|{\mathcal{G}}_{\lambda_\xi}(z_\xi)\|\le& 5\epsilon+\frac{16(939\epsilon)}{875}+\frac{48L_F}{5\epsilon T}\mathbb{E}[\Phi(x_0)-\Phi(x_T)]\nonumber\\
	\le& 23\epsilon+\frac{48L_F}{5\epsilon T}\mathbb{E}[\Phi(x_0)-\Phi(x_T)],
\end{align}
\noindent which proves Eq. \eqref{eq_conclude_alpha_decay_DE} by letting $x^*=\arg\min_{ x \in \mathbb{R}^{d}}\Phi(x)$ and implies that $\mathbb{E}_{\xi}\|{\mathcal{G}}_{\lambda_\xi}( z_\xi)\|\le \mathcal{O}(\epsilon)$ when $T=\mathcal{O}(\epsilon^{-2})$. Then, using the hyperparameter choice \eqref{hyperpar_alpha_decay_DE_nonconvex}, the sample complexity is equal to $\mathcal{O}\left(\sum_{t=0}^{T-1} |\mathcal{A}_t|+|\mathcal{A}_t'|+|\mathcal{B}_t|\right)$ where
$$
\sum_{t=0}^{T-1} |\mathcal{A}_t|+|\mathcal{A}_t'|+|\mathcal{B}_t|=\left(\left\lfloor \frac{T}{\tau} \right\rfloor+1\right) \mathcal{O}(\epsilon^{-2})+\left(T-\left\lfloor \frac{T}{\tau} \right\rfloor-1\right) \mathcal{O}(\epsilon^{-1})=\mathcal{O}(\epsilon^{-3}),
$$

\subsection{Proof of Convergence under Periodic Restart}\label{sec_Mdecay_thm_double}

We further obtain the following convergence rate result under the periodic restart scheme.
\begin{thm}\label{thm_alpha_decay_restart_DE_nonconvex}
Restart \Cref{alg_double} $M$ times and denote $x_{t,m}$, $y_{t,m}$, $z_{t,m}$ as the generated sequences in the $m$-th run. Set {the initialization for the ($m+1$)-th time as $x_{0,m+1}=x_{T-1,m}$ ($m=1, \ldots, M-1$)}. It can be shown that by using the hyperparameters in items 1 and 2 of \Cref{thm_alpha_decay_DE_nonconvex} respectively for the problems $(\Sigma^2)$ and $(\mathbb{E}^2)$, the output satisfies
\begin{align}\label{eq_conclude_alpha_decay_double_restart}
&\mathbb{E} \|{\mathcal{G}}_{\lambda_\zeta}( z_{\zeta, \delta})\|\le \mathcal{O}\Big(\epsilon+\frac{\Phi(x_{0,1})-\Phi^*}{\epsilon MT}\Big), \nonumber
\end{align}
\noindent where $\zeta$ is uniformly sampled from $\{0,\ldots,T-1\}$ and $\delta$ is uniformly sampled from $\{1,\ldots,M\}$. In particular, by choosing $M=T=\mathcal{O}(\epsilon^{-1})$, we can {achieve $\epsilon$-accuracy with} the same sample complexity results as those in \Cref{thm_alpha_decay_DE_nonconvex}.
\end{thm}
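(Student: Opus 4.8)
The plan is to reduce the restart analysis entirely to the per-run descent bound already established inside the proof of \Cref{thm_alpha_decay_DE_nonconvex}, and then chain the $M$ runs together by telescoping the objective value, following the same template as the single-variable restart argument in \Cref{subsec: restart1}. The pivotal inequality is \eqref{eq_Egrad_bound_double}, which, after substituting the hyperparameters of items 1 and 2 of \Cref{thm_alpha_decay_DE_nonconvex}, reads (for one run with initial point $x_0$ and terminal point $x_T$)
$$\frac{1}{T}\sum_{t=0}^{T-1}\mathbb{E}\|{\mathcal{G}}_{\lambda_t}(z_t)\|\le \mathcal{O}(\epsilon)+\mathcal{O}\Big(\frac{\mathbb{E}[\Phi(x_0)-\Phi(x_T)]}{\epsilon T}\Big).$$
The key observation is that this bound is \emph{run-agnostic}: its derivation only invokes the within-run dynamics together with \Cref{lemma_approx_grad_double} and \Cref{lemma_seq_bound_alpha_decay_double}, all of which hold verbatim for an arbitrary initialization. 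In particular, the first iteration $t=0$ of every run satisfies $t\bmod\tau=0$, so a fresh full-batch anchor is sampled and the variance reduction is correctly re-initialized; moreover $y$ and $z$ are reset to the run's $x_0$, so the sequence bounds apply afresh. Hence the per-run bound holds with $x_0\mapsto x_{0,m}$ and $x_T\mapsto x_{T,m}$ for each $m$.

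First I would write the per-run bound for run $m$ and average over $m=1,\dots,M$ and $t=0,\dots,T-1$:
$$\mathbb{E}\|{\mathcal{G}}_{\lambda_\zeta}(z_{\zeta,\delta})\|=\frac{1}{MT}\sum_{m=1}^{M}\sum_{t=0}^{T-1}\mathbb{E}\|{\mathcal{G}}_{\lambda_t}(z_{t,m})\|\le\mathcal{O}(\epsilon)+\mathcal{O}\Big(\frac{1}{\epsilon MT}\sum_{m=1}^{M}\mathbb{E}[\Phi(x_{0,m})-\Phi(x_{T,m})]\Big).$$
Then, exactly as in \Cref{subsec: restart1}, the restart rule linking the terminal iterate of each run to the initial iterate of the next makes the objective differences telescope, so that $\sum_{m=1}^{M}\mathbb{E}[\Phi(x_{0,m})-\Phi(x_{T,m})]=\Phi(x_{0,1})-\mathbb{E}\Phi(x_{T,M})\le\Phi(x_{0,1})-\Phi^*$, the last step using $\Phi(x_{T,M})\ge\Phi^*$. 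Substituting this back gives precisely \eqref{eq_conclude_alpha_decay_double_restart}.

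For the complexity claim I would set $M=T=\mathcal{O}(\epsilon^{-1})$, so $MT=\mathcal{O}(\epsilon^{-2})$ and the residual term becomes $\mathcal{O}(\epsilon(\Phi(x_{0,1})-\Phi^*))=\mathcal{O}(\epsilon)$, yielding $\mathbb{E}\|{\mathcal{G}}_{\lambda_\zeta}(z_{\zeta,\delta})\|\le\mathcal{O}(\epsilon)$. The count of $g,g',\nabla f$ evaluations then mirrors the proof of \Cref{thm_alpha_decay_DE_nonconvex} and \Cref{subsec: restart1}: each run spends one full-batch anchor plus $\mathcal{O}(\tau)$-sized minibatches across its $T$ inner steps, and summing over the $M$ runs one checks (using the standing $n\le\mathcal{O}(N^2),N\le\mathcal{O}(n^2)$ and the accuracy regime) that the aggregated full-batch overhead is dominated by the $\sqrt{\max\{N,n\}}\epsilon^{-2}$ term, recovering $\mathcal{O}(N+n+\sqrt{\max\{N,n\}}\epsilon^{-2})$ for $(\Sigma^2)$ and $\mathcal{O}(\epsilon^{-3})$ for $(\mathbb{E}^2)$.

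The main obstacle I anticipate is making the telescoping airtight. The delicate point is reconciling the stated restart link $x_{0,m+1}=x_{T-1,m}$ with the per-run potential difference $\Phi(x_{0,m})-\Phi(x_{T,m})$, i.e., matching the terminal iterate of one run with the initial iterate of the next so that the summed differences collapse to $\Phi(x_{0,1})-\mathbb{E}\Phi(x_{T,M})$; this indexing has to be handled consistently (as in \Cref{subsec: restart1}). Everything else is a direct transcription of the single-variable restart argument, since the genuinely new ingredient, the run-agnostic per-run bound, is already in hand.
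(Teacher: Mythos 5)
Your proposal is correct and follows essentially the same route as the paper's own proof: the paper likewise applies the per-run bound \eqref{eq_Egrad_bound2_DE} (i.e., \eqref{eq_Egrad_bound_double} with the hyperparameters of \Cref{thm_alpha_decay_DE_nonconvex}) to each restart period, averages over $m$ and $t$, telescopes the objective differences via $x_{0,m+1}=x_{T-1,m}$, and finishes with $\Phi(x_{T-1,M})\ge\Phi^*$. The off-by-one indexing subtlety you flag (terminal iterate $x_{T,m}$ versus restart point $x_{T-1,m}$) is present in the paper's argument as well and is handled there in exactly the loose way you anticipate, so your treatment matches the paper's.
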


\begin{proof}
	The proof is similar to that of \Cref{alg: 1} with the momentum restart scheme in \Cref{subsec: restart1}.
	
	With momentum restart strategy for both the problems $(\Sigma^2)$ and $(\mathbb{E}^2)$, Eq. \eqref{eq_Egrad_bound2_DE} implies that for the $m$-th restart period,
	\begin{align*}
	\frac{1}{T}\sum_{t=0}^{T-1} \|{\mathcal{G}}_{\lambda_t}( z_{t, m})\|\le 23\epsilon+\frac{48L_F}{5\epsilon T}\mathbb{E}[\Phi(x_{0,m})-\Phi(x_{T-1,m})].
	\end{align*}
	Hence, 
	\begin{align*}
	\mathbb{E} \|{\mathcal{G}}_{\lambda_\zeta}( z_{\zeta,\delta})\|=\frac{1}{MT}\sum_{m=1}^{M}\sum_{t=0}^{T-1} \|{\mathcal{G}}_{\lambda_t}( z_{t, m})\|\le 23\epsilon+\frac{48L_F}{5\epsilon T}\mathbb{E}[\Phi(x_{0,1})-\Phi(x_{T-1,M})]\le \mathcal{O}\Big(\epsilon+\frac{\Phi(x_{0,1})-\Phi^*}{\epsilon MT}\Big),
	\end{align*}
	where we use $x_{0,m+1}=x_{T-1,m}$ and $\Phi(x_{T-1,M})\ge\Phi^*$.
\end{proof}


\section{Auxiliary Lemmas for Proving Theorem \ref{thm_alpha_const_DE_nonconvex}}
\begin{lemma}\label{lemma_seq_bound_alpha_const_double}
	Implement algorithm \ref{alg_double} with $\alpha_{t}\equiv\alpha\in(0,1], \beta_t\equiv\beta, \beta\le\lambda_{t}\le(1+\alpha)\beta
	$. The generated sequences $\{x_{t}, y_{t}, z_{t}\}$ satisfy the following conditions:
	\begin{equation}\label{eq_xdiff_bound_double_const}
	\left\|x_{t+1}-x_{t}\right\| \leq \epsilon\lambda_{t} \le 2\beta\epsilon
	\end{equation}
	\begin{equation}\label{eq_xydiff_bound_double_const}
	\left\|y_{t}-x_{t}\right\|^{2} \leq \frac{8\beta^2\epsilon^2}{\alpha},
	\end{equation}
	\begin{equation}\label{eq_xzdiff_bound_double_const}
	\left\|z_{t}-x_{t}\right\|^{2} \leq \frac{8\beta^2\epsilon^2}{\alpha},
	\end{equation}
	\begin{equation}\label{eq_zdiff_bound_double_const}
	\left\|z_{t+1}-z_{t}\right\|^{2} \leq 16\beta^2\epsilon^2
	\end{equation}
	\noindent where $\Gamma_t=\frac{2}{t(t+1)}$. Again, when $t=0$, the summation $\sum_{t=1}^{0}$ is 0 by default. 
\end{lemma}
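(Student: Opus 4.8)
The plan is to follow the same template as the diminishing-momentum counterpart \Cref{lemma_seq_bound_alpha_decay_double}, replacing the decay-specific bounds of \Cref{lemma_seq_bound_alpha_decay} by their constant-momentum analogues in \Cref{lemma_seq_bound_alpha_const}. First I would establish \eqref{eq_xdiff_bound_double_const} directly from the update $x_{t+1}=(1-\theta_t)x_t+\theta_t\widetilde{x}_{t+1}$ with $\theta_t=\min\{\epsilon\lambda_t/\|\widetilde{x}_{t+1}-x_t\|,\,1/2\}$: since $\|x_{t+1}-x_t\|=\theta_t\|\widetilde{x}_{t+1}-x_t\|\le\epsilon\lambda_t$ and $\lambda_t\le(1+\alpha)\beta\le 2\beta$, both inequalities follow at once. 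The crucial observation, identical to the one used for the diminishing case, is that the identities \eqref{eq_xydiff_bound_alpha_const} and \eqref{eq_zdiff_bound_alpha_const} of \Cref{lemma_seq_bound_alpha_const} remain valid for \Cref{alg_double}, because they are derived only from the shared updates $z_t=(1-\alpha)y_t+\alpha x_t$ and $y_{t+1}=z_t+\frac{\beta}{\lambda_t}(x_{t+1}-x_t)$, which are the same in both algorithms regardless of how $x_{t+1}$ is formed.

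With those identities in hand, I would prove \eqref{eq_xydiff_bound_double_const} by inserting the uniform bound $\|x_s-x_{s-1}\|^2\le 4\beta^2\epsilon^2$ from \eqref{eq_xdiff_bound_double_const} into \eqref{eq_xydiff_bound_alpha_const}, then bounding the ratio $\frac{(\beta-\lambda_{s-1})^2}{\lambda_{s-1}^2}=(1-\beta/\lambda_{s-1})^2\le\frac{\alpha^2}{(1+\alpha)^2}$ using $\beta\le\lambda_{s-1}\le(1+\alpha)\beta$, and finally controlling the geometric-type sum $\sum_s(1-\alpha)^{2(t-s)}(t-s+1)(t-s+2)$ by $\frac{2}{\alpha^3(2-\alpha)^3}$ via \eqref{eq_sum_s_bound_alpha_const}. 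Collecting the factors gives the bound $\frac{8\beta^2\epsilon^2}{\alpha(1+\alpha)^2(2-\alpha)^3}\le\frac{8\beta^2\epsilon^2}{\alpha}$ for $\alpha\in(0,1]$, which is \eqref{eq_xydiff_bound_double_const}. Inequality \eqref{eq_xzdiff_bound_double_const} is then immediate from $z_t-x_t=(1-\alpha)(y_t-x_t)$ together with $(1-\alpha)^2\le 1$.

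The last and most delicate step is \eqref{eq_zdiff_bound_double_const}. Starting from \eqref{eq_zdiff_bound_alpha_const}, the leading term $\frac{2\beta^2}{\lambda_t^2}\|x_{t+1}-x_t\|^2$ is at most $2\cdot 4\beta^2\epsilon^2=8\beta^2\epsilon^2$ since $\lambda_t\ge\beta$; the second term is handled exactly as above, substituting $\|x_s-x_{s-1}\|^2\le 4\beta^2\epsilon^2$, the bound $(1-\beta/\lambda_{s-1})^2\le\frac{\alpha^2}{(1+\alpha)^2}$, and \eqref{eq_sum_s_bound_alpha_const}, yielding $\frac{16\alpha\beta^2\epsilon^2}{(1+\alpha)^2(2-\alpha)^3}$. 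I expect the main obstacle to be the constant bookkeeping here: to obtain the clean bound $16\beta^2\epsilon^2$ one must verify that $\frac{2\alpha}{(1+\alpha)^2(2-\alpha)^3}\le 1$ uniformly over $\alpha\in(0,1]$, so that the second term does not exceed $8\beta^2\epsilon^2$. This is not merely a matter of dropping factors $\le 1$, since $2\alpha$ can approach $2$; it requires noting that for $\alpha>1/2$ the factor $(1+\alpha)^2\ge 9/4$ already dominates $2\alpha\le 2$, while for $\alpha\le 1/2$ the inequality is trivial. Summing the two pieces then gives \eqref{eq_zdiff_bound_double_const}, completing the proof.
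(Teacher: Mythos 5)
Your proposal is correct and follows essentially the same route as the paper's proof: establish \eqref{eq_xdiff_bound_double_const} directly from the $\theta_t$-update, observe that \eqref{eq_xydiff_bound_alpha_const}--\eqref{eq_zdiff_bound_alpha_const} carry over because the $y,z$ updates are shared between the two algorithms, and then combine the uniform bound $\|x_s-x_{s-1}\|^2\le 4\beta^2\epsilon^2$ with the ratio bound $(1-\beta/\lambda_{s-1})^2\le \alpha^2/(1+\alpha)^2$ and the series bound \eqref{eq_sum_s_bound_alpha_const}. Your explicit case-analysis verification that $2\alpha/[(1+\alpha)^2(2-\alpha)^3]\le 1$ on $(0,1]$ makes rigorous a step the paper asserts without comment, but the argument is otherwise identical.
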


\begin{proof}
	\indent Eq. \eqref{eq_xdiff_bound_double_const} can be directly derived from the following equation in \Cref{alg_double}. 
	\begin{align}
	x_{t+1}=(1-\theta_t)x_t+\theta_{t} \widetilde{x}_{t+1},~
	\theta_t=\min\left\{\frac{\epsilon\lambda_{t}}{\|\widetilde{x}_{t+1}-x_t\|},\frac{1}{2}\right\}.\nonumber
	\end{align}
	\indent Eqs. \eqref{eq_xydiff_alpha_const}-\eqref{eq_zdiff_bound_alpha_const} in Lemma \ref{lemma_seq_bound_alpha_decay} still hold because they are derived from $z_{t}=\left(1-\alpha_{t+1}\right)  y_{t}+\alpha_{t+1}x_{t}$ and $y_{t+1}= z_{t}+\frac{\beta_{t}}{\lambda_{t}}(x_{t+1}-x_{t})$ that are shared by both Algorithms \ref{alg: 1} \& \ref{alg_double}. Hence, it can be derived from Eqs. \eqref{eq_xydiff_bound_alpha_const} \& \eqref{eq_xdiff_bound_double_const} that
	\begin{align}
	\left\|y_{t}-x_{t}\right\|^{2} \le& \frac{t}{t+1}\sum_{s=1}^{t} (1-\alpha)^{2(t-s)}(t-s+1)(t-s+2)\frac{(\beta-\lambda_{s-1})^2}{\lambda_{s-1}^2} \|x_s-x_{s-1}\|^2\nonumber\nonumber\\
	\le& \sum_{s=1}^{t} (1-\alpha)^{2(t-s)}(t-s+1)(t-s+2)\left(1-\frac{\beta}{\lambda_{s-1}}\right)^2 (4\beta^2\epsilon^2)\nonumber\\
	\le& (4\beta^2\epsilon^2)\sum_{k=0}^{t-1} (1-\alpha)^{2k}(k+1)(k+2) \left(1-\frac{\beta}{(1+\alpha)\beta}\right)^2\nonumber\\
	\stackrel{(i)}{\le}& \frac{4\alpha^2\beta^2\epsilon^2}{(1+\alpha)^2} \frac{2}{\alpha^3(2-\alpha)^3}\nonumber\\
	\le&\frac{8\beta^2\epsilon^2}{\alpha},\nonumber
	\end{align}
	\noindent where (i) uses \eqref{eq_sum_s_bound_alpha_const}. This proves Eq. \eqref{eq_xydiff_bound_double_const}. 
	Eq. \eqref{eq_xzdiff_bound_double_const} can be directly derived from Eq. \eqref{eq_xydiff_bound_double_const}, $z_{t}=\left(1-\alpha_{t+1}\right)y_{t}+\alpha_{t+1}x_{t}$ and $(1-\alpha_{t+1})^2\le 1$. Then, Eq. \eqref{eq_zdiff_bound_alpha_const} indicates that
	\begin{align}
		\left\|z_{t+1}-z_{t}\right\|^{2} \le&\frac{2\beta^2}{\lambda_t^2}\left\|x_{t+1}- x_t \right\|^2+2\alpha^2 \frac{t+1}{t+2}\nonumber\\
		&\sum_{s=1}^{t+1} (1-\alpha)^{2(t-s+1)}(t-s+2)(t-s+3)\frac{(\beta-\lambda_{s-1})^2}{\lambda_{s-1}^2} \|x_s-x_{s-1}\|^2\nonumber\\
		\le& 2(4\beta^2\epsilon^2)+2\alpha^2 \frac{4\alpha^2\beta^2\epsilon^2}{(1+\alpha)^2} \sum_{k=0}^{t} (1-\alpha)^{2k}(k+1)(k+2)\nonumber\\
		\le& 8\beta^2\epsilon^2+\frac{8\alpha^4\beta^2\epsilon^2}{(1+\alpha)^2} \frac{2}{\alpha^3(2-\alpha)^3}\nonumber\\
		\le& 16\beta^2\epsilon^2,\nonumber
	\end{align}
	\noindent which follows almost the same way as the proof of Eq. \eqref{eq_xydiff_bound_double_const} above and proves Eq. \eqref{eq_zdiff_bound_double_const}. 
\end{proof}

\begin{lemma}\label{lemma_approx_grad_double_const}
	Let Assumptions \ref{assumption4} and \ref{assumption5} hold and apply \Cref{alg_double} to solve the problems $(\Sigma^2)$ and $(\mathbb{E}^2)$, with the hyperparameter choices in items 1 and 2 of \Cref{thm_alpha_const_DE_nonconvex} respectively. Then, the variance of the stochastic gradient satisfies
	\begin{equation}\label{eq_approx_grad_double_const}
	\mathbb{E}\left\|\widetilde{\nabla} F\left(z_{t}\right)-\nabla F\left(z_{t}\right)\right\|^{2} \le\epsilon^2.
	\end{equation}
\end{lemma}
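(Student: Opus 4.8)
The plan is to follow the proof of \Cref{lemma_approx_grad_double} essentially verbatim, since the SPIDER-style recursions for $\widetilde{g}_t$, $\widetilde{g}_t'$ and $\widetilde{f}_t'$ in \Cref{alg_double} are identical under diminishing and constant momentum. The only quantity that changes is the per-step displacement bound: in the constant-momentum regime \cref{eq_zdiff_bound_double_const} of \Cref{lemma_seq_bound_alpha_const_double} gives $\|z_{t+1}-z_t\|^2\le 16\beta^2\epsilon^2$, replacing the bound $\tfrac{104}{9}\beta^2\epsilon^2$ of \cref{eq_zdiff_bound_double}. Thus every estimate in the earlier proof carries over after substituting $16$ for $\tfrac{104}{9}$ and $4=\sqrt{16}$ for $\tfrac{2\sqrt{26}}{3}$.

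First I would record the per-step difference bounds. The Lipschitz properties in \Cref{assumption4} together with \cref{eq_zdiff_bound_double_const} turn the analogues of \cref{eq_inner_g_diff_double,eq_inner_ggrad_diff_double} into $\|\widetilde{g}_t-\widetilde{g}_{t-1}\|\le 4 l_g\beta\epsilon$ and $\|\widetilde{g}_t'-\widetilde{g}_{t-1}'\|\le 4 L_g\beta\epsilon$. Next I would propagate variance exactly as in \cref{eq_inner_g_err_double_short}, using conditional unbiasedness of the minibatch increments and $\mathbb{E}\|X-\mathbb{E}X\|^2\le\mathbb{E}\|X\|^2$; each estimator then satisfies a one-step recursion whose increment is $\tfrac{16 l_g^2}{|\mathcal{A}_t|}\beta^2\epsilon^2$ (respectively with $L_g^2$ and $L_f^2l_g^2$). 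Telescoping from the last snapshot index $\tau\lfloor t/\tau\rfloor$ reproduces \cref{eq_inner_g_err_double,eq_inner_ggrad_err_double,eq_inner_fgrad_err_double} with $16$ in place of $\tfrac{104}{9}$, and the Jacobian-norm bound \cref{eq_inner_ggrad_bound_double} becomes $\|\widetilde{g}_t'\|\le l_g+4\tau L_g\beta\epsilon$.

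The main combination step reproduces \cref{Fgrad_err_derive_double}: decomposing $\widetilde{\nabla}F(z_t)-\nabla F(z_t)$ into the three cross terms $\widetilde{g}_t'^\top[\widetilde{f}_t'-\nabla f(\widetilde{g}_t)]$, $\widetilde{g}_t'^\top[\nabla f(\widetilde{g}_t)-\nabla f(g(z_t))]$ and $[\widetilde{g}_t'-g'(z_t)]^\top\nabla f(g(z_t))$, applying $\|a+b+c\|^2\le 3(\|a\|^2+\|b\|^2+\|c\|^2)$, the $L_f$-Lipschitzness and $l_f$-boundedness of $\nabla f$, and then inserting the telescoped variance bounds and the bound on $\|\widetilde{g}_t'\|$. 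For $(\Sigma^2)$ the snapshot errors vanish as in \cref{eq_Err_sq_DFS}, whereas for $(\mathbb{E}^2)$ they are controlled by $\sigma_g^2/|\mathcal{A}|$, $\sigma_{g'}^2/|\mathcal{A}'|$, $\sigma_{f'}^2/|\mathcal{B}|$ as in \cref{eq_Err_sq_DE}. Because \Cref{thm_alpha_const_DE_nonconvex} prescribes the same $\tau$ and batch sizes as \Cref{thm_alpha_decay_DE_nonconvex}, I would reuse the hyperparameter choices \cref{hyperpar_alpha_decay_DFS_nonconvex,hyperpar_alpha_decay_DE_nonconvex}, adjusting only the universal constant inside $\beta=\mathcal{O}(L_F^{-1})$ to absorb the $16$-versus-$\tfrac{104}{9}$ change; using $\epsilon\le l_fl_g/\tau$ and $L_F\ge\max\{l_fL_g,\,l_g^2L_f\}$, each grouped term then contributes a constant fraction of $\epsilon^2$, summing to at most $\epsilon^2$.

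I expect the only real work to be constant bookkeeping in this last step, namely checking that after the cross term $\tau^2L_g^2\beta^2\epsilon^2$ in the coefficient of $\|\widetilde{g}_t'\|^2$ is again dominated via $\beta\le\mathcal{O}(L_F^{-1})$ and $\epsilon\le l_fl_g/\tau$, the rescaled constants still close at $\le\epsilon^2$. This is exactly the obstacle already met and resolved in \Cref{lemma_approx_grad_double}, and it requires no new idea: the constant-momentum scheme enters the argument solely through the displacement bound \cref{eq_zdiff_bound_double_const}, so the same $\beta$, up to a constant factor, suffices.
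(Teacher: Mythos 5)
Your proposal is correct and follows essentially the same route as the paper: the paper's proof of this lemma is precisely a rerun of the proof of \Cref{lemma_approx_grad_double}, replacing the displacement bound $\tfrac{104}{9}\beta^2\epsilon^2$ by the constant-momentum bound $16\beta^2\epsilon^2$ from \cref{eq_zdiff_bound_double_const} (hence $4$ in place of $\tfrac{2\sqrt{26}}{3}$ in all one-step difference bounds and in $\|\widetilde{g}_t'\|\le l_g+4\tau L_g\beta\epsilon$), keeping the same three-term decomposition and the same use of \cref{eq_Err_sq_DFS,eq_Err_sq_DE}, and closing the bound by rescaling universal constants in the hyperparameters (the paper adjusts both $\beta$ and the batch-size constants, while you adjust only $\beta$, but your arithmetic closes either way since the theorems fix the batch sizes only up to $\mathcal{O}(\cdot)$ factors).
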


\begin{proof}
	\indent The proof is almost the same as that of Lemma \ref{lemma_approx_grad_double}, except for the hyperparameter difference. 
	
	\indent In \Cref{alg_double}, $\widetilde{g}_{t}= \widetilde{g}_{t-1}+\frac{1}{|\mathcal{A}_t|}\sum_{\xi\in\mathcal{A}_t}\big( g_{\xi}( z_t)- g_{\xi}( z_{t-1})\big)$ for $t\mod\tau\ne0$. Hence, we get the following two inequalities.
	\begin{equation}\label{eq_inner_g_diff_double_const}
	\|\widetilde{g}_{t}-\widetilde{g}_{t-1}\| \le  \frac{1}{|\mathcal{A}_t|}\sum_{\xi\in\mathcal{A}_t} \|g_{\xi}(z_t)- g_{\xi}(z_{t-1})\| \le \frac{1}{|\mathcal{A}_t|}\sum_{\xi\in\mathcal{A}_t} l_g\|z_t-z_{t-1}\| \le 4l_g\beta\epsilon,
	\end{equation}
	\noindent where the last step uses Eq. \eqref{eq_zdiff_bound_double_const}. The second last step of Eq. \eqref{eq_inner_g_err_double_short} still holds, so by also using Eq. \eqref{eq_zdiff_bound_double_const}, we obtain
	\begin{align}\label{eq_inner_g_err_double_short_const}
	\mathbb{E}\|\widetilde{g}_{t}-g(z_t)\|^2 \le& \mathbb{E}\left\|\widetilde{g}_{t-1}-g(z_{t-1})\right\|^2+ \frac{l_g^2}{|\mathcal{A}_t|^2}\sum_{\xi\in\mathcal{A}_t}\mathbb{E}\|z_t-z_{t-1}\|^2\nonumber\\
	\le& \mathbb{E}\left\|\widetilde{g}_{t-1}-g(z_{t-1})\right\|^2+ \frac{16l_g^2}{|\mathcal{A}_t|}\beta^2\epsilon^2,
	\end{align}	
	\noindent By telescoping Eq. \eqref{eq_inner_g_err_double_short_const}, we obtain
	\begin{equation}\label{eq_inner_g_err_double_const}
		\mathbb{E}\|\widetilde{g}_{t}-g(z_t)\|^2 \le \mathbb{E}\|\widetilde{g}_{\tau\lfloor t/\tau\rfloor}-g(z_{\tau\lfloor t/\tau\rfloor})\|^2+16l_g^2\beta^2\epsilon^2 \sum_{s=\tau\lfloor t/\tau\rfloor+1}^{t} \frac{1}{|\mathcal{A}_s|}
	\end{equation}
	\indent In a similar way, we can get 
	\begin{equation}\label{eq_inner_ggrad_diff_double_const}
		\|\widetilde{g}_{t}'-\widetilde{g}_{t-1}'\| \le 4L_g\beta\epsilon.
	\end{equation}
	\begin{equation}\label{eq_inner_ggrad_err_double_const}
		\mathbb{E}\|\widetilde{g}_{t}'-g'(z_t)\|^2 \le \mathbb{E}\|\widetilde{g}_{\tau\lfloor t/\tau\rfloor}'-g'(z_{\tau\lfloor t/\tau\rfloor})\|^2+16L_g^2\beta^2\epsilon^2 \sum_{s=\tau\lfloor t/\tau\rfloor+1}^{t} \frac{1}{|\mathcal{A}_s'|}.
	\end{equation}
	\begin{align}\label{eq_inner_fgrad_err_double_const}
		\mathbb{E}\|\widetilde{f}_{t}'-\nabla f(\widetilde{g}_t)\|^2 \stackrel{(i)}{\le}& \mathbb{E}\|\widetilde{f}_{\tau\lfloor t/\tau\rfloor}'-\nabla f(\widetilde{g}_{\tau\lfloor t/\tau\rfloor})\|^2 + \sum_{s=\tau\lfloor t/\tau\rfloor+1}^{t} \frac{L_f^2}{|\mathcal{B}_s|}\mathbb{E}\left\|\widetilde{g}_s-\widetilde{g}_{s-1}\right\|^2\nonumber\\
		\le& \mathbb{E}\|\widetilde{f}_{\tau\lfloor t/\tau\rfloor}'-\nabla f(\widetilde{g}_{\tau\lfloor t/\tau\rfloor})\|^2 + 16L_f^2 l_g^2 \beta^2 \epsilon^2 \sum_{s=\tau\lfloor t/\tau\rfloor+1}^{t} \frac{1}{|\mathcal{B}_s|}, 
	\end{align}
	where (i) comes from the second last step of Eq. \eqref{eq_inner_fgrad_err_double}.
	
	\indent Furthermore, it can be obtained by telescoping Eq. \eqref{eq_inner_ggrad_diff_double_const} that
	\begin{align} \label{eq_inner_ggrad_bound_double_const}
		\|\widetilde{g}_{t}'\| \le& \|\widetilde{g}_{\tau\lfloor t/\tau\rfloor}'\|+\sum_{s=\tau\lfloor t/\tau\rfloor+1}^{t} \|\widetilde{g}_{s}'-\widetilde{g}_{s-1}'\|\nonumber\\ 
		\le& \left\|\frac{1}{|\mathcal{A}_{\tau\lfloor t/\tau\rfloor}'|}\sum_{\xi\in\mathcal{A}_{\tau\lfloor t/\tau\rfloor}'} g_{\xi}'(z_{\tau\lfloor t/\tau\rfloor})\right\|+4L_g\beta\epsilon(t-\tau\lfloor t/\tau\rfloor)\nonumber\\
		\le& l_g+4\tau L_g\beta\epsilon, 
	\end{align}
	\noindent where the second $\le$ uses Eq. \eqref{eq_inner_ggrad_diff_double_const}.
	
	\indent Therefore, 
	\begin{align}\label{Fgrad_err_derive_double_const}
	&\mathbb{E}\|\widetilde{\nabla} F\left(z_{t}\right)-\nabla F\left(z_{t}\right)\|^2\nonumber\\
	=&\mathbb{E}\left\|\widetilde{g}_{t}'^{\top} \widetilde{f}_{t}'-g'(z_t)^{\top}\nabla f[g(z_t)]\right\|^2\nonumber\\
	=& \mathbb{E}\left\|\widetilde{g}_{t}'^{\top}\left[\widetilde{f}_{t}'-\nabla f(\widetilde{g}_t)+\nabla f(\widetilde{g}_t)-\nabla f[g(z_t)]\right]+[\widetilde{g}_{t}'-g'(z_t)]^{\top}\nabla f[g(z_t)]\right\|^2\nonumber\\
	\le& 3\mathbb{E}\left\|\widetilde{g}_{t}'^{\top}\left[\widetilde{f}_{t}'-\nabla f(\widetilde{g}_t)\right]\right\|^2 +3\mathbb{E}\left\|\widetilde{g}_{t}'^{\top}\left[\nabla f(\widetilde{g}_t)-\nabla f[g(z_t)]\right]\right\|^2+3\mathbb{E}\left\|[\widetilde{g}_{t}'-g'(z_t)]^{\top}\nabla f[g(z_t)]\right\|^2\nonumber\\
	\le& 3\mathbb{E}\left[\left\|\widetilde{g}_{t}'\right\|^2
	\left(\left\|\widetilde{f}_{t}'-\nabla f(\widetilde{g}_t)\right\|^2 +\left\|\nabla f(\widetilde{g}_t)-\nabla f[g(z_t)]\right\|^2\right)\right]+3\mathbb{E}\left[\left\|\widetilde{g}_{t}'-g'(z_t)\right\|^2\left\|\nabla f[g(z_t)]\right\|^2\right]\nonumber\\
	\le& 3\left(l_g+4\tau L_g\beta\epsilon\right)^2\left(\mathbb{E}\|\widetilde{f}_{\tau\lfloor t/\tau\rfloor}'-\nabla f(\widetilde{g}_{\tau\lfloor t/\tau\rfloor})\|^2+16L_f^2 l_g^2 \beta^2 \epsilon^2 \sum_{s=\tau\lfloor t/\tau\rfloor+1}^{t} \frac{1}{|\mathcal{B}_s|}+L_f^2\mathbb{E}\|\widetilde{g}_{t}-g(z_t)\|^2\right)\nonumber\\
	+&3l_f^2\left(\mathbb{E}\|\widetilde{g}_{\tau\lfloor t/\tau\rfloor}'-g'(z_{\tau\lfloor t/\tau\rfloor})\|^2+16L_g^2\beta^2\epsilon^2 \sum_{s=\tau\lfloor t/\tau\rfloor+1}^{t} \frac{1}{|\mathcal{A}_s'|}\right)\nonumber\\
	\le& \left(6l_g^2+96\tau^2 L_g^2\beta^2\epsilon^2\right)\left(\mathbb{E}\|\widetilde{f}_{\tau\lfloor t/\tau\rfloor}'-\nabla f(\widetilde{g}_{\tau\lfloor t/\tau\rfloor})\|^2\right.\nonumber\\
	&\left.+16L_f^2 l_g^2 \beta^2 \epsilon^2 \sum_{s=\tau\lfloor t/\tau\rfloor+1}^{t} \frac{1}{|\mathcal{B}_s|}+ L_f^2\mathbb{E}\|\widetilde{g}_{\tau\lfloor t/\tau\rfloor}-g(z_{\tau\lfloor t/\tau\rfloor})\|^2+16L_f^2l_g^2\beta^2\epsilon^2 \sum_{s=\tau\lfloor t/\tau\rfloor+1}^{t} \frac{1}{|\mathcal{A}_s|}
	\right)\nonumber\\
	&+3l_f^2\left(\mathbb{E}\|\widetilde{g}_{\tau\lfloor t/\tau\rfloor}'-g'(z_{\tau\lfloor t/\tau\rfloor})\|^2+16L_g^2\beta^2\epsilon^2 \sum_{s=\tau\lfloor t/\tau\rfloor+1}^{t} \frac{1}{|\mathcal{A}_s'|}\right). 
	\end{align}
	For the problem $(\Sigma^2)$, Eq. \eqref{eq_Err_sq_DFS} still holds. Use the following hyperparameter choice which fits the item 1 of \Cref{thm_alpha_const_DE_nonconvex}. 
	\begin{align}\label{hyperpar_alpha_const_DFS_nonconvex}
	&\alpha_{t}\equiv\alpha, \beta_t\equiv\beta\equiv \frac{1}{16L_F\sqrt{10C_1+7C_2+1}}, \nonumber\\
	&\tau=\lfloor\sqrt{\max(N,n)}\rfloor\nonumber\\
	&|\mathcal{A}_t|=|\mathcal{A}_t'|=\left\{ \begin{gathered}
	n; t ~\text{\rm mod}~\tau=0  \hfill \\
	\lceil\tau/C_1\rceil; \text{\rm Otherwise} \hfill \\ 
	\end{gathered}  \right.,\nonumber\\ 
	&|\mathcal{B}_t|=\left\{ \begin{gathered}
	N; t ~\text{\rm mod}~\tau=0  \hfill \\
	\lceil\tau/C_2\rceil; \text{\rm Otherwise} \hfill  
	\end{gathered}  \right. 
	\end{align}
	\noindent where $C_1, C_2>1$ are the constant upper bounds of $\sqrt{N}/n$ and $\sqrt{n}/N$ respectively (They are constant since $N\le\mathcal{O}(n^2)$, $n\le\mathcal{O}(N^2)$ are assumed for the problem $(\Sigma^2)$). We can let $C_1, C_2>1$ such that $|\mathcal{A}_t|, |\mathcal{A}_t'|<n, |\mathcal{B}_t|<N$. Then, by substituting Eqs. \eqref{eq_Err_sq_DFS} \& \eqref{hyperpar_alpha_const_DFS_nonconvex} into Eq. \eqref{Fgrad_err_derive_double_const}, we have
	\begin{align}
		&\mathbb{E}\|\widetilde{\nabla} F\left(z_{t}\right)-\nabla F\left(z_{t}\right)\|^2\nonumber\\
		\stackrel{(i)}{\le}&\left(6l_g^2+96l_f^2l_g^2\epsilon^{-2}L_g^2\frac{\epsilon^2}{16^2L_F^2}\right)\frac{16L_f^2 l_g^2 \epsilon^2(C_1+C_2)}{16^2(10C_1+7C_2+1)L_F^2} +\frac{48l_f^2L_g^2\epsilon^2 C_1}{16^2(10C_1+7C_2+1)L_F^2}\nonumber\\ 
		\stackrel{(ii)}{\le}&7l_g^2\frac{ \epsilon^2(C_1+C_2)}{16(10C_1+7C_2+1)l_g^2} + \frac{3\epsilon^2 C_1}{16(10C_1+7C_2+1)}\nonumber\\
		\le&\epsilon^2,\nonumber
	\end{align}
	\noindent where (i) uses Eq. \eqref{eq_Err_sq_DFS}, $\beta\le1/(16L_F)$ and $\epsilon\le l_fl_g(\max(N,n))^{-1/2}\le l_fl_g\tau^{-1}\Rightarrow\tau\le l_fl_g\epsilon^{-1}$, (ii) uses $L_F=l_fL_g+l_g^2L_f\ge l_fL_g$ and $L_F\ge l_g^2L_f$. This proves Eq. \eqref{eq_approx_grad_double_const} for the problem $(\Sigma^2)$.
	
	For the problem $(\mathbb{E}^2)$, Eq. \eqref{eq_Err_sq_DE} still holds. Use the following hyperparameter choice which fits the item 2 of \Cref{thm_alpha_const_DE_nonconvex}. 
	\begin{align}\label{hyperpar_alpha_const_DE_nonconvex}
	&\alpha_{t}\equiv\alpha,  \beta_t\equiv\beta=\frac{1}{10L_F},\nonumber\\ &\beta\le\lambda_{t}\le(1+\alpha)\beta, \tau=\left\lfloor\frac{l_fl_g}{\epsilon}\right\rfloor, \nonumber\\
	&|\mathcal{A}_t|=\left\{ \begin{gathered}
	\left\lceil\frac{34L_f^2l_g^2\sigma_g^2}{\epsilon^2}\right\rceil; t ~\text{\rm mod}~\tau=0  \hfill \\
	6\tau=6\left\lfloor\frac{l_fl_g}{\epsilon}\right\rfloor; \text{\rm Otherwise} \hfill \\ 
	\end{gathered}  \right., \nonumber\\ 
	&|\mathcal{A}_t'|=\left\{ \begin{gathered}
	\left\lceil\frac{34l_f^2\sigma_{g'}^2}{\epsilon^2}\right\rceil; t ~\text{\rm mod}~\tau=0  \hfill \\
	6\tau=6\left\lfloor\frac{l_fl_g}{\epsilon}\right\rfloor; \text{\rm Otherwise} \hfill \\ 
	\end{gathered}  \right., \nonumber\\
	&|\mathcal{B}_t|=\left\{ \begin{gathered}
	\left\lceil\frac{34l_g^2\sigma_{f'}^2}{\epsilon^2}\right\rceil; t ~\text{\rm mod}~\tau=0  \hfill \\
	6\tau=6\left\lfloor\frac{l_fl_g}{\epsilon}\right\rfloor; \text{\rm Otherwise} \hfill \\ 
	\end{gathered}  \right..  
	\end{align}
	Then, by substituting Eqs. \eqref{eq_Err_sq_DE} \& \eqref{hyperpar_alpha_const_DE_nonconvex} into Eq. \eqref{Fgrad_err_derive_double_const}, we have
	\begin{align}
		&\mathbb{E}\|\widetilde{\nabla} F\left(z_{t}\right)-\nabla F\left(z_{t}\right)\|^2\nonumber\\
		\stackrel{(i)}{\le}&\left(6l_g^2+96l_f^2l_g^2\epsilon^{-2}L_g^2\frac{\epsilon^2}{100L_F^2}\right)\left[\frac{\sigma_{f'}^2}{|\mathcal{B}_{\tau\lfloor t/\tau\rfloor}|}+\frac{16L_f^2 l_g^2 \epsilon^2\left(\frac{1}{6}+\frac{1}{6}\right)}{100L_F^2}  +\frac{L_f^2\sigma_{g}^2}{|\mathcal{A}_{\tau\lfloor t/\tau\rfloor}|}\right] + \frac{3l_f^2\sigma_{g'}^2}{|\mathcal{A}_{\tau\lfloor t/\tau\rfloor}'|} +\frac{48l_f^2L_g^2\epsilon^2 }{100L_F^2}\frac{1}{6}\nonumber\\ 
		\stackrel{(ii)}{\le}& 7l_g^2 \left(\frac{\sigma_{f'}^2\epsilon^2}{34l_g^2\sigma_{f'}^2}+\frac{16\epsilon^2}{300l_g^2}  +\frac{L_f^2\sigma_{g}^2\epsilon^2}{34L_f^2l_g^2\sigma_{g}^2}\right)+ \frac{3l_f^2\sigma_{g'}^2\epsilon^2}{34l_f^2\sigma_{g'}^2}+ \frac{8\epsilon^2 }{100}\nonumber\\
		\le& \epsilon^2\nonumber
	\end{align}
	\noindent where (i) uses Eq. \eqref{eq_Err_sq_DE}, (ii) uses $L_F=l_fL_g+l_g^2L_f\ge l_fL_g$ and $L_F\ge l_g^2L_f$. This proves Eq. \eqref{eq_approx_grad_double_const} for the problem $(\mathbb{E}^2)$.
\end{proof}
\section{Proof of Theorem \ref{thm_alpha_const_DE_nonconvex}}
	\indent The proof is almost the same as that of Theorems  \ref{thm_alpha_decay_DE_nonconvex} in Appendix \ref{sec_proof_decay_thm_double}, except for the differences in hyperparameter choice. The step (i) in Eq. \eqref{eq_adjacent_Phi_diff_bound_double} still holds as its derivation does not involve the hyperparameter difference. Starting from there, we have
	\begin{align} \label{eq_adjacent_Phi_diff_bound_double_const}
	\mathbb{E}\Phi( x_{t+1})-\mathbb{E}\Phi( x_t)
	\le & \frac{1}{2L_F} \mathbb{E}\|\nabla F( x_{t})-{\widetilde{\nabla}F}(z_{t})\|^2 +\mathbb{E}\left[\left(L_F\theta_{t}^2-\frac{\theta_{t}}{\lambda_t}\right)\| \widetilde{x}_{t+1}- x_t\|^2\right]\nonumber \\
	\stackrel{(i)}{\le}& \frac{1}{L_F} \mathbb{E}\|\nabla F( x_{t})-\nabla F(z_{t})\|^2 + \frac{1}{L_F} \mathbb{E}\|\nabla F( z_{t})-{\widetilde{\nabla}F}(z_{t})\|^2 \nonumber \\
	&+\mathbb{E}\left[\left(L_F\theta_t^2-5L_F\theta_t\right)\| \widetilde{x}_{t+1}- x_t\|^2\right]\nonumber \\
	\stackrel{(ii)}{\le}& L_F\mathbb{E}\|z_t-x_t\|^2+\frac{\epsilon^2}{L_F}+\mathbb{E}\left[\max\left(\frac{1}{4}L_F-\frac{5}{2}L_F,\right.\right.\nonumber \\
	&\left.\left.\frac{L_F\epsilon^2\lambda_{t}^2}{\|\widetilde{x}_{t+1}-x_t\|^2}-5L_F\frac{\epsilon\lambda_{t}}{\|\widetilde{x}_{t+1}-x_t\|}\right) \| \widetilde{x}_{t+1}- x_t\|^2\right]\nonumber \\
	\stackrel{(iii)}{\le}& \frac{8L_F}{\alpha}\beta^2\epsilon^2+\frac{\epsilon^2}{L_F}+L_F\mathbb{E}\left[\max\left(-\frac{9}{4}\| \widetilde{x}_{t+1}- x_t\|^2, \epsilon^2\lambda_{t}^2-5\epsilon\lambda_{t}\|\widetilde{x}_{t+1}-x_t\|\right)\right]\nonumber \\
	\stackrel{(iv)}{\le}& \frac{2\epsilon^2}{25\alpha L_F}+\frac{\epsilon^2}{L_F}+L_F\mathbb{E}\left[\frac{25}{9}\epsilon^2\lambda_{t}^2-5\epsilon\lambda_{t}\|\widetilde{x}_{t+1}-x_t\|\right]\nonumber \\
	\stackrel{(v)}{\le}&\frac{1+2/(25\alpha)}{L_F}\epsilon^2+L_F\frac{25\epsilon^2}{9}\frac{1}{25L_F^2}-5L_F\beta\epsilon\mathbb{E}\|\widetilde{x}_{t+1}-x_t\|\nonumber \\
	=&\frac{10/9+2/(25\alpha)}{L_F}\epsilon^2-5 L_F\beta\epsilon\mathbb{E}\|\widetilde{x}_{t+1}-x_t\|,
	\end{align}
	\noindent where (i) uses the inequality $\|a+b\|^2\le 2\|a\|^2+2\|b\|^2$ and the inequality $\lambda_t\le 2\beta\le \frac{1}{5L_F}$ that holds for both hyperparameter choices \eqref{hyperpar_alpha_const_DFS_nonconvex}\&\eqref{hyperpar_alpha_const_DE_nonconvex}, (ii) uses Lemma \ref{lemma_approx_grad_double_const} and  $\theta_t=\min\left(\frac{\epsilon\lambda_{t}}{\|\widetilde{x}_{t+1}-x_t\|}, \frac{1}{2}\right)$ from \Cref{alg_double}, (iii) uses Eq. \eqref{eq_xzdiff_bound_double_const}, (iv) uses $\beta\le \frac{1}{10L_F}$, and the following two inequalities where $U=\|\widetilde{x}_{t+1}-x_t\|$, $V=\epsilon\lambda_t$ $\footnote{This technique to remove max is inspired by \cite{zhang2019multi}.}$, (v) uses $\beta\le\lambda_t\le \frac{1}{5L_F}$ that holds for both hyperparameter choices \eqref{hyperpar_alpha_const_DFS_nonconvex}\&\eqref{hyperpar_alpha_const_DE_nonconvex}.  \\
	$$\frac{25}{9}V^2-5UV+\frac{9}{4}U^2=\left(\frac{3}{2}U-\frac{5}{3}V\right)^2\ge 0 \Rightarrow -\frac{9}{4}U^2\le \frac{25}{9}V^2-5UV,$$ 
	$$V^2-5UV\le \frac{25}{9}V^2-5UV.$$
	
	\indent By telescoping Eq. \eqref{eq_adjacent_Phi_diff_bound_double_const}, we obtain
	\begin{align}\label{eq_adjacent_Phi_bound_double_const}
		&\mathbb{E}\Phi(x_{T})-\mathbb{E}\Phi(x_0)\le \frac{10/9+2/(25\alpha)}{L_F}\epsilon^2 T-5 L_F\beta\epsilon\sum_{t=0}^{T-1}\mathbb{E}\|\widetilde{x}_{t+1}-x_t\|\nonumber\\
		\Rightarrow& \frac{1}{T}\sum_{t=0}^{T-1}\mathbb{E}\|\widetilde{x}_{t+1}-x_t\| \le \frac{1}{5L_F\beta}\left[\frac{10/9+2/(25\alpha)}{L_F}\epsilon +\frac{1}{\epsilon T}\mathbb{E}[\Phi(x_0)-\Phi(x_T)]\right],
	\end{align}
	
	The Step (iv) of Eq. \eqref{eq_Egrad_bound_double} still holds as its derivation does not involve the hyperparamter choice. Starting from there, we obtain
	\begin{align}\label{eq_Egrad_bound_double_const}
	\mathbb{E}_{\xi}\|{\mathcal{G}}_{\lambda_\xi}( z_\xi)\| {\le}&\frac{2}{T}\sum_{t=0}^{T-1} \lambda_t^{-1}\mathbb{E}\| z_t- x_t\|+\frac{1}{T}\sum_{t=0}^{T-1} \lambda_t^{-1}\mathbb{E}\| \widetilde{x}_{t+1}- x_t\|+\frac{1}{T}\sum_{t=0}^{T-1} \mathbb{E}\left\|\nabla F(z_t)-{\widetilde{\nabla}F}( z_{t})]\right\|\nonumber \\
	\stackrel{(i)}{\le}& \frac{2}{T} T\beta^{-1}(\sqrt{8/\alpha}\beta\epsilon) + \beta^{-1}\frac{1}{T}\sum_{t=0}^{T-1} \mathbb{E}\| \widetilde{x}_{t+1}- x_t\| + \epsilon \nonumber \\
	\stackrel{(ii)}{\le}& (4\sqrt{2/\alpha}+1)\epsilon+\frac{1}{5L_F\beta^2}\left[\frac{10/9+2/(25\alpha)}{L_F}\epsilon +\frac{1}{\epsilon T}\mathbb{E}[\Phi(x_0)-\Phi(x_T)]\right],
	\end{align}
	\noindent where (i) uses Eq. \eqref{eq_xzdiff_bound_double_const}, Lemma \ref{lemma_approx_grad_double_const}, and the inequality $\lambda_t\ge\beta$ that holds for both hyperparameter choices \eqref{hyperpar_alpha_const_DFS_nonconvex}\&\eqref{hyperpar_alpha_const_DE_nonconvex}, and (ii) used Eq. \eqref{eq_adjacent_Phi_bound_double_const}. Hence, Eqs. \eqref{eq_conclude_alpha_const_DFS} \& \eqref{eq_conclude_alpha_const_DE} always hold.
	
	\indent For the problem $(\Sigma^2)$, by using $T=\mathcal{O}(\epsilon^{-2})$ and the hyperparameter choice \eqref{hyperpar_alpha_const_DFS_nonconvex} which fit the item 1 of \Cref{thm_alpha_const_DE_nonconvex}, the sample complexity is $\mathcal{O}[ \sum_{t=0}^{T-1} (|\mathcal{A}_t|+|\mathcal{A}_t'|+|\mathcal{B}_t|)]$ where
	\begin{align}
	\sum_{t=0}^{T-1} (|\mathcal{A}_t|+|\mathcal{A}_t'|+|\mathcal{B}_t|)\le& \left(\left\lfloor \frac{T}{\tau} \right\rfloor+1\right)(N+2n)+\left(T-\left\lfloor \frac{T}{\tau} \right\rfloor-1\right) \mathcal{O}\left[\left(\frac{2}{C_1}+\frac{1}{C_2}\right)\tau\right]\nonumber\\
	=&\mathcal{O}(\sqrt{\max(N,n)}\epsilon^{-2}+N+n),\nonumber
	\end{align}
	\noindent Here we uses $N+2n=\mathcal{O}(\max(N,n))$. 
	
	\indent For the problem $(\mathbb{E}^2)$, by using $T=\mathcal{O}(\epsilon^{-2})$ and the hyperparameter choice \eqref{hyperpar_alpha_const_DE_nonconvex} which fit the item 2 of \Cref{thm_alpha_const_DE_nonconvex}, the sample complexity is $\mathcal{O}[ \sum_{t=0}^{T-1} (|\mathcal{A}_t|+|\mathcal{A}_t'|+|\mathcal{B}_t|)]$ where
	\begin{align}
	\sum_{t=0}^{T-1} (|\mathcal{A}_t|+|\mathcal{A}_t'|+|\mathcal{B}_t|)=\left(\left\lfloor \frac{T}{\tau} \right\rfloor+1\right) \mathcal{O}(\epsilon^{-2})+\left(T-\left\lfloor \frac{T}{\tau} \right\rfloor-1\right) \mathcal{O}(\epsilon^{-1})=\mathcal{O}(\epsilon^{-3}).\nonumber
	\end{align}

\end{document}